\definecolor{darkgreen}{rgb}{0,0.5,0}
\definecolor{darkred}{rgb}{0.7,0,0}
\definecolor{teal}{rgb}{0.3,0.8,0.8}
\definecolor{blue}{rgb}{0,0,1}
\newcommand{\kibitz}[2]{\ifnum\Comments=1\textcolor{#1}{#2}\fi}
\newenvironment{proof-of-theorem}[1][{}]{\noindent{\bf Proof of
    Theorem~{#1}} \hspace*{1em}}{\BlackBox\smallskip\\}
\newenvironment{carlist}
 {\begin{list}{$\bullet$}
 {\setlength{\topsep}{0in} \setlength{\partopsep}{0in}
  \setlength{\parsep}{0in} \setlength{\itemsep}{\parskip}
  \setlength{\leftmargin}{0.07in} \setlength{\rightmargin}{0.08in}
  \setlength{\listparindent}{0.0in} \setlength{\labelwidth}{0.08in}
  \setlength{\labelsep}{0.1in} \setlength{\itemindent}{0.05in}}}
 {\end{list}}
\newcommand{\bcar}{\begin{carlist}}
\newcommand{\ecar}{\end{carlist}}
\newcounter{qcounter}
\newenvironment{enums}
 {\begin{list}{\arabic{qcounter}.}
 {\usecounter{qcounter} \setlength{\topsep}{0in} \setlength{\partopsep}{0in}
  \setlength{\parsep}{0in} \setlength{\itemsep}{\parskip}
  \setlength{\leftmargin}{0.21in} \setlength{\rightmargin}{0in}
  \setlength{\listparindent}{0.0in} \setlength{\labelwidth}{0.07in}
  \setlength{\labelsep}{0.1in} \setlength{\itemindent}{-0.04in}}}
 {\end{list}}
\newcommand{\benums}{\begin{enums}}
\newcommand{\eenums}{\end{enums}}
\newcommand{\figsqueeze}{}
\newcommand{\asqueeze}{\hspace{-0.5pt}}
\newcommand{\unif}{\nu}
\newcommand{\policy}{\pi}                                    
\newcommand{\loggingpolicy}{\mu}                             
\newcommand{\targetpolicy}{\pi}                              
\newcommand{\vs}{\mathbf{s}}                                 
\newcommand{\s}{s}                                           
\newcommand{\slatesize}{\ell}                                
\newcommand{\action}{a}                                      
\newcommand{\poolsize}{m}                                    
\newcommand{\assumes}{Assumption~\ref{ass:ada}\xspace}                                   
\newcommand{\hV}{\hat{V}}
\newcommand{\hVdm}{\hV_\textup{DM}}
\newcommand{\hVips}{\hV_\textup{IPS}}
\newcommand{\hVwips}{\hV_\textup{wIPS}}
\newcommand{\hVpi}{\hV_\textup{PI}}
\newcommand{\hVwpi}{\hV_\textup{wPI}}
\newcommand{\hpi}{\hat{\pi}}
\newcommand{\hr}{\hat{r}}
\newcommand{\approach}{pseudoinverse estimator\xspace}       
\newcommand{\decompvec}{\bm{\phi}}                           
\newcommand{\vphix}{\bm{\phi}_x}
\newcommand{\vpr}{\bm{\theta}}
\newcommand{\prvec}{\vpr}                                    
\newcommand{\vprx}{\vpr_{\mu,x}}
\newcommand{\prvecx}{\vpr_{\mu,x}}                                 
\newcommand{\pr}{\theta}
\newcommand{\hpri}{\hat{\theta}_i}
\newcommand{\hvpr}{\boldsymbol{\hat\theta}}
\newcommand{\hvpri}{\hvpr_i}
\newcommand{\mgamma}{\Gammab}
\newcommand{\mgammaparam}[2]{\Gammab_{\!#1,#2}}
\newcommand{\mgammalogger}{\mgammaparam{\loggingpolicy}{x}}
\newcommand{\q}[2]{\mathbf{q}_{#1,#2}}
\newcommand{\qtarget}{\q{\targetpolicy}{x}}
\newcommand{\vq}{\mathbf{q}}
\newcommand{\mlambda}{\mathbf{\Lambda}}
\newcommand{\vbeta}{\boldsymbol{\beta}}
\newcommand{\rel}{\textit{rel}}
\newcommand{\rank}{\textit{rank}}
\newcommand{\partBody}{\textup{body}}
\newcommand{\partTitle}{\textup{title}}
\newcommand{\treeTitle}{\textit{tree}_{\partTitle}}
\newcommand{\lassoTitle}{\textit{lasso}_{\partTitle}}
\newcommand{\treeBody}{\textit{tree}_{\partBody}}
\newcommand{\lassoBody}{\textit{lasso}_{\partBody}}
\newcommand{\eps}{\varepsilon}
\newcommand{\slateind}{\one_\vs}                             
\DeclareMathOperator{\supp}{supp}
\DeclareMathOperator{\diag}{diag}
\DeclareMathOperator{\Null}{Null}
\DeclareMathOperator{\Range}{Range}
\newcommand{\mP}{\mathbf{P}}
\newcommand{\mQ}{\mathbf{Q}}
\newcommand{\mM}{\mathbf{M}}
\newcommand{\mPi}{\mathbf{\Pi}}
\newcommand{\mI}{\mathbf{I}}
\newcommand{\mA}{\mathbf{A}}
\newcommand{\mB}{\mathbf{B}}
\newcommand{\mV}{\mathbf{V}}
\newcommand{\mU}{\mathbf{U}}
\newcommand{\vu}{\mathbf{u}}
\newcommand{\vw}{\mathbf{w}}
\newcommand{\vn}{\mathbf{n}}
\newcommand{\vv}{\mathbf{v}}
\newcommand{\vz}{\mathbf{z}}
\newcommand{\vy}{\mathbf{y}}
\newcommand{\vf}{\mathbf{f}}
\newcommand{\vfBody}{\vf_\partBody}
\newcommand{\vfTitle}{\vf_\partTitle}
\newcommand{\slots}{{\!\mathcal{J}}}
\newcommand{\acts}{{\!\mathcal{A}}}
\newcommand{\mIslots}{\mI_\slots^{\vphantom{T}}}
\newcommand{\oneslots}{\one_\slots^{\vphantom{T}}}
\newcommand{\oneslotsT}{\one_\slots^T}
\newcommand{\mIacts}{\mI_\acts^{\vphantom{T}}}
\newcommand{\oneacts}{\one_\acts^{\vphantom{T}}}
\newcommand{\oneactsT}{\one_\acts^T}
\newcommand{\vphi}{\bm{\phi}}
\newcommand{\brho}{\bar{\rho}}
\newcommand{\card}[1]{\lvert#1\rvert}
\newcommand{\set}[1]{\{#1\}}
\newcommand{\bigSet}[1]{\bigl\{#1\bigr\}}
\newcommand{\braces}[1]{\{#1\}}
\newcommand{\bracks}[1]{[#1]}
\newcommand{\bigBracks}[1]{\bigl[#1\bigr]}
\newcommand{\BigBracks}[1]{\Bigl[#1\Bigr]}
\newcommand{\Bracks}[1]{\left[#1\right]}
\newcommand{\Parens}[1]{\left(#1\right)}
\newcommand{\bigParens}[1]{\bigl(#1\bigr)}
\newcommand{\given}{\mathbin{\vert}}
\newcommand{\norm}[1]{\lVert#1\rVert}
\newcommand{\abs}[1]{\lvert#1\rvert}
\newcommand{\Abs}[1]{\left\lvert#1\right\rvert}
\newcommand{\Sec}[1]{Sec.~\ref{sec:#1}}
\newcommand{\Thm}[1]{Theorem~\ref{thm:#1}}
\newcommand{\Lem}[1]{Lemma~\ref{lem:#1}}
\newcommand{\Eq}[1]{Eq.~\eqref{eq:#1}}
\newcommand{\Eqs}[2]{Eqs.~\eqref{eq:#1} and~\eqref{eq:#2}}
\newcommand{\App}[1]{Appendix~\ref{app:#1}}
\newcommand{\Ex}[1]{Example~\ref{ex:#1}}
\newcommand{\Claim}[1]{Claim~\ref{claim:#1}}
\newcommand{\Fig}[1]{Fig.~\ref{fig:#1}}
\newcommand{\Prop}[1]{Prop.~\ref{prop:#1}}
\newcommand{\Direct}{DM\xspace}
\newcommand{\OnPolicy}{\textsc{OnPolicy}\xspace}
\newcommand{\UtilityRate}{\textsc{UtilityRate}\xspace}
\title{Off-policy evaluation for slate recommendation}
\author{%
Adith Swaminathan\\
Microsoft Research, Redmond\\
\texttt{adswamin@microsoft.com}
\And
Akshay Krishnamurthy\\
University of Massachusetts, Amherst\\
\texttt{akshay@cs.umass.edu}%
\And
Alekh Agarwal\\
\!\!\!Microsoft Research, New\,York\asqueeze\\
\texttt{alekha@microsoft.com}%
\And
Miroslav Dud\'{i}k\\
\asqueeze Microsoft Research, New\,York\asqueeze\\
\texttt{mdudik@microsoft.com}%
\And
John Langford\\
\asqueeze Microsoft Research, New\,York\!\!\!\\
\texttt{jcl@microsoft.com}%
\And
Damien Jose\\
Microsoft, Redmond\\
\texttt{dajose@microsoft.com}%
\And
Imed Zitouni\\
Microsoft, Redmond\\
\texttt{izitouni@microsoft.com}%
}
\begin{document}

\maketitle

\begin{abstract}
This paper studies the evaluation of policies that recommend an
ordered set of items (e.g., a ranking) based on some context---a
common scenario in web search, ads, and recommendation. We build on
techniques from combinatorial bandits to introduce a new practical
estimator that uses logged data to estimate a policy's performance.  A thorough empirical evaluation on real-world data reveals
that our estimator is accurate in a variety of settings, including
as a subroutine in a learning-to-rank task, where it achieves
competitive performance.
We derive conditions under which our estimator is unbiased---these conditions
are weaker than prior heuristics for slate evaluation---and
experimentally demonstrate a smaller bias than parametric approaches,
even when these conditions are violated. Finally, our theory and
experiments also show exponential savings in the amount of required
data compared with general unbiased estimators.
\end{abstract}

\section{Introduction}
\label{sec:intro}

In recommendation systems for e-commerce, search, or news, we would
like to use the data collected during operation to test new
content-serving algorithms (called \emph{policies}) along metrics such
as revenue and number of
clicks~\cite{Bottou13Counterfactual,li2010contextual}.  This task is
called \emph{off-policy evaluation}. General approaches, namely
\emph{inverse propensity scores}
(IPS)~\cite{dudik2014doubly,HorvitzTh52}, require unrealistically
large amounts of logged data to evaluate whole-page metrics that
depend on multiple recommended items, which happens when showing ranked
lists.  The key challenge is that the number of possible lists (called
\emph{slates}) is combinatorially large. As a result, the policy being
evaluated is likely to choose different slates from those recorded in
the logs most of the time, unless it is very similar to the
data-collection policy. This challenge is fundamental~\citep{switch},
so any off-policy evaluation method that works with large slates needs
to make some structural assumptions about the whole-page metric or the
user behavior.

Previous work on off-policy
evaluation and whole-page optimization improves the probability of match
between logging and evaluation by restricting attention to small
slate spaces~\citep{Wang2016,Li11}, introducing assumptions that allow for
partial matches between the proposed and observed
slates~\citep{LihongTopIPS}, or assuming that the policies used for
logging and evaluation are
similar~\cite{Bottou13Counterfactual,swaminathan15}. Another line of work
constructs parametric models of slate
quality~\cite{chapelle2009dynamic, Guo2009, Dupret2008} (see also
Sec. 4.3 of~\cite{hofmann2016online}). While these approaches require
less data, they can have large bias, and their use in practice
requires an expensive trial-and-error cycle involving weeks-long A/B
tests to develop new policies~\cite{kohavi2009controlled}. In this
paper we design a method more robust to problems with bias and with only modest data
requirements, with the goal of substantially shortening this cycle and
accelerating the policy development process.

We frame the slate recommendation problem as a combinatorial
generalization of \emph{contextual
  bandits}~\citep{EXP4,langford2008epoch,dudik2014doubly}. In
combinatorial contextual bandits, for each \emph{context}, a policy
selects a \emph{slate} consisting of component \emph{actions}, after
which a \emph{reward} for the entire slate is observed. In web search,
the context is the search query augmented with a user profile, the
slate is the search results page consisting of a list of retrieved
documents (actions), and example reward metrics are page-level
measures such as time-to-success, NDCG (position-weighted relevance),
or other measures of user satisfaction. As input we receive contextual
bandit data obtained by some \emph{logging policy}, and our goal
is to estimate the reward of a new \emph{target policy}. This
off-policy setup differs from online learning in contextual bandits,
where the goal is to
adaptively maximize the reward in the presence of an explore-exploit trade-off~\cite{bubeck2012regret}.

Inspired by work in \emph{combinatorial} and \emph{linear
  bandits}~\citep{cesa2012combinatorial,rusmevichientong2010linearly,DaniHaKa08},
we propose an estimator that makes only a weak assumption about the
evaluated metric, while exponentially reducing the data requirements
in comparison with IPS. Specifically, we posit a \emph{linearity
  assumption}, stating that the slate-level reward (e.g., time to
success in web search) decomposes additively across actions, but the
action-level rewards are not observed. Crucially, the action-level
rewards are allowed to depend on the context, and we do not require
that they be easily modeled from the features describing the context.
In fact, our method is completely agnostic to the representation of
contexts.

\begin{figure*}
\begin{center}
\includegraphics[width=0.8\textwidth]{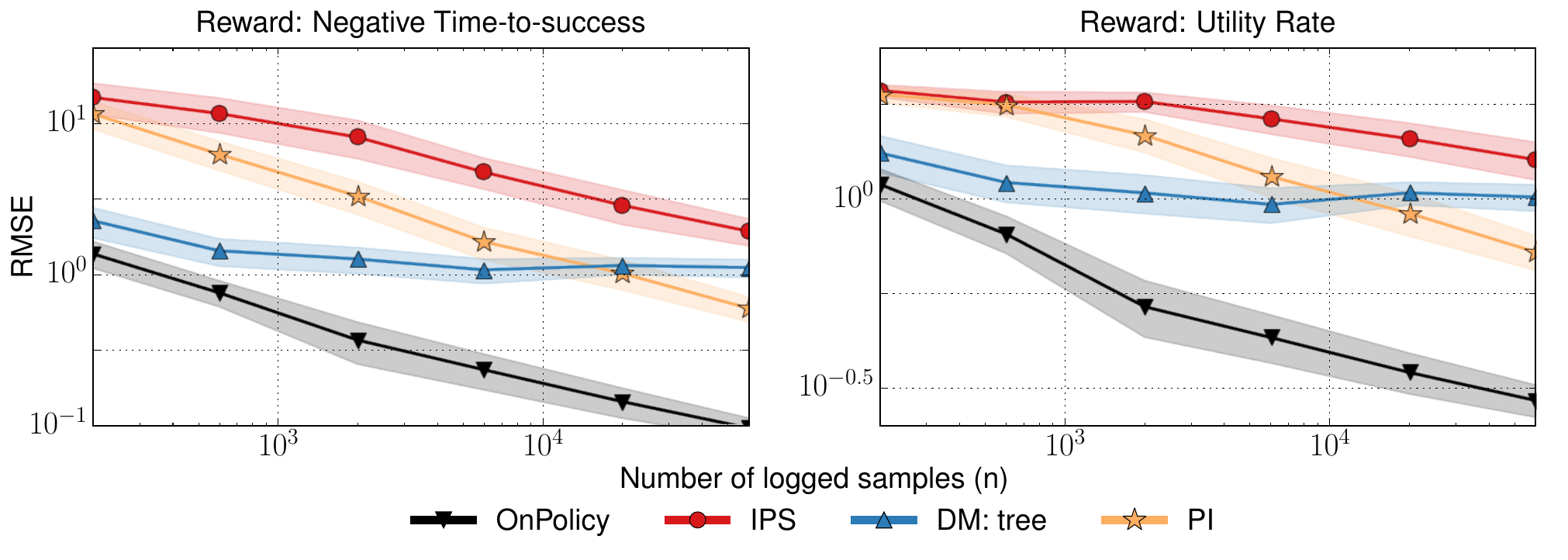}%
\end{center}
\figsqueeze%
\caption{Off-policy evaluation of two whole-page user-satisfaction
  metrics on proprietary search engine data. Average RMSE of different
  estimators over 50 runs on a log-log scale.  Our method (PI)
  achieves the best performance with moderate data sizes. The unbiased
  IPS method suffers high variance, and direct modeling (\Direct) of
  the metrics suffers high bias.  \OnPolicy is the expensive choice of
  deploying the policy, for instance, in an A/B test.}
\label{fig:bing_experiment:optimized}
\figsqueeze%
\end{figure*}

We make the following contributions:

\benums
\item The \emph{\approach} (PI) for off-policy evaluation: a
  general-purpose estimator from the combinatorial bandit literature,
  adapted for off-policy evaluation.
  When ranking $\ell$ out of $m$ items under the linearity assumption,
  PI typically requires $\mathcal{O}(\ell m/\eps^2)$ samples to achieve error at most $\eps$---an exponential gain over the $m^{\Omega(\ell)}$
  sample complexity of IPS. We provide distribution-dependent bounds based on
  the overlap between logging and target policies.

\item Experiments on real-world search ranking datasets: The strong
  performance of the PI estimator provides, to our knowledge, the
  first demonstration of high-quality off-policy evaluation of
  whole-page metrics,
  comprehensively outperforming prior baselines (see \Fig{bing_experiment:optimized}).

\item Off-policy optimization: We provide a simple procedure for
  learning to rank (L2R) using the PI estimator to impute action-level
  rewards for each context. This allows direct optimization of
  whole-page metrics via pointwise L2R approaches, \emph{without requiring
  pointwise feedback}.
\eenums

\paragraph{Related work} Large state spaces have typically been
studied in the online, or on-policy, setting. Some works assume
specific parametric (e.g., linear) models relating the metrics to the
features describing a
slate~\cite{Auer02,rusmevichientong2010linearly,filippi2010parametric,chu2011contextual,qin2014contextual};
this can lead to bias if the model is inaccurate (e.g., we might not
have access to sufficiently predictive features). Others posit the
same linearity assumption as we do, but further assume a \emph{semi-bandit}
feedback model where the rewards of all actions on the slate are revealed
\cite{kale2010non,kveton2015tight,krishnamurthy2015efficient}.
While much of the research focuses on on-policy setting,
the off-policy paradigm studied in this paper is often preferred in practice since it might
not be possible to implement low-latency updates needed for online
learning, or we might be interested in many different metrics and
require a manual review of their trade-offs before deploying new
policies.

At a technical level, the PI estimator has been used in online
learning~\citep{cesa2012combinatorial,rusmevichientong2010linearly,DaniHaKa08},
but the analysis there is tailored to the specific data collection
policies used by the learner.  In contrast, we provide
distribution-dependent bounds without any assumptions on the logging
or target policy.

\section{Setting and notation}
\label{sec:setting}

In combinatorial contextual bandits, a decision maker
repeatedly interacts as follows:
\benums
\item the decision maker observes a \emph{context} $x$ drawn from a
  distribution $D(x)$ over some space $X$;

\item based on the context, the decision maker chooses a \emph{slate}
  $\vs=(\s_1,\dotsc,\s_\ell)$ consisting of \emph{actions} $\s_j$,
  where a position $j$ is called a
  \emph{slot}, the number of slots is $\ell$, actions at position $j$
  come from some space $A_j(x)$, and the slate $\vs$ is chosen from a set
  of allowed slates $S(x)\subseteq A_1(x)\times \dotsm
  \times A_\ell(x)$;

\item given the context and slate, a reward
  $r\in[-1,1]$ is drawn from a distribution $D(r\given x,\vs)$;
  rewards in different rounds
  are independent, conditioned on contexts and slates.
  \eenums
The context space $X$ can be infinite, but the set of actions is
finite. We assume $\card{A_j(x)}=m_j$ for all contexts $x \in X$ and
define $m\coloneqq\max_j m_j$ as the maximum number of actions per
slot. The goal of the decision maker is to \emph{maximize the
  reward}. The decision maker is modeled as a \emph{stochastic policy}
$\pi$ that specifies a conditional distribution $\pi(\vs\given x)$ (a
deterministic policy is a special case).  The \emph{value} of a policy
$\pi$, denoted $V(\pi)$, is defined as the expected reward when
following $\pi$:
\begin{equation}
\label{eq:policy_value}
\textstyle
   V(\pi)\coloneqq
  \EE_{x\sim D}\EE_{\vs\sim\pi(\cdot\given x)}\EE_{r\sim D(\cdot\given x,\vs)}
  \bigBracks{r}
\enspace.
\end{equation}

To simplify derivations, we extend the conditional distribution $\pi$
into a distribution over triples $(x,\vs,r)$ as $\pi(x,\vs,r)\coloneqq
D(r\given x,\vs)\pi(\vs\given x)D(x)$.  With this shorthand, we have
$V(\pi) = \EE_\pi[r]$.

To finish this section, we introduce notation for the expected reward
for a given context and slate, which we call the \emph{slate value},
and denote as:
\begin{equation}
\label{eq:slate_value}
\textstyle
   V(x,\vs)\coloneqq\EE_{r\sim D(\cdot\given x,\vs)}[r]
\enspace.
\end{equation}

\begin{example}[Cartesian product]
\label{ex:cartesian}
Consider the optimization of a news portal where the reward is the
whole-page advertising revenue. Context $x$ is the user profile, slate
is the news-portal page with slots corresponding to news
sections,\footnote{For simplicity, we do not discuss the more general
  setting of showing multiple articles in each news section.}  and
actions are the articles.
The set of valid slates
is the Cartesian product $S(x)=\prod_{j\le\ell} A_j(x)$. The number of
valid slates is exponential in $\ell$:
$\card{S(x)}=\prod_{j\le\ell} m_j$.
\end{example}

\begin{example}[Ranking]
\label{ex:ranking}
Consider web search and ranking. Context $x$ is the query along with
user profile. Actions correspond to search items (such as
webpages). The policy chooses $\ell$ of $m$ items, where the set
$A(x)$ of $m$ items for a context $x$ is chosen from a corpus by a
filtering step (e.g., a database query). We have $A_j(x) = A(x)$ for
all $j \le \ell$, but the allowed slates $S(x)$ have no
repetitions. The number of valid slates is exponential in $\ell$:
$\card{S(x)}=m!/(m-\ell)!=m^{\Omega(\ell)}$. A reward could be the
\emph{negative time-to-success}, i.e., negative of the time taken by
the user to find a relevant item.
\end{example}

\subsection{Off-policy evaluation and optimization}

In the \emph{off-policy} setting, we have access to the \emph{logged
  data} $(x_1,\vs_1,r_1),\dotsc,(x_n,\vs_n,r_n)$ collected using a
past policy $\mu$, called the \emph{logging policy}.
\emph{Off-policy evaluation} is the task of estimating the value of a
new policy $\pi$, called the \emph{target policy}, using the logged
data.  \emph{Off-policy optimization} is the harder task of finding a
policy $\hpi$ that achieves maximal reward. 

There are two standard approaches for off-policy evaluation. The
\emph{direct method} (DM) uses the logged data to train a (parametric)
model $\hr(x,\vs)$ for predicting the expected reward for a given context
and slate. $V(\pi)$ is then estimated as
\begin{equation}
\label{eq:dm}
\textstyle
   \hVdm(\pi)=\frac{1}{n}\sum_{i=1}^n\sum_{\vs\in S(x)} \hr(x_i,\vs)\pi(\vs\given x_i)
\enspace.
\end{equation}
The direct method is often biased due to mismatch between model assumptions and ground truth.

The second approach, which is provably unbiased (under modest
assumptions), is the \emph{inverse propensity score} (IPS)
estimator~\cite{HorvitzTh52}. The IPS estimator re-weights the logged
data according to ratios of slate probabilities under the target and
logging policy. It has two common variants:
\begin{equation}
\label{eq:ips}
\textstyle
  \hVips(\pi)=\frac{1}{n}\sum_{i=1}^n r_i\cdot\frac{\pi(\vs_i\given x_i)}{\mu(\vs_i\given x_i)}
\enspace,
\quad
  \hVwips(\pi)=
  \sum_{i=1}^n r_i\cdot\frac{\pi(\vs_i\given x_i)}{\mu(\vs_i\given x_i)}
  \bigm/
  \bigParens{\sum_{i=1}^n
                \frac{\pi(\vs_i\given x_i)}{\mu(\vs_i\given x_i)}}
\enspace.
\end{equation}
wIPS generally has better variance with an asymptotically zero bias.
The variance of both estimators grows linearly with
$\tfrac{\pi(\vs\given x)}{\mu(\vs\given x)}$, which can be
$\Omega(\card{S(x)})$. This is prohibitive when
$\card{S(x)}=m^{\Omega(\ell)}$.

\section{Our approach}
\label{sec:LADA}

The IPS estimator is minimax optimal~\citep{switch},
so its exponential variance
is unavoidable in the worst case. We circumvent this
hardness by positing an assumption on the structure of rewards. Specifically,
we assume that the slate-level reward is a sum of
unobserved action-level rewards that depend on the context, the action, and the
position on the slate, but not on the other actions on the
slate.

Formally, we consider \emph{slate indicator vectors} in
$\RR^{\ell m}$ whose components are indexed by pairs $(j,a)$ of slots
and possible actions in them. A slate is described by an
indicator vector $\one_\vs\in\RR^{\ell m}$ whose entry at position
$(j,a)$ is equal to 1 if the slate $\vs$ has action $a$ in slot
$j$, i.e., if $s_j=a$. The above assumption is formalized as follows:

\begin{assumption}[Linearity Assumption]
\label{ass:ada}
For each context $x\in X$ there exists an (unknown) \emph{intrinsic
  reward} vector $\vphi_x\in\RR^{\ell m}$ such that the slate value
satisfies
$V(x,\vs)
= \one_\vs^T\vphi_x=\sum_{j=1}^\ell
\phi_x(j,\s_j)$.
\end{assumption}

The slate indicator vector can be viewed as a feature vector,
representing the slate, and $\vphi_x$ can be viewed as a
\emph{context-specific} weight vector. The assumption refers to the
fact that the value of a slate is a linear function of its feature
representation. However, note that this linear dependence is allowed
to be completely different across contexts, because we make no
assumptions on how $\vphi_x$ depends on $x$, and in fact our method
does not even attempt to accurately estimate $\vphi_x$. Being agnostic
to the form of $\vphi_x$ is the key departure from the direct method
and parametric bandits.

While Assumption~\ref{ass:ada} rules out interactions among different
actions on a slate,\footnote{We discuss limitations of \assumes and
  directions to overcome them in \Sec{discussion}.}  its ability to
vary intrinsic rewards arbitrarily across contexts captures many
common metrics in information retrieval, such as the \emph{normalized
  discounted cumulative gain} (NDCG)~\cite{ndcg2005}, a common metric
in web ranking:

\begin{example}[NDCG]
  For a slate $\vs$, we first define
  $
  \textstyle
  \textup{DCG}(x,\vs) \coloneqq
  \sum_{j=1}^\ell \frac{2^{\rel(x,s_j)} - 1}{\log_2(j+1)}$
  where $\rel(x,a)$ is the relevance of document $a$ on query
  $x$. Then $\textup{NDCG}(x,\vs)\coloneqq\textup{DCG}(x,\vs) /
  \textup{DCG}^\star(x)$ where $\textup{DCG}^\star(x)=\max_{\vs\in
    S(x)}\textup{DCG}(x,\vs)$, so NDCG takes values in $[0,1]$. Thus,
  NDCG satisfies \assumes with $\phi_x(j,a) = \Parens{2^{\rel(x,a)} -
    1}\bigm/\log_2(j+1)\textup{DCG}^\star(x)$.
  \label{ex:ndcg}
\end{example}

In addition to \assumes, we also make the standard assumption that the
logging policy puts non-zero probability on all slates that can be
potentially chosen by the target policy. This assumption is also
required for 
IPS, otherwise unbiased off-policy evaluation
is impossible~\citep{LangfordStWo08}.
\begin{assumption}[Absolute Continuity]
\label{ass:ABS}
The off-policy evaluation problem satisfies the \emph{absolute continuity}
assumption if $\mu(\vs\given x)>0$ whenever
$\pi(\vs\given x)>0$ with probability one over $x\sim D$.
\end{assumption}

\subsection{The \approach}
\label{sec:estimator}

Using Assumption~\ref{ass:ada}, we can now apply the techniques
from the combinatorial bandit literature to our problem. In
particular, our estimator closely follows the recipe
of~\citet{cesa2012combinatorial}, albeit with some differences to
account for the off-policy and contextual nature of our setup. Under
Assumption~\ref{ass:ada}, we can view the recovery of $\vphix$ for a
given context $x$ as a linear regression problem.  The covariates
$\one_\vs$ are drawn according to $\mu(\cdot\given x)$, and the reward
follows a linear model, conditional on $\vs$ and $x$, with $\vphix$ as
the ``weight vector''. Thus, we can write the MSE of an estimate
$\vw$ as $\EE_{\vs \sim \mu(\cdot\given x)} \EE_{r\sim D(\cdot\given
  \vs,x)} \bracks{(\one_\vs^T\vw - r)^2} $, or more compactly as
$\EE_{\mu}\bracks{(\one_\vs^T\vw - r)^2\given x}$, using our
definition of $\mu$ as a distribution over triples $(x,\vs,r)$.  We
estimate $\vphix$ by the MSE minimizer with the smallest norm,
which can be written in closed form as
\begin{equation}
\label{eq:phi-ls}
\bar{\vphi}_x
= \Parens{\EE_\mu[\one_\vs\one_\vs^T\given x]}^{\!\dagger}
\EE_\mu[r\one_\vs\given x]
\enspace,
\end{equation}
where $\mM^\dagger$ is the Moore-Penrose pseudoinverse of a matrix $\mM$.
Note that this idealized ``estimator'' $\bar{\vphi}_x$ uses
conditional expectations over $\vs \sim \mu(\cdot\given x)$ and $r\sim
D(\cdot\given\vs,x)$. To simplify the notation, we write
$\mgammalogger\coloneqq\EE_\mu[\one_\vs\one_\vs^T\given x]\in\RR^{\ell
  m\times\ell m}$ to denote the (uncentered) covariance matrix for our
regression problem, appearing on the right-hand side of \Eq{phi-ls}.
We also introduce notation for the second term in \Eq{phi-ls} and
its empirical estimate:
$\vprx \coloneqq \EE_\mu[r\one_\vs\given x]$, and
$\hvpri\coloneqq r_i\one_{\vs_i}$.

Thus, our regression estimator~\eqref{eq:phi-ls} is simply
$\bar{\vphi}_x=\mgammaparam{\loggingpolicy}{x}^\dagger \vprx$. Under
Assumptions~\ref{ass:ada} and~\ref{ass:ABS}, it is easy to show that
$V(x,\vs)=\one_\vs^T\bar{\vphi}_x = \slateind^T\mgammalogger^\dagger\vprx$. Replacing $\vprx$
with $\hvpri$
motivates
the following estimator for $V(\pi)$, which we call the
\emph{pseudoinverse estimator} or PI:
\begin{align}
\label{eq:hVpi}
  \hVpi(\pi)
\;\;=\;\;
 \frac{1}{n} \sum_{i=1}^n \sum_{\vs\in S} \pi(\vs\given x_i)\slateind^T\mgammaparam{\mu}{x_i}^\dagger\hvpri
\;\;=\;\;
  \frac{1}{n} \sum_{i=1}^n
  r_i\cdot\q{\pi}{x_i}^T\mgammaparam{\mu}{x_i}^\dagger\one_{\vs_i}
\enspace.
\end{align}
In \Eq{hVpi} we have expanded the definition of $\hvpri$ and
introduced the notation $\q{\pi}{x}$ for the expected slate indicator
under $\pi$ conditional on $x$,
$\q{\pi}{x}\coloneqq\EE_\pi[\one_\vs\given x]$.
The summation over $\vs$ required to obtain $\q{\pi}{x_i}$ in
\Eq{hVpi} can be replaced by a small sample. We can also derive a
weighted variant of PI:
\begin{align}
\label{eq:hVwpi}
\hVwpi(\pi)
&
  =
  \frac
  {\sum_{i=1}^n r_i\cdot\q{\pi}{x_i}^T\mgammaparam{\mu}{x_i}^\dagger\one_{\vs_i}}
  {\sum_{i=1}^n \q{\pi}{x_i}^T\mgammaparam{\mu}{x_i}^\dagger\one_{\vs_i}}
\enspace.
\end{align}

We prove the following unbiasedness property in
Appendix~\ref{app:proofs:estimator}.
\begin{proposition}
\label{prop:unbiased}
If Assumptions~\ref{ass:ada} and~\ref{ass:ABS} hold, then the
estimator $\hVpi$ is unbiased, i.e.,
  $\EE_{\mu^n}[\hVpi]=V(\pi)$,
where the expectation is over the $n$ logged examples sampled
i.i.d.\ from $\mu$.
\end{proposition}

As special cases, PI reduces to IPS when $\ell=1$, and simplifies to
$\sum_{i=1}^n r_i/n$ when $\pi=\mu$ (see \App{pi:eq:mu}). To build
further intuition, we consider the settings of
Examples~\ref{ex:cartesian} and~\ref{ex:ranking},
and simplify the PI estimator to highlight the improvement
over IPS.

\begin{example}[PI for a Cartesian product when $\mu$ is a product distribution]
The PI estimator for the Cartesian product slate space, when $\mu$
factorizes across slots as $\mu(\vs\given x)=\prod_j\mu(s_j\given x)$,
 simplifies to
\[
\textstyle
   \hVpi(\policy)=\frac{1}{n}\sum_{i=1}^n
  r_i\cdot\Parens{
  \sum_{j=1}^\ell \frac{\pi(s_{ij}\given x_i)}{\mu(s_{ij}\given x_i)}
  -\ell+1}
\enspace,
\]
by Prop.~\ref{prop:product} in Appendix~\ref{app:product_bounds}. Note that unlike IPS, which divides by probabilities of whole slates, the PI estimator only divides by probabilities of actions appearing in individual slots. Thus, the magnitude
of each term of the outer summation is only $\Ocal(\ell m)$, whereas
the IPS terms are $m^{\Omega(\ell)}$.
\end{example}

\begin{example}[PI for rankings with $\ell=m$ and uniform logging]
In this case,
\[
\textstyle
\hVpi(\policy)=
\frac{1}{n}\sum_{i=1}^n
  r_i\cdot\Parens{
  \sum_{j=1}^\ell \frac{\pi(s_{ij}\given x_i)}{1/(m-1)}-m+2}
\enspace,
\]
by Prop.~\ref{prop:ranking:spectrum} in \App{bounds:uniform}. The
summands are again $\Ocal(\ell m)=\Ocal(m^2)$.
\end{example}

\subsection{Deviation analysis}
\label{sec:deviation}

So far, we have shown that PI is unbiased under our assumptions
and overcomes the deficiencies of IPS in specific examples. We
now derive a finite-sample error bound,
based on the overlap between $\targetpolicy$ and $\loggingpolicy$.
We use Bernstein's
inequality,
for which we define the variance and range terms:
\begin{align}
   \sigma^2
   \coloneqq
   \EE_{x\sim D}\Bracks{\q{\pi}{x}^T \mgammalogger^\dagger\q{\pi}{x}^{\vphantom{T}}}
\;,\quad
\textstyle
   \rho
   \coloneqq
   \adjustlimits\sup_x\sup_{\vs:\mu(\vs\given x)>0}
   \Abs{\q{\pi}{x}^T \mgammalogger^\dagger\one_\vs}
\enspace.
\label{eq:sigma2:def}
\end{align}
The quantity $\sigma^2$ bounds the
variance whereas $\rho$ bounds the range.  They capture the
``average'' and ``worst-case'' mismatch between $\loggingpolicy$ and
$\targetpolicy$. They equal one when $\pi=\mu$ (see \App{pi:eq:mu}),
and yield the following deviation bound:
\begin{theorem}
\label{thm:deviation:gen}
Under Assumptions~\ref{ass:ada} and~\ref{ass:ABS}, let
$\sigma^2$ and $\rho$ be defined as in \Eq{sigma2:def}.  Then, for any
$\delta \in (0,1)$, with probability at least $1-\delta$,
\begin{align*}
\Abs{\hVpi(\pi) - V(\pi)} \le
  \sqrt{ \frac{2\sigma^2\ln(2/\delta)}{n}}
         + \frac{2 (\rho + 1) \ln(2/\delta)}{3n}
\enspace.
\end{align*}
\end{theorem}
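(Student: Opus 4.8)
\noindent\textbf{Overview.} The plan is to apply Bernstein's inequality to the i.i.d.\ terms
$Z_i \coloneqq r_i \cdot \q{\pi}{x_i}^T \mgammaparam{\mu}{x_i}^\dagger \one_{\vs_i}$,
whose empirical average is exactly $\hVpi(\pi)$ by \Eq{hVpi}. By \Thm{unbiased} we already know $\EE[Z_i] = V(\pi)$, so the only work is to verify the two hypotheses of Bernstein: (i) a uniform almost-sure bound $\abs{Z_i - \EE[Z_i]} \le R$ for a suitable $R$, and (ii) a bound on $\Var(Z_i)$, equivalently on $\EE[Z_i^2]$. I would then invoke the standard Bernstein bound, which states that for i.i.d.\ mean-zero variables bounded by $R$ with variance at most $s^2$, the empirical average deviates from zero by more than $\sqrt{2 s^2 \ln(2/\delta)/n} + R\ln(2/\delta)/(3n)$ with probability at most $\delta$ (two-sided, hence the factor $2$ inside the log).

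\noindent\textbf{Range bound.} For the almost-sure bound, note that conditioned on $x_i$ and the logged slate $\vs_i$, we have $\abs{r_i} \le 1$, so $\abs{Z_i} \le \abs{\q{\pi}{x_i}^T \mgammaparam{\mu}{x_i}^\dagger \one_{\vs_i}}$. Since the logged slate $\vs_i$ satisfies $\mu(\vs_i \given x_i) > 0$ almost surely, this is at most $\rho$ by the definition of $\rho$ in \Eq{sigma2:def}. Hence $\abs{Z_i} \le \rho$ and the centered variable satisfies $\abs{Z_i - V(\pi)} \le \rho + \abs{V(\pi)} \le \rho + 1$, since $V(\pi)$ is an average of rewards in $[-1,1]$. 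This yields $R = \rho + 1$, matching the second term of the stated bound.

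\noindent\textbf{Variance bound.} For the variance, it suffices to bound $\EE[Z_i^2] \le \sigma^2$, since $\Var(Z_i) \le \EE[Z_i^2]$. Using $r_i^2 \le 1$ and taking the expectation over $r_i$ and $\vs_i$ conditioned on $x_i$, we get
\[
\EE\bigBracks{Z_i^2 \given x_i}
\le
\EE_\mu\bigBracks{\bigParens{\q{\pi}{x_i}^T \mgammaparam{\mu}{x_i}^\dagger \one_{\vs_i}}^2 \bigGiven x_i}
=
\q{\pi}{x_i}^T \mgammaparam{\mu}{x_i}^\dagger
\bigParens{\EE_\mu[\one_{\vs}\one_\vs^T \given x_i]}
\mgammaparam{\mu}{x_i}^{\dagger} \q{\pi}{x_i}.
\]
The key algebraic step is to identify $\EE_\mu[\one_\vs\one_\vs^T \given x]$: its $(j,a;k,a')$ entry is $\mu(s_j=a, s_k=a' \given x)$ when $(j,a)\ne(k,a')$ and $\mu(s_j=a\given x)$ on the diagonal, which is precisely the matrix $\mgammalogger$ (diagonal entries $\mu(s_j=a\given x)$, off-diagonal $\mu(s_j=a,s_k=a'\given x)$, noting $\one_\vs$ has exactly one nonzero per slot so $(\one_\vs\one_\vs^T)_{(j,a),(j,a')}=0$ for $a\ne a'$, consistent with $\mgammalogger$'s ``otherwise'' case). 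Thus the inner expression equals $\q{\pi}{x}^T \mgammalogger^\dagger \mgammalogger \mgammalogger^\dagger \q{\pi}{x} = \q{\pi}{x}^T \mgammalogger^\dagger \q{\pi}{x}$, using the pseudoinverse identity $\mgammalogger^\dagger \mgammalogger \mgammalogger^\dagger = \mgammalogger^\dagger$. Taking expectation over $x\sim D$ gives $\EE[Z_i^2] \le \sigma^2$. Note this also requires that $\q{\pi}{x}$ lies in the range of $\mgammalogger$ (so that $\mgammalogger\mgammalogger^\dagger$ acts as identity on it where needed) — this follows from ADA and ABS exactly as in the proof of \Thm{slatevalue}, which ensures $\one_\vs \in \Range(\mgammalogger)$ for every $\vs$ with $\pi(\vs\given x)>0$, hence $\q{\pi}{x}$ is a convex combination of such vectors.

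\noindent\textbf{Main obstacle.} The routine part is the Bernstein application; the one step needing care is the identification $\EE_\mu[\one_\vs\one_\vs^T\given x] = \mgammalogger$ together with the projection argument showing $\q{\pi}{x}^T\mgammalogger^\dagger\mgammalogger\mgammalogger^\dagger\q{\pi}{x}$ collapses to $\q{\pi}{x}^T\mgammalogger^\dagger\q{\pi}{x}$. The subtlety is that $\mgammalogger^\dagger\mgammalogger$ is the orthogonal projector onto $\Range(\mgammalogger^T)=\Range(\mgammalogger)$ (as $\mgammalogger$ is symmetric), so one must confirm $\q{\pi}{x}$ — or at least its relevant component — is unaffected; this is where ABS enters, and it is the same range condition already used in \Thm{slatevalue}, so I would simply cite that argument rather than re-derive it.
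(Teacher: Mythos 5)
Your proposal is correct and follows essentially the same route as the paper's proof: Bernstein's inequality applied to the terms $r_i\,\q{\pi}{x_i}^T\mgammaparam{\mu}{x_i}^\dagger\one_{\vs_i}$, centered via \Thm{unbiased}, with the range bounded by $\rho+1$ and the second moment collapsed to $\sigma^2$ through the identification $\EE_\mu[\one_\vs\one_\vs^T\given x]=\mgammalogger$ and the identity $\mgammalogger^\dagger\mgammalogger\mgammalogger^\dagger=\mgammalogger^\dagger$. The only superfluous step is your concern about $\q{\pi}{x}$ lying in $\Range(\mgammalogger)$: the Moore--Penrose identity $\mgammalogger^\dagger\mgammalogger\mgammalogger^\dagger=\mgammalogger^\dagger$ holds unconditionally, so no range condition (and hence no appeal to ABS or to the argument of \Thm{slatevalue}) is needed for the variance bound.
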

We observe that this finite sample bound is structurally different
from the regret bounds studied in the prior works on combinatorial
bandits. The bound incorporates the extent of overlap between $\pi$
and $\mu$ so that we have higher confidence in our estimates when
the logging and evaluation policies are similar---an important
consideration in off-policy evaluation.

While the bound might look complicated, it simplifies if we
consider the class of $\eps$-uniform logging policies. Formally,
for any policy $\mu$, define $\mu_\eps(\vs\given x) =
(1-\eps)\mu(\vs\given x) + \eps \nu (\vs\given x)$, with $\nu$ being
the uniform distribution over the set $S(x)$. For suitably small
$\eps$, such logging policies are widely used in practice. We have
the following corollary for these policies, proved in \App{bounds}:
\begin{corollary}
\label{prop:kappa}
In the settings of
\Ex{cartesian} or
\Ex{ranking},
if the logging is done with $\mu_\eps$ for some $\eps >
0$, we have $\abs{\hVpi(\pi) - V(\pi)} \le
\Ocal\bigParens{\sqrt{\eps^{-1}\ell m/n}}$.
\end{corollary}
Again, this turns the $\Omega(m^\ell)$ data dependence of IPS into
$O(m\ell)$. The key step in the proof is the bound on a certain norm of $\mgamma_{\unif}^\dagger$,
similar to the bounds of~\citet{cesa2012combinatorial}, but our results are
a bit sharper.

\section{Experiments}
\label{sec:experiments}

We empirically evaluate the performance of the \approach
for ranking problems.
We first show that PI
outperforms prior works in a comprehensive semi-synthetic study
using a public dataset. We then use our estimator for \emph{off-policy
  optimization}, i.e., to learn ranking policies, competitively
with supervised learning that uses more information.
Finally, we
demonstrate substantial improvements on proprietary data from search
engine logs for two user-satisfaction metrics used in practice:
\emph{time-to-success} and \emph{utility rate}, which
do not
satisfy the linearity assumption.
More detailed results are deferred to Appendices~\ref{app:approach_family}
and~\ref{sec:opt_family}. All of our code is available
online.\footnote{\href{https://github.com/adith387/slates_semisynth_expts}{https://github.com/adith387/slates\_semisynth\_expts}}

\subsection{Semi-synthetic evaluation}
\label{sec:expt_ssynth}


Our semi-synthetic evaluation uses labeled data from the Microsoft
Learning to Rank Challenge dataset~\cite{letor2013} (MSLR-WEB30K) to
create a contextual bandit instance. Queries form the contexts $x$ and
actions $a$ are the available documents. The dataset contains over 31K
queries, each with up to 1251 judged documents,
where the query-document pairs are judged on
a 5-point scale, $\rel(x,a)\in\set{0,\ldots,4}$.
Each pair $(x,a)$ has a
feature vector $\vf(x,a)$, which can be partitioned into title and body features
($\vfTitle$ and $\vfBody$).  We consider two
slate rewards: \textup{NDCG} from Example~\ref{ex:ndcg},
and the \emph{expected reciprocal rank},
\textup{ERR}~\cite{Chapelle2009}, which \emph{does not} satisfy linearity, and is defined as
\begin{align*}
  \textstyle
  \textup{ERR}(x,\vs) \coloneqq
  \sum_{r=1}^\ell \frac{1}{r} \prod_{i=1}^{r-1} (1-R(s_i))R(s_r)
   \enspace, \quad \text{where }R(a) = \frac{2^{\rel(x,a)}-1}{2^{\textit{maxrel}}}
   \text{ with $\textit{maxrel}=4$.}
\end{align*}

To derive several distinct logging and target policies, we first
train two lasso regression models, called $\lassoTitle$ and
$\lassoBody$, and two regression tree models, called $\treeTitle$ and
$\treeBody$, to predict relevances from $\vfTitle$ and $\vfBody$,
respectively.  To create the logs, queries $x$ are sampled uniformly,
and the set $A(x)$ consists of the top $m$ documents according to
$\treeTitle$. The logging policy is parametrized by a model,
either $\treeTitle$ or $\lassoTitle$, and a scalar $\alpha\ge0$.
It samples from a multinomial
distribution over documents
$p_\alpha(a|x) \propto 2^{-\alpha \lfloor \log_2
  \rank(x,a)\rfloor }$ where $\rank(x,a)$ is the rank of document $a$
for query $x$ according to the corresponding model.
Slates are constructed slot-by-slot, sampling \emph{without
  replacement} according to $p_\alpha$.
Varying $\alpha$ interpolates between uniformly random and
  deterministic logging.
Thus, all logging policies are based on the models derived from $\vfTitle$.
We consider two deterministic
target policies based on the two models derived from $\vfBody$, i.e., $\treeBody$ and $\lassoBody$,
which select the top $\ell$ documents
  according to the corresponding model. The four base models are
fairly distinct: on average fewer than 2.75 documents
overlap among top 10 (see \App{overlaps}).

\begin{figure*}
\begin{center}
\includegraphics[width=1.0\textwidth]{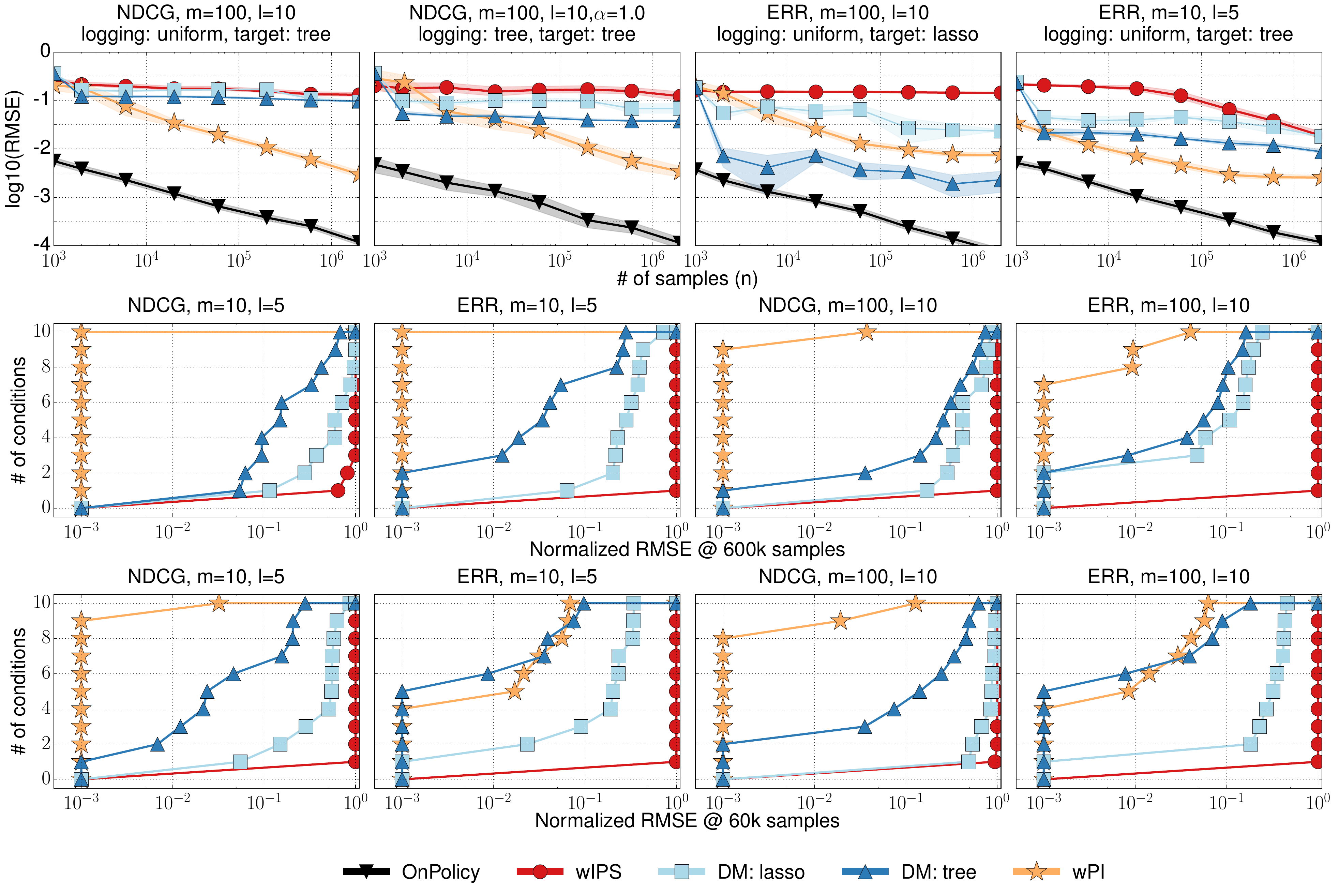}\\
\end{center}
\caption{\textit{Top:} RMSE of various estimators under four experimental
  conditions (see \App{approach_family} for all 40 conditions).
  \textit{Middle:} CDF of normalized RMSE at 600k samples;
  each plot aggregates over 10 logging-target combinations; closer to top-left is better.
  \textit{Bottom:} Same as middle but at 60k samples.}
\label{fig:synth_expt1}
\figsqueeze
\end{figure*}

We compare the weighted estimator wPI
with the direct method (DM) and weighted IPS (wIPS).
(Weighted variants outperformed the unweighted ones.)
We implement two variants of DM: regression trees and lasso,
each trained on the first $n/2$ examples and using the remaining $n/2$ examples
for evaluation according to \Eq{dm}.
We also include an aspirational baseline, \OnPolicy,
which corresponds to deploying the target policy as in an A/B test and returning
the average of observed rewards.
This is the expensive alternative we wish to avoid.

We evaluate the estimators by recording the root mean square error
(RMSE) as a function of the number of samples, averaged over at least
25 independent runs. We do this for 40 different experimental
conditions,
considering two reward metrics, two slate-space sizes, and 10 combinations of target and logging policies (including the choice of $\alpha$).
The top row of \Fig{synth_expt1} shows results for four
representative conditions (see \App{approach_family} for all results),
while the middle and bottom rows aggregate across conditions.
To produce the aggregates, we shift and rescale the RMSE of all methods,
at 600k (middle row) or 60k (bottom row) samples, so the best performance is at 0.001 and
the worst is at 1.0 (excluding \OnPolicy). (We use 0.001 instead of 0.0 to allow
plotting on a log scale.) The aggregate plots display the cumulative distribution
function of these normalized RMSE values across 10 target-logging combinations,
keeping the metric and the slate-space size fixed.


The pseudoinverse estimator wPI easily dominates wIPS across all experimental conditions,
as can be seen in \Fig{synth_expt1} (top) and in \App{approach_family}.
While wIPS and IPS are (asymptotically) unbiased even without linearity assumption, they both suffer from a large variance caused by the slate size. The variance and hence the mean square error of wIPS and IPS grows exponentially with the slate size, so they perform poorly beyond the smallest slate sizes.
DM performs well in
some cases, especially with few samples, but often plateaus or degrades
eventually as it overfits on the logging distribution, which is
different from the target.
While wPI does not always outperform DM methods (e.g.,
\Fig{synth_expt1}, top row, second from right), it is the only method
that works robustly across all conditions, as can be seen in the aggregate plots.
In general, choosing between DM and wPI is largely a matter of bias-variance tradeoff. DM can be particularly good with very small data sizes, because of its low variance, and in those settings it is often the best choice. However, PI performs comprehensively better given enough data (see \Fig{synth_expt1}, middle row).

In the top row of \Fig{synth_expt1}, we see that, as expected, wPI is
biased for the \textup{ERR} metric since \textup{ERR} does not satisfy
linearity. The right two panels also demonstrate the
effect of varying $m$ and $\ell$. While wPI
deteriorates somewhat
for the larger slate space, it still gives a meaningful estimate.
In contrast,
wIPS fails to produce any meaningful estimate in the larger slate
space and its RMSE barely improves with more data.
Finally, the left two plots in the top row show
that wPI is fairly insensitive to the amount of stochasticity in logging,
whereas DM improves with more overlap between logging and target.

\subsection{Semi-synthetic policy optimization}
\label{sec:expt_opt}

We now show how to use the \approach for off-policy optimization.  We
leverage pointwise learning to rank (L2R) algorithms, which learn a
scoring function for query-document pairs by fitting to relevance
labels. We call this the \emph{supervised} approach, as it requires
relevance labels.

Instead of requiring relevance labels, we use the \approach to convert
page-level reward into per-slot reward components---the estimates of
$\phi_x(j,a)$---and these become targets for regression.  Thus,
  the \approach enables pointwise L2R to optimize whole-page metrics even without relevance labels.
Given a contextual bandit dataset $\set{(x_i,\vs_i,r_i)}_{i\le n}$
collected by the logging policy $\mu$, we begin by creating the
estimates of $\vphi_{x_i}$:
 $\hat{\vphi}_i=\mgammaparam{\mu}{x_i}^\dagger\hvpri$,
turning the $i$-th contextual bandit example into $\ell m$ regression
examples.
The trained regression model is used to create a slate, starting with
the highest scoring slot-action pair, and continuing greedily
(excluding the pairs with the already chosen slots or actions).
This procedure is detailed in Appendix~\ref{sec:opt_family}.
Note that without the linearity assumptions, our imputed regression targets might not lead to the best possible learned policy, but we still expect to adapt somewhat to the slate-level metric.

We use the MSLR-WEB10K dataset~\cite{letor2013} to compare our approach with
benchmarked results~\cite{2015CCL} for NDCG@3 (i.e., $\ell=3$).%
\footnote{Our dataset here differs from the dataset MSLR-WEB30K used in \Sec{expt_ssynth}. There our goal was to study realistic problem dimensions, e.g., constructing length-10 rankings out of 100 candidates.
Here, we use MSLR-WEB10K, because it is the largest dataset with public benchmark numbers by state-of-the-art approaches (specifically LambdaMART).}
This
dataset contains 10k queries, over 1.2M relevance judgments, and
up to 908 judged documents per query.  The
state-of-the-art \emph{listwise} L2R method on this dataset
is a highly tuned variant of LambdaMART~\cite{Asadi2013} (with an ensemble
of 1000 trees, each with up to 70 leaves).

We use the
provided 5-fold split and always train on bandit data collected by
uniform logging from four folds, while evaluating with
supervised data on the fifth.  We compare our approach, titled PI-OPT,
against the supervised approach (SUP), trained to predict the
\emph{gains}, equal to $2^{\rel(x,a)} - 1$, computed using annotated
relevance judgements in the training folds (predicting raw relevances
was inferior). Both PI-OPT and SUP train gradient boosted regression
trees (with 1000 trees, each with up to 70 leaves).
Additionally, we also experimented
with the ERR metric.




The average test-set performance (computed using
ground-truth relevance judgments for each test set) across the
$5$-folds is reported in Table~\ref{tbl:opt_results}. Our method,
PI-OPT is competitive with the supervised baseline SUP for NDCG, and is
substantially superior for ERR. A different transformation
instead of gains might yield a stronger supervised baseline for ERR,
but this only illustrates the key benefit of PI-OPT: \emph{the right pointwise targets are automatically inferred for any whole-page metric.}
Both PI-OPT and SUP are slightly worse than LambdaMART for NDCG@3, but they are arguably not as highly tuned,
and PI-OPT only uses the slate-level metric.

\begin{table}[ht]
\begin{center}\begin{small}
\caption{Comparison of L2R approaches optimizing NDCG@3 and ERR@3.
  LambdaMART is a tuned list-wise approach.
  SUP and PI-OPT use the same pointwise L2R learner; SUP uses
  $8\times 10^5$ relevance judgments, PI-OPT uses
  $10^7$ samples (under uniform logging) with page-level rewards.}
\label{tbl:opt_results}
\begin{tabular}{|l||c|c|c|c|}
\hline
Metric & LambdaMART & uniformly random & SUP & PI-OPT \\
\hline
NDCG@3 & $0.457$ & $0.152$ & $0.438$ & $0.421$ \\
ERR@3 & --- & $0.096$ & $0.311$ & $0.321$\\
\hline
\end{tabular}
\end{small}
\end{center}
\end{table}

\subsection{Real-world experiments}
\label{sec:expt_bing}

We finally evaluate all methods using logs collected from a popular
search engine.
The dataset consists of search queries, for which the logging policy
randomly (non-uniformly) chooses a slate of size $\ell=5$ from a small
pre-filtered set of documents of size $m \leq 8$.  After
preprocessing, there are 77 unique queries and 22K total examples,
meaning that for each query, we have logged impressions for many of
the available slates.  As before, we create the logs by sampling
queries uniformly at random, and using a logging policy that samples
uniformly from the slates shown for this query.


We consider two page-level metrics: time-to-success (TTS) and
\UtilityRate.  TTS measures the number of seconds between presenting
the results and the first satisfied click from the user, defined as
any click for which the user stays on the linked page for sufficiently
long. TTS value is capped and scaled to $[0,1]$.
\UtilityRate is a more complex page-level metric of user
satisfaction. It captures the interaction of a user with the page as a
timeline of events (such as clicks) and their durations.  The
events are classified as revealing a positive or negative utility to
the user and their contribution is proportional to their duration.
\UtilityRate takes values in $[-1,1]$.

We evaluate a target policy based on a logistic regression classifier
trained to predict clicks and using the predicted probabilities to
score slates. We restrict the target policy to pick among the slates
in our logs, so we know the ground truth slate-level reward.  Since we
know the query distribution, we can calculate the target policy's
value exactly, and measure RMSE relative to this true value.

We compare our estimator (PI) with three baselines similar
to those from \Sec{expt_ssynth}:
\Direct, IPS and \OnPolicy.
\Direct uses regression
trees over roughly 20,000 slate-level features.

\Fig{bing_experiment:optimized} from the introduction shows that PI
provides a consistent multiplicative improvement in RMSE over IPS,
which suffers due to high variance. Starting at moderate sample sizes,
PI also outperforms DM, which suffers due to substantial bias.

\section{Discussion}
\label{sec:discussion}

In this paper we have introduced a new estimator (PI) for off-policy
evaluation in combinatorial contextual bandits under a linearity
assumption on the slate-level rewards. Our theoretical and empirical
analysis demonstrates the merits of the approach. The empirical
results show a favorable bias-variance tradeoff. Even in datasets and
metrics where our assumptions are violated, the PI estimator typically
outperforms all baselines. Its performance, especially at smaller sample sizes,
could be further improved by designing doubly-robust variants~\citep{dudik11doublyrobust}
and possibly also incorporating weight clipping~\citep{switch}.

One promising approach to relax \assumes is to posit a decomposition
over pairs (or tuples) of slots to capture higher-order interactions
such as diversity. More generally, one could replace slate spaces by
arbitrary compact convex sets, as done in linear bandits. In these
settings, the pseudoinverse estimator could still be applied, but
tight sample-complexity analysis is open for future research.

\begin{small}
\bibliographystyle{plainnat}
\bibliography{bibliography}
\end{small}

\vfill
\newpage
\onecolumn
\begin{appendices}
\section{Proof of Proposition~\ref{prop:unbiased}}
\label{app:proofs:estimator}

\begin{lemma}
\label{lem:slatevalue}
If \assumes holds and $\mu(\vs\given x) > 0$,
then $V(x,\vs) = \slateind^T\mgammalogger^\dagger\vprx$.
\end{lemma}

\begin{proof}
Fix one $x$ for the entirety of the proof. Recall from \Sec{estimator} that
\[
   V(x,\vs) = \slateind^T\decompvec_x
\enspace.
\]
Let $N = \card{\supp\mu(\cdot\given x)}$ be the size of the support of $\mu(\cdot\given x)$
and let $\mM \in \{0,1\}^{N\times \poolsize\slatesize}$ denote the binary matrix with rows $\slateind^T$ for each $\vs \in \supp\mu(\cdot\given x)$.
Thus $\mM\decompvec_x$ is the vector enumerating $V(x,\vs)$ over $\vs$ for which $\mu(\vs\given x)>0$.
Let $\Null(\mM)$ denote the null space of $\mM$ and $\mPi$ be the projection on $\Null(\mM)$. Let $\decompvec_x^\star= (\mI - \mPi)\decompvec_x$.
Then clearly, $\mM\decompvec_x = \mM\decompvec_x^\star$, and hence, for any $\vs \in \supp\loggingpolicy(\cdot\given x)$,
\begin{equation}
\label{eq:phi:star}
   V(x,\vs) = \slateind^T\decompvec_x^\star
\enspace.
\end{equation}
We will now show that $\decompvec_x^\star=\mgammalogger^\dagger\prvecx$, which will complete the proof.

Recall from \Sec{estimator} that
\begin{equation}
\label{eq:vprx}
  \vprx = \mgammalogger \decompvec_x
\enspace.
\end{equation}
Next
note that $\mgammalogger$ in symmetric positive semidefinite by definition, so
\[
  \Null(\mgammalogger)
  = \set{\vv:\:\vv^T\mgammalogger\vv=0}
  = \set{\vv:\:\slateind^T\vv=0\text{ for all }\vs\in\supp\mu(\cdot\given x)}
  = \Null(\mM)
\]
where the first step follows by positive semi definiteness of $\mgammalogger$,
the second step is from the definition of $\mgammalogger$,
and the final step from the definition of $\mM$.
Since $\Null(\mgammalogger)=\Null(\mM)$, we have from \Eq{vprx} that
$\prvec_x = \mgammalogger \decompvec_x^\star$, but, importantly, this also implies $\decompvec_x^\star \perp \Null(\mgammalogger)$, so by the definition of the pseudoinverse,
\[
  \mgammalogger^\dagger \prvec_x = \decompvec_x^\star.
\]
This proves \Lem{slatevalue}, since for any $\vs$ with $\mu(\vs\given x) > 0$, we argued that $V(x,\vs) = \slateind^T \decompvec_x^\star = \slateind^T \mgammalogger^\dagger \prvec_x$.
\end{proof}

\begin{proof}[Proof of \Prop{unbiased}]
Note that it suffices to analyze the expectation of a single term in the estimator, that is
\[
  \sum_{\vs\in S} \pi(\vs\given x_i)\slateind^T\mgammaparam{\mu}{x_i}^\dagger\hvpri
\enspace.
\]
First note that $\EE_{(\vs_i,r_i)\sim\mu(\cdot,\cdot\given x_i)}\bigBracks{\hvpri}=\vpr_{x_i}$, because
\[
  \EE_{(\vs_i,r_i)\sim\mu(\cdot,\cdot\given x_i)}
    \bigBracks{\hpri(j,a)}
  =
    \EE_{(\vs_i,r_i)\sim\mu(\cdot,\cdot\given x_i)}
    \bigBracks{r_i\one\braces{s_j=a}}
  = \pr_{x_i}(j,a)
\enspace.
\]
The remainder follows by \Lem{slatevalue}:
\begin{align}
\notag
\EE\Bracks{
  \sum_{\vs\in S} \pi(\vs\given x_i)\slateind^T\mgammaparam{\mu}{x_i}^\dagger\hvpri
 }
&=\EE_{x_i\sim D}
  \Bracks{
    \sum_{\vs\in S}
    \pi(\vs\given x_i)\slateind^T\mgammaparam{\mu}{x_i}^\dagger
    \;
    \EE_{(\vs_i,r_i)\sim\mu(\cdot,\cdot\given x_i)}\bigBracks{\hvpri}
  }
\\
\notag
&=\EE_{x_i\sim D}
  \Bracks{
    \sum_{\vs\in S}
    \pi(\vs\given x_i)\slateind^T\mgammaparam{\mu}{x_i}^\dagger
    \vpr_{x_i}
  }
\\
\tag*{\qed}
&=\EE_{x_i\sim D}
  \Bracks{
    \sum_{\vs\in S}
    \pi(\vs\given x_i) V(x_i,\vs)
  }
 =V(\pi)
\enspace.
\end{align}%
\renewcommand{\qed}{}%
\end{proof}

\section{Proof of Theorem~\ref{thm:deviation:gen}}
\label{app:deviation}

\begin{proof}
The proof is based on an application of Bernstein's inequality to the centered sum
\[
\sum_{i=1}^n \Bracks{ \q{\pi}{x_i}^T \mgammaparam{\loggingpolicy}{x_i}^\dagger \hvpri - V(\pi)
             }
\enspace.
\]
The fact that this quantity is centered is directly from \Prop{unbiased}.
We must compute both the second moment and the range to apply Bernstein's inequality.
By independence, we can focus on just one term, so we will drop the subscript $i$.
First, bound the variance:
\begin{align*}
\Var\Bracks{
  \qtarget^T \mgammalogger^\dagger \hvpr
}
&\le
  \EE_\mu\Bracks{
     \Parens{\qtarget^T \mgammalogger^\dagger \hvpr}^2
  }
\\
&=
  \EE_\mu\Bracks{
     \Parens{\qtarget^T \mgammalogger^\dagger\,r\slateind}^2
  }
\\
&\le
  \EE_\mu\Bracks{
     \Parens{\qtarget^T \mgammalogger^\dagger\slateind}^2
  }
\\
&=
  \EE_{x \sim D}\BigBracks{
     \qtarget^T\mgammalogger^\dagger
        \;\EE_{\vs\sim\loggingpolicy(\cdot\given x)}\!\Bracks{\slateind\slateind^T}\,
     \mgammalogger^\dagger\qtarget
  }
\\
&=
  \EE_{x \sim D}\BigBracks{
     \qtarget^T\mgammalogger^\dagger
        \mgammalogger
     \mgammalogger^\dagger\qtarget
  }
\\
&=
  \EE_{x \sim D}\Bracks{
     \qtarget^T\mgammalogger^\dagger\qtarget
  }
\\
&=
  \sigma^2
\enspace.
\end{align*}
Thus the per-term variance is at most $\sigma^2$.
We now bound the range, again focusing on one term,
\begin{align*}
\Abs{ \qtarget^T \mgammalogger^\dagger \hvpr - V(\pi) }
&\le
  \Abs{ \qtarget^T \mgammalogger^\dagger \hvpr } + 1
\\
&=
  \Abs{ \qtarget^T \mgammalogger^\dagger r\one_\vs } + 1
\\
&\le
  \Abs{ \qtarget^T \mgammalogger^\dagger \one_\vs } + 1
\\
&\le
  \rho + 1
\end{align*}
The first line here is the triangle inequality, coupled with the fact that since rewards are bounded in $[-1,1]$, so is $V(\pi)$.
The second line is from the definition of $\hvpr$, while the third follows
because $r\in[-1,1]$.
The final line follows from the definition of $\rho$.

Now, we may apply Bernstein's inequality, which says that for any $\delta \in (0,1)$, with probability at least $1-\delta$,
\[
  \Abs{
     \sum_{i=1}^n\Bracks{
        \q{\pi}{x_i}^T \mgammaparam{\loggingpolicy}{x_i}^\dagger \hvpri - V(\pi)
     }
  }
  \le \sqrt{2n\sigma^2\ln(2/\delta)} + \frac{2 (\rho + 1) \ln(2/\delta)}{3}
\enspace.
\]
The theorem follows by dividing by $n$.
\end{proof}

\section{Pseudo-inverse estimator when \texorpdfstring{$\pi=\mu$}{pi=mu}}
\label{app:pi:eq:mu}

In this section we show that when the target policy coincides with logging (i.e., $\pi=\mu$), we have $\sigma^2=\rho=1$, i.e., the bound of \Thm{deviation:gen} is independent of the number of actions and slots. Indeed,
in \Claim{pi:eq:mu} we will see that the estimator actually simplifies to taking an empirical average of rewards which are bounded in $[-1,1]$.
Before proving \Claim{pi:eq:mu} we prove one supporting claim:

\begin{claim}
\label{claim:qmux}
For any policy $\mu$ and context $x$,
we have
$
  \q{\mu}{x}^T\mgammalogger^\dagger\one_\vs=1
$
for all $\vs\in\supp\mu(\cdot\given x)$.
\end{claim}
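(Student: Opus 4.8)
The plan is to exhibit an explicit vector that $\mgammalogger$ maps onto $\q{\mu}{x}$, and then to observe that $\one_\vs$ already lies in $\Range(\mgammalogger)$ whenever $\vs\in\supp\mu(\cdot\given x)$, so that on the relevant vectors the pseudoinverse behaves just like an inverse.

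First I would record the trivial but crucial observation that every valid slate indicator satisfies $\one_\vs^T\mathbf{1}=\ell$, where $\mathbf{1}\in\RR^{\ell m}$ is the all-ones vector: indeed $\one_\vs$ has exactly one nonzero entry (equal to $1$) in each of the $\ell$ slots. Combining this with \Claim{mgamma}, which gives $\mgammalogger=\EE_\mu[\one_\vs\one_\vs^T\given x]$, yields
\[
  \mgammalogger\Parens{\tfrac{1}{\ell}\mathbf{1}}
  =\EE_\mu\!\Bracks{\one_\vs\cdot\tfrac{1}{\ell}\one_\vs^T\mathbf{1}\given x}
  =\EE_\mu[\one_\vs\given x]
  =\q{\mu}{x}
\enspace,
\]
so $\tfrac{1}{\ell}\mathbf{1}$ is a preimage of $\q{\mu}{x}$ under $\mgammalogger$.

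Next I would invoke the standard facts about the pseudoinverse of the symmetric positive semidefinite matrix $\mgammalogger$: $\mgammalogger\mgammalogger^\dagger$ is the orthogonal projector onto $\Range(\mgammalogger)$. Using the preimage from the previous step together with the symmetry of $\mgammalogger$,
\[
  \q{\mu}{x}^T\mgammalogger^\dagger\one_\vs
  =\Parens{\tfrac{1}{\ell}\mathbf{1}}^T\mgammalogger\,\mgammalogger^\dagger\one_\vs
  =\Parens{\tfrac{1}{\ell}\mathbf{1}}^T\bigParens{\mgammalogger\mgammalogger^\dagger\one_\vs}
\enspace.
\]
To finish I need $\mgammalogger\mgammalogger^\dagger\one_\vs=\one_\vs$, i.e.\ $\one_\vs\in\Range(\mgammalogger)$, for $\vs\in\supp\mu(\cdot\given x)$. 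This comes for free from the proof of \Thm{slatevalue}, where it was shown that $\Null(\mgammalogger)=\Null(\mM)$ with $\mM$ the matrix whose rows are $\one_\vs^T$ over $\vs\in\supp\mu(\cdot\given x)$; taking orthogonal complements and using that $\mgammalogger$ is symmetric, $\Range(\mgammalogger)=\Range(\mM^T)$, and each $\one_\vs$ with $\vs\in\supp\mu(\cdot\given x)$ is a column of $\mM^T$, hence lies in $\Range(\mgammalogger)$. Substituting $\mgammalogger\mgammalogger^\dagger\one_\vs=\one_\vs$ and applying $\one_\vs^T\mathbf{1}=\ell$ once more gives $\q{\mu}{x}^T\mgammalogger^\dagger\one_\vs=(\tfrac{1}{\ell}\mathbf{1})^T\one_\vs=1$.

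I do not expect a genuine obstacle here; the only subtlety is that the identity really does require $\vs\in\supp\mu(\cdot\given x)$ — that restriction is exactly what makes $\one_\vs\in\Range(\mgammalogger)$, and the conclusion can fail for slates outside the support. Everything else reduces to the structural fact $\one_\vs^T\mathbf{1}=\ell$ together with the pseudoinverse identities already used in the proof of \Thm{slatevalue}.
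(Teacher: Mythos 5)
Your proof is correct, but it takes a genuinely different and considerably shorter route than the paper's. The paper proves \Claim{qmux} by decomposing $\mgammalogger=\mV+\vq\vq^T$ with $\mV$ the covariance of $\one_\vs$, building an explicit eigenbasis $\set{\vu_1,\dotsc,\vu_r,\vn}$, inverting the resulting block matrix via the Schur-complement identity, and computing $\mgammalogger^\dagger\vq=\alpha^{-1}\vn$ before finishing with the observation that $\one_\vs-\vq\perp\Null(\mV)$. You bypass all of that by noting that $\frac{1}{\ell}\one$ is an explicit preimage of $\q{\mu}{x}$ under $\mgammalogger$ (via \Claim{mgamma} and $\one_\vs^T\one=\ell$), so that $\q{\mu}{x}^T\mgammalogger^\dagger\one_\vs=\frac{1}{\ell}\one^T\mgammalogger\mgammalogger^\dagger\one_\vs$, and then using that $\mgammalogger\mgammalogger^\dagger$ is the orthogonal projector onto $\Range(\mgammalogger)=\Range(\mM^T)$, which contains each $\one_\vs$ with $\vs\in\supp\mu(\cdot\given x)$ by the null-space identification already established in the proof of \Thm{slatevalue}. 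Amusingly, the single structural fact you exploit, $\one_\vs^T\one=\ell$, is the same one the paper uses only indirectly (to show $\one\in\Null(\mV)$ and hence $\vq\notin\Range(\mV)$); you use it head-on. What the paper's longer computation buys is an explicit description of $\mgammalogger^\dagger$ on the relevant subspace, but since nothing downstream (in particular \Claim{pi:eq:mu}) uses more than the identity $\q{\mu}{x}^T\mgammalogger^\dagger\one_\vs=1$, your argument would serve as a drop-in replacement. You are also right to flag that the restriction $\vs\in\supp\mu(\cdot\given x)$ is exactly what guarantees $\one_\vs\in\Range(\mgammalogger)$ and that the identity can fail outside the support.
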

\begin{proof}
To simplify the exposition, write $\vq$ and $\mgamma$ instead of a more verbose $\q{\mu}{x}$ and $\mgammalogger$.

The bulk of the proof is in deriving an explicit expression for $\mgamma^\dagger$.
We begin by expressing $\mgamma$ in a suitable basis.
Since $\mgamma$ is the matrix of second moments and $\vq$ is the vector of first moments of $\one_\vs$, the matrix $\mgamma$ can be written as
\[
  \mgamma=\mV+\vq\vq^T
\]
where $\mV$ is the covariance matrix of $\one_\vs$, i.e., $\mV\coloneqq\EE_{\vs\sim\mu(\cdot\given x)}\bigBracks{
  (\one_\vs-\vq)(\one_\vs-\vq)^T
  }$.
Assume that the rank of $\mV$ is $r$ and consider the eigenvalue decomposition of $\mV$
\[
  \mV=\sum_{i=1}^r \lambda_i\vu_i\vu_i^T= \mU\mlambda\mU^T
\enspace,
\]
where $\lambda_i>0$ and vectors $\vu_i$ are orthonormal; we have grouped the eigenvalues into the diagonal matrix $\mlambda\coloneqq\diag(\lambda_1,\dotsc,\lambda_r)$ and eigenvectors into the matrix $\mU\coloneqq(\vu_1\;\vu_2\;\dotsc\;\vu_r)$.

We next argue that $\vq\not\in\Range(\mV)$. To see this, note that the all-ones-vector $\one$ is in the null space of $\mV$ because, for any valid slate $\vs$, we have $\one_\vs^T\one=\ell$ and thus also for the convex combination $\vq$ we have $\vq^T\one=\ell$, which means that
\[
  \one^T\mV\one=\EE_{\vs\sim\mu(\cdot\given x)}\bigBracks{
  \one^T(\one_\vs-\vq)(\one_\vs-\vq)^T\one
  }=0
\enspace.
\]
Now, since $\one\perp\Range(\mV)$ and $\vq^T\one=\ell$, we have that $\vq\not\in\Range(\mV)$. In particular, we can write $\vq$ in the form
\begin{equation}
\label{eq:vq}
  \vq=\sum_{i=1}^r \beta_i\vu_i + \alpha\vn
  =
  \begin{pmatrix}
     \mU & \vn
  \end{pmatrix}
  \begin{pmatrix}
     \vbeta
  \\
     \alpha
  \end{pmatrix}
\end{equation}
where $\alpha\ne 0$ and $\vn\in\Null(\mV)$ is a unit vector. Note that $\vn\perp\vu_i$ since $\vu_i\perp\Null(\mV)$. Thus, the second moment matrix
$\mgamma$ can be written as
\begin{equation}
\label{eq:mgamma:factor}
  \mgamma=\mV+\vq\vq^T
         =
  \begin{pmatrix}
     \mU & \vn
  \end{pmatrix}
  \begin{pmatrix}
     \mlambda+\vbeta\vbeta^T & \alpha\vbeta
  \\
     \alpha\vbeta^T & \alpha^2
  \end{pmatrix}
  \begin{pmatrix}
     \mU & \vn
  \end{pmatrix}^T
\enspace.
\end{equation}
Let $\mQ\in\RR^{(r+1)\times(r+1)}$ denote the middle matrix in the factorization of \Eq{mgamma:factor}:
\begin{equation}
\label{eq:mQ}
  \mQ\coloneqq
  \begin{pmatrix}
     \mlambda+\vbeta\vbeta^T & \alpha\vbeta
  \\
     \alpha\vbeta^T & \alpha^2
  \end{pmatrix}
\enspace.
\end{equation}
This matrix is a representation of $\mgamma$ with respect to the basis
$\set{\vu_1,\dotsc,\vu_r,\vn}$. Since $\vq\not\in\Range(\mV)$, the rank
of $\mgamma$ and that of $\mQ$ is $r+1$. Thus, $\mQ$ is invertible and
\begin{equation}
\label{eq:inv:factor}
  \mgamma^\dagger
         =
  \begin{pmatrix}
     \mU & \vn
  \end{pmatrix}
  \mQ^{-1}
  \begin{pmatrix}
     \mU & \vn
  \end{pmatrix}^T
\enspace.
\end{equation}
To obtain $\mQ^{-1}$, we use the following identity (see~\cite{petersen2008matrix}):
\begin{equation}
\label{eq:inv}
  \begin{pmatrix}
  \mA_{11} & \mA_{12}
\\
  \mA_{21} & \mA_{22}
  \end{pmatrix}^{-1}
=
  \begin{pmatrix}
  \mM^{-1}
  & -\mM^{-1}\mA_{12}\mA_{22}^{-1}
\\
  -\mA_{22}^{-1}\mA_{21}\mM^{-1}
  & \mA_{22}^{-1}\mA_{21}\mM^{-1}\mA_{12}\mA_{22}^{-1} + \mA_{22}^{-1}
  \end{pmatrix}
\enspace,
\end{equation}
where $\mM\coloneqq\mA_{11}-\mA_{12}\mA_{22}^{-1}\mA_{21}$ is
the Schur complement of $\mA_{22}$. The identity of \Eq{inv} holds
whenever $\mA_{22}$ and its Schur complement $\mM$ are both invertible.
In the block representation of \Eq{mQ}, we have $\mA_{22}=\alpha^2\ne 0$ and
\[
  \mM = (\mlambda+\vbeta\vbeta^T) - (\alpha\vbeta)\alpha^{-2}(\alpha\vbeta^T)
      = \mlambda
\enspace,
\]
so \Eq{inv} can be applied to obtain $\mQ^{-1}$:
\begin{align}
\notag
  \mQ^{-1}
=
  \begin{pmatrix}
     \mlambda+\vbeta\vbeta^T & \alpha\vbeta
  \\
     \alpha\vbeta^T & \alpha^2
  \end{pmatrix}^{-1}
&=
  \begin{pmatrix}
     \mlambda^{-1}
     &
    -\mlambda^{-1}(\alpha\vbeta)\alpha^{-2}
\\
    -\alpha^{-2}(\alpha\vbeta^T)\mlambda^{-1}
    &
    \alpha^{-2}(\alpha\vbeta^T)\mlambda^{-1}(\alpha\vbeta)\alpha^{-2}
    +\alpha^{-2}
  \end{pmatrix}
\\
\label{eq:mQ:inv}
&=
  \begin{pmatrix}
     \mlambda^{-1}
     &
    -\alpha^{-1}\mlambda^{-1}\vbeta
\\
    -\alpha^{-1}\vbeta^T\mlambda^{-1}
    &
    \alpha^{-2}(1+\vbeta^T\mlambda^{-1}\vbeta)
  \end{pmatrix}
\enspace.
\end{align}

Next, we will evaluate $\mgamma^\dagger\vq$, using the factorizations in
Eqs.~\eqref{eq:inv:factor} and~\eqref{eq:vq}, and substituting \Eq{mQ:inv} for $\mQ^{-1}$:
\begin{align*}
  \mgamma^\dagger\vq
&=
  \begin{pmatrix}
     \mU & \vn
  \end{pmatrix}
  \mQ^{-1}
  \begin{pmatrix}
     \mU & \vn
  \end{pmatrix}^T
  \begin{pmatrix}
     \mU & \vn
  \end{pmatrix}
  \begin{pmatrix}
     \vbeta \\ \alpha
  \end{pmatrix}
\\
&=
  \begin{pmatrix}
     \mU & \vn
  \end{pmatrix}
  \mQ^{-1}
  \begin{pmatrix}
     \vbeta \\ \alpha
  \end{pmatrix}
\\
&=
  \begin{pmatrix}
     \mU & \vn
  \end{pmatrix}
  \begin{pmatrix}
     \mlambda^{-1}\vbeta
     -\mlambda^{-1}\vbeta
     \\
    -\alpha^{-1}\vbeta^T\mlambda^{-1}\vbeta
    +
    \alpha^{-1}(1+\vbeta^T\mlambda^{-1}\vbeta)
  \end{pmatrix}
\\
&=
  \begin{pmatrix}
     \mU & \vn
  \end{pmatrix}
  \begin{pmatrix}
     \zero
     \\
    \alpha^{-1}
  \end{pmatrix}
\\
&= \alpha^{-1}\vn
\enspace.
\end{align*}

To finish the proof, we consider any $\vs\in\supp\mu(\cdot\given x)$ and consider the decomposition of $\one_\vs$ in the basis $\set{\vu_1,\dotsc,\vu_r,\vn}$. First, note that $(\one_\vs-\vq)\perp\Null(\mV)$ since
\[
  \Null(\mV)=
  \bigSet{
    \vv:\:\EE_{\vs\sim\mu(\cdot\given x)}
    \bigBracks{
        \bigParens{(\one_\vs-\vq)^T\vv}^2
    }=0
  }
  =
  \bigSet{
    \vv:\:(\one_\vs-\vq)^T\vv=0
    \text{ for all }\vs\in\supp\mu(\cdot\given x)}
  \;.
\]
Thus, $(\one_\vs-\vq)\in\Range(\mV)$. Therefore, we obtain
\[
  \vq^T\mgammalogger^\dagger\one_\vs
  =
  \alpha^{-1}\vn^T\one_\vs
  =
  \alpha^{-1}\vn^T(\one_\vs-\vq)
  +
  \alpha^{-1}\vn^T\vq
  =0+\alpha^{-1}\alpha=1
\enspace,
\]
where the third equality follows because $(\one_\vs-\vq)\perp\vn$ and
the decomposition in \Eq{vq} shows that $\vn^T\vq=\alpha$.
\end{proof}

\begin{claim}
\label{claim:pi:eq:mu}
If $\pi=\mu$ then $\sigma^2=\rho=1$ and
$\hVpi(\pi)=\hVpi(\mu)=\frac1n\sum_{i=1}^n r_i$.
\end{claim}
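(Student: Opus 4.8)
The plan is to read off all three assertions from \Claim{qmux}, which has already established the identity $\q{\mu}{x}^T\mgammalogger^\dagger\one_\vs = 1$ for every slate $\vs$ in the support of $\mu(\cdot\given x)$. Once that identity is in hand, each of the three statements---the simplified form of $\hVpi$, the value of $\rho$, and the value of $\sigma^2$---is a one-line consequence. Consequently the substantive obstacle here has already been dispatched in \Claim{qmux} (namely, deriving the explicit expression $\mgammalogger^\dagger\q{\mu}{x} = \alpha^{-1}\vn$ and hence the identity above); the only thing that remains is bookkeeping about which slates actually lie in $\supp\mu(\cdot\given x)$, which is exactly the regime in which \Claim{qmux} applies.

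\emph{The estimator.} From \Eq{hVpi}, $\hVpi(\mu) = \frac1n\sum_{i=1}^n r_i\cdot\q{\mu}{x_i}^T\mgammaparam{\mu}{x_i}^\dagger\one_{\vs_i}$. Each logged example is drawn from $\mu$, so $\mu(\vs_i\given x_i)>0$ and thus $\vs_i\in\supp\mu(\cdot\given x_i)$; applying \Claim{qmux} with $x=x_i$ and $\vs=\vs_i$ shows that the scalar multiplying $r_i$ equals $1$ for every $i$. Hence $\hVpi(\mu) = \frac1n\sum_{i=1}^n r_i$, which also equals $\hVpi(\pi)$ since $\pi=\mu$.

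\emph{The quantities $\rho$ and $\sigma^2$.} By \Eq{sigma2:def}, $\rho = \adjustlimits\sup_x\sup_{\vs:\mu(\vs\given x)>0}\Abs{\q{\mu}{x}^T\mgammalogger^\dagger\one_\vs}$, and \Claim{qmux} says each term inside the supremum is exactly $1$, so $\rho=1$. For $\sigma^2$ I would expand one of the two factors $\q{\mu}{x}$ as a convex combination of support indicators, $\q{\mu}{x} = \EE_\mu[\one_\vs\given x] = \sum_{\vs\in\supp\mu(\cdot\given x)}\mu(\vs\given x)\,\one_\vs$, and then use linearity together with \Claim{qmux}:
\begin{align*}
  \q{\mu}{x}^T\mgammalogger^\dagger\q{\mu}{x}
  &= \sum_{\vs\in\supp\mu(\cdot\given x)}\mu(\vs\given x)\,\q{\mu}{x}^T\mgammalogger^\dagger\one_\vs \\
  &= \sum_{\vs\in\supp\mu(\cdot\given x)}\mu(\vs\given x)\cdot 1 = 1 .
\end{align*}
Taking the expectation over $x\sim D$ gives $\sigma^2 = \EE_{x\sim D}[1] = 1$. (One could alternatively invoke $\sigma^2\le\rho$ from \Prop{brho:bound}, but the computation above is self-contained and yields the equality directly.) Together these three observations prove the claim, and since the heavy lifting was \Claim{qmux}, I anticipate no further difficulty.
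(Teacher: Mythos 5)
Your proposal is correct and follows essentially the same route as the paper: all three assertions are read off from \Claim{qmux}, with $\sigma^2=1$ obtained by writing $\q{\mu}{x}^T\mgammalogger^\dagger\q{\mu}{x}$ as an expectation (equivalently, a convex combination) of the terms $\q{\mu}{x}^T\mgammalogger^\dagger\one_\vs$ over $\vs\in\supp\mu(\cdot\given x)$ and then taking the expectation over $x\sim D$. No gaps.
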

\begin{proof}
From \Claim{qmux}
\[
  \q{\mu}{x}^T\mgammalogger^\dagger
  \q{\mu}{x}
  =
  \EE_{\vs\sim\mu(\cdot\given x)}[
  \q{\mu}{x}^T\mgammalogger^\dagger
  \one_\vs]
  =
  1
\enspace.
\]
Taking expectation over $x$ then yields $\sigma^2=1$. Equality $\rho=1$
follows immediately from plugging \Claim{qmux} into the definition of $\rho$.
The final statement of \Claim{pi:eq:mu} follows by applying \Claim{qmux}
to a single term of $\hVpi(\mu)$:
\begin{equation}
\tag*{\qed}
  \q{\mu}{x_i}^T\mgammaparam{\mu}{x_i}^\dagger\,r_i\one_{\vs_i}
  =r_i
\enspace.
\end{equation}
\renewcommand{\qed}{}
\end{proof}

\section{A product slate space under a product logging distribution}
\label{app:product_bounds}

\begin{proposition}
\label{prop:product}
Consider the product slate space where $S(x)=A_1(x)\times\cdots\times A_\ell(x)$
and assume that the logging policy picks any $\vs\in S(x)$ with non-zero probability
and factorizes across the slots as $\mu(\vs\given x)=\prod_j\mu(s_j\given x)$. For any
policy $\pi$, any $\vs\in S(x)$, and any $r\in[-1,1]$ we then have
\begin{equation}
\label{eq:tVpi:product}
  \qtarget^T\mgammalogger^\dagger r \one_\vs
  =
  r\cdot\Bracks{
  \sum_{j=1}^\ell \frac{\pi(s_j\given x)}{\mu(s_j\given x)}
  -\ell+1
  }
\enspace.
\end{equation}
\end{proposition}
\begin{proof}
The proof uses \Claim{qmux} and the identities introduced in its proof. As in the proof of \Claim{qmux}, write $\vq$ and $\mgamma$ instead of $\q{\mu}{x}$ and $\mgammalogger$,
and let $\mV\coloneqq\EE_{\vs\sim\mu(\cdot\given x)}\bigBracks{
  (\one_\vs-\vq)(\one_\vs-\vq)^T
  }$.
Thus, $\mgamma=\mV+\vq\vq^T$. It suffices to show that for any $\vs,\vs'\in S(x)$,
\begin{equation}
\label{eq:ss':product}
  \one_{\vs'}^T\mgamma^\dagger\one_\vs
  =
  \sum_{j=1}^\ell \frac{\one\braces{s'_j=s_j}}{\mu(s_j\given x)}
  -\ell+1
\enspace.
\end{equation}
Pick $\vs,\vs'\in S(x)=\supp \mu(\cdot\given x)$. By \Claim{qmux}, we have
$\vq^T\mgamma^\dagger\one_\vs=\one_{\vs'}\mgamma^\dagger\vq=\vq^T\mgamma^\dagger\vq=1$, so
\begin{align}
\notag
  \one_{\vs'}^T\mgamma^\dagger\one_\vs
&
  =
  \Parens{\one_{\vs'}-\vq}^T\mgamma^\dagger(\one_\vs-\vq)
  +\underbrace{\vq^T\mgamma^\dagger\one_\vs}_{=1}
  +\underbrace{\one_{\vs'}\mgamma^\dagger\vq}_{=1}
  -\underbrace{\vq^T\mgamma^\dagger\vq}_{=1}
\\
\label{eq:prod:1}
& =
  \Parens{\one_{\vs'}-\vq}^T\mgamma^\dagger(\one_\vs-\vq)
  +
  1
\enspace.
\end{align}
Similar to the reasoning at the end of the proof of \Claim{qmux}, we know that $(\one_\vs-\vq)\in\Range(\mV)$ and
$(\one_{\vs'}-\vq)\in\Range(\mV)$. The factorization of $\mgamma^\dagger$ in \Eqs{inv:factor}{mQ:inv} therefore yields
\begin{align}
\notag
  \Parens{\one_{\vs'}-\vq}^T\mgamma^\dagger(\one_\vs-\vq)
&=
  \Parens{\one_{\vs'}-\vq}^T\Parens{\mU\mlambda^{-1}\mU^T}(\one_\vs-\vq)
\\
&=
\label{eq:prod:2}
  \Parens{\one_{\vs'}-\vq}^T\mV^\dagger(\one_\vs-\vq)
\enspace,
\end{align}
where the last step follows from the fact that $\mV=\mU\mlambda\mU^T$, and so $\mV^\dagger=\mU\mlambda^{-1}\mU^T$.

To finish the proof, we study the structure of $\mV$ and $\mV^\dagger$. First, let $\vq_j$ denote the block of $\vq$ corresponding
to the $j$th slot. Its $a$th entry corresponds to the probability $\mu(s_j=a\given x)$. Since the values $s_j$ are conditionally
independent, conditioned on $x$, the covariance matrix $\mV$ takes form $\mV=\diag_{j=1,\dotsc,\ell}\mV_j$, where $\mV_j=\bigParens{\diag_{a\in A_j(x)} q_{j,a}}-\vq_j\vq_j^T$ is the covariance matrix of the multinomial distribution described by $\vq_j$. Thus,
\begin{equation}
\label{eq:prod:3}
  \mV^\dagger=\diag_{j=1,\dotsc,\ell}\mV^\dagger_j
\enspace.
\end{equation}
It can be directly verified that the pseudoinverse of $\mV_j$ takes form
\begin{equation}
\label{eq:multi:pinv}
  \mV_j^\dagger=\mP_{j}\bigParens{\diag_{a\in A_j(x)} q_{j,a}^{-1}}\mP_j
\enspace,
\end{equation}
where $\mP_j\coloneqq\mI_{j}-\one_{j}\one_{j}^T/m_j$, and $\mI_{j}$ is the $m_j\times m_j$ identity matrix,
and $\one_{j}$ the $m_j$-dimensional all-ones vector. To verify that \Eq{multi:pinv} holds, first note that $\mP_j$ is the projection
matrix on $\Range(\mV_j)$. Then set $\mV_j'\coloneqq\mP_{j}\bigParens{\diag_{a\in A_j(x)} q_{j,a}^{-1}}\mP_j$, and
directly verify that $\mV'_j\mV_j=\mP_j$ and $\mV_j\mV'_j=\mP_j$. The first identity can be verified as follows:
\[
  \mV'_j\mV_j = \mP_{j}\bigParens{\diag_{a\in A_j(x)} q_{j,a}^{-1}}\mP_j\mV_j
  =\mP_{j}\bigParens{\diag_{a\in A_j(x)} q_{j,a}^{-1}}\mV_j
  =\mP_{j}\Parens{\mI_j-\one_{j}\vq_j^T}=\mP_j
\enspace.
\]
The second identity follows similarly.

Combining Eqs.~\eqref{eq:prod:2}, \eqref{eq:prod:3} and \eqref{eq:multi:pinv} yields
\begin{align*}
\Parens{\one_{\vs'}-\vq}^T\mV^\dagger(\one_\vs-\vq)
&=
\sum_{j=1}^\ell \bigParens{\one_{s'_j}-\vq_j}^T\mV^\dagger_j(\one_{s_j}-\vq_j)
\\
&=
\sum_{j=1}^\ell \bigParens{\one_{s'_j}-\vq_j}^T\mP_{j}\bigParens{\diag_{a\in A_j(x)} q_{j,a}^{-1}}\mP_j(\one_{s_j}-\vq_j)
\\
&=
\sum_{j=1}^\ell \bigParens{\one_{s'_j}-\vq_j}^T\bigParens{\diag_{a\in A_j(x)} q_{j,a}^{-1}}(\one_{s_j}-\vq_j)
\\
&=
\sum_{j=1}^\ell \bigParens{\one\braces{s'_j=s_j}q_{j,s_j}^{-1}-1}
\\
&=
\sum_{j=1}^\ell \Parens{\frac{\one\braces{s'_j=s_j}}{\mu(s_j\given x)}-1}
\enspace.
\end{align*}
Plugging this back into \Eq{prod:1} then proves \Eq{ss':product}.
\end{proof}

\section{Proof of Corollary~\ref{prop:kappa}}
\label{app:bounds}

For a given logging policy $\mu$ and context $x$, let
\[
  \brho_{\mu,x}\coloneqq\sup_{\vs\in\supp\mu(\cdot\given x)} \one_\vs^T\mgammalogger^\dagger\one_\vs^{\vphantom{T}}
\enspace.
\]
This quantity can be viewed as a norm of $\mgammalogger^\dagger$ with respect to the set of slates chosen by $\mu$ with non-zero probability. It can be used to bound $\sigma^2$ and $\rho$, and thus to bound an error of $\hVpi$:
\begin{proposition}
\label{prop:brho:bound}
For any logging policy $\mu$ and target policy $\pi$ that is absolutely continuous with respect to $\mu$, we have
\[
  \sigma^2\le\rho\le\sup_x\brho_{\mu,x}
\enspace.
\]
\end{proposition}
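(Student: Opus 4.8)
The plan is to prove the two inequalities $\sigma^2\le\rho$ and $\rho\le\sup_x\brho_{\mu,x}$ separately, using two basic facts throughout. First, $\mgammalogger$ is symmetric positive semidefinite by \Claim{mgamma} (it is a matrix of second moments), hence so is $\mgammalogger^\dagger$; in particular $\vv^T\mgammalogger^\dagger\vv\ge 0$ for every $\vv$, and the bilinear form $(\vu,\vv)\mapsto\vu^T\mgammalogger^\dagger\vv$ obeys Cauchy--Schwarz. Second, $\q{\pi}{x}=\EE_{\vs'\sim\pi(\cdot\given x)}[\one_{\vs'}]$ is a convex combination of slate indicators $\one_{\vs'}$ over $\vs'\in\supp\pi(\cdot\given x)$, and by ABS every such $\vs'$ also lies in $\supp\mu(\cdot\given x)$.

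For $\sigma^2\le\rho$, I would fix $x$ and expand one of the two copies of $\q{\pi}{x}$:
\[
  \q{\pi}{x}^T\mgammalogger^\dagger\q{\pi}{x}
  =\EE_{\vs'\sim\pi(\cdot\given x)}\bigBracks{\q{\pi}{x}^T\mgammalogger^\dagger\one_{\vs'}}
  \le\sup_{\vs':\mu(\vs'\given x)>0}\Abs{\q{\pi}{x}^T\mgammalogger^\dagger\one_{\vs'}},
\]
since a convex combination of real numbers is at most their supremum, this supremum is at most the supremum of absolute values, and enlarging the index set from $\supp\pi(\cdot\given x)$ to $\supp\mu(\cdot\given x)$ only increases it. Taking $\sup$ over $x$ bounds the right-hand side by $\rho$, and taking $\EE_{x\sim D}$ of the left-hand side then yields $\sigma^2\le\rho$.

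For $\rho\le\sup_x\brho_{\mu,x}$, I would fix $x$ together with a slate $\vs\in\supp\mu(\cdot\given x)$ and again write $\q{\pi}{x}$ as a convex combination, obtaining
\[
  \Abs{\q{\pi}{x}^T\mgammalogger^\dagger\one_\vs}
  \le\EE_{\vs'\sim\pi(\cdot\given x)}\Abs{\one_{\vs'}^T\mgammalogger^\dagger\one_\vs}
  \le\sup_{\vs',\vs''\in\supp\mu(\cdot\given x)}\Abs{\one_{\vs'}^T\mgammalogger^\dagger\one_{\vs''}}.
\]
The key step is then Cauchy--Schwarz in the positive semidefinite form $\mgammalogger^\dagger$, followed by the elementary inequality $\sqrt{ab}\le\max\set{a,b}$ for $a,b\ge 0$:
\[
  \Abs{\one_{\vs'}^T\mgammalogger^\dagger\one_{\vs''}}
  \le\sqrt{\one_{\vs'}^T\mgammalogger^\dagger\one_{\vs'}}\cdot\sqrt{\one_{\vs''}^T\mgammalogger^\dagger\one_{\vs''}}
  \le\max\set{\one_{\vs'}^T\mgammalogger^\dagger\one_{\vs'},\,\one_{\vs''}^T\mgammalogger^\dagger\one_{\vs''}}.
\]
Each term on the right is at most $\brho_{\mu,x}$ by definition, so taking suprema over $\vs,\vs',\vs''$ and over $x$ gives $\rho\le\sup_x\brho_{\mu,x}$.

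The argument is short, and the only point needing care is the bookkeeping around support containment: ABS gives $\supp\pi(\cdot\given x)\subseteq\supp\mu(\cdot\given x)$ only with probability one over $x\sim D$, which is exactly what the outer $\EE_{x\sim D}$ in $\sigma^2$ needs, and for the outer suprema in $\rho$ and $\brho_{\mu,x}$ one restricts to the full-measure set on which ABS holds; then every slate indicator appearing in the bounds can legitimately be compared against $\brho_{\mu,x}$. The Cauchy--Schwarz and $\sqrt{ab}\le\max$ steps are entirely routine, so I do not expect a genuine obstacle here.
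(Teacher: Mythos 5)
Your proof is correct and follows essentially the same route as the paper: expand one copy of $\q{\pi}{x}$ as a convex combination of slate indicators (using ABS to land in $\supp\mu(\cdot\given x)$) for $\sigma^2\le\rho$, and bound the off-diagonal terms $\abs{\one_{\vs'}^T\mgammalogger^\dagger\one_{\vs''}}$ by the diagonal ones for $\rho\le\sup_x\brho_{\mu,x}$. The only cosmetic difference is that you use Cauchy--Schwarz followed by $\sqrt{ab}\le\max\set{a,b}$ where the paper uses $z\one_{\vs'}^T\mgammalogger^\dagger\one_\vs\le\frac{1}{2}\bigParens{\one_{\vs}^T\mgammalogger^\dagger\one_\vs+\one_{\vs'}^T\mgammalogger^\dagger\one_{\vs'}}$; these are interchangeable consequences of positive semidefiniteness.
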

\begin{proof}
Recall that
\[
   \sigma^2
   =
   \EE_{x\sim D}\Bracks{\q{\pi}{x}^T \mgammalogger^\dagger\q{\pi}{x}^{\vphantom{T}}}
\;,\quad
\textstyle
   \rho
   =
   \adjustlimits\sup_x\sup_{\vs\in\supp\mu(\cdot\given x)}
   \Abs{\q{\pi}{x}^T \mgammalogger^\dagger\one_\vs}
\enspace.
\]
To see that $\sigma^2\le\rho$ note that
\[
  \q{\pi}{x}^T \mgammalogger^\dagger\q{\pi}{x}^{\vphantom{T}}
  =
  \EE_{\vs\sim\pi(\cdot\given x)}
  \bigBracks{
  \q{\pi}{x}^T \mgammalogger^\dagger\one_\vs
  }
  \le
  \rho
\]
where the last inequality follows by the absolute continuity of $\pi$ with respect to $\mu$. It remains to show that $\rho\le\sup_x\brho_{\mu,x}$.

First, by positive semi-definiteness of $\mgammalogger^\dagger$ and
from the definition of $\brho_{\mu,x}$,
we have that for any slates $\vs,\vs'\in\supp\mu(\cdot\given x)$ and any $z\in\set{-1,1}$
\[
  z\one_{\vs'}^T\mgammalogger^\dagger\one_\vs^{\vphantom{T}}
  \le
  \frac{\one_{\vs}^T\mgammalogger^\dagger\one_\vs^{\vphantom{T}}
      + \one_{\vs'}^T\mgammalogger^\dagger\one_{\vs'}^{\vphantom{T}}}
       {2}
  \le
  \max\set{\one_{\vs}^T\mgammalogger^\dagger\one_\vs^{\vphantom{T}},\,
           \one_{\vs'}^T\mgammalogger^\dagger\one_{\vs'}^{\vphantom{T}}}
  \le
  \brho_{\mu,x}
\enspace.
\]
Therefore, for any $\pi$ absolutely continuous with respect to $\mu$ and any $\vs\in\supp\mu(\cdot\given x)$, we have
\[
  \Abs{\q{\pi}{x}^T \mgammalogger^\dagger\one_\vs}
= \max_{z\in\set{-1,1}}
  \EE_{\vs'\sim\pi(\cdot\given x)}
     \bigBracks{z\one_{\vs'}^T \mgammalogger^\dagger\one_\vs^{\vphantom{T}}}
\le
  \brho_{\mu,x}
\enspace.
\]
Taking a supremum over $x$ and $\vs\in\supp\mu(\cdot\given x)$, we obtain
$\rho\le\sup_x\brho_{\mu,x}$.
\end{proof}
We next derive bounds on $\brho_{\mu,x}$ for uniformly-random policies
in the ranking example.  Then we prove a
translation theorem, which allows translating the bound for uniform
distributions into a bound for the $\eps$-uniform
distributions. Finally, we put these results together to prove
Corollary~\ref{prop:kappa}.

\subsection{Uniform logging distribution over rankings}
\label{app:bounds:uniform}

Let $\one_j\in\RR^{\ell m}$ be the vector that is all-ones on the actions in the $j$-th position and zeros elsewhere. Similarly,
let $\one_a\in\RR^{\ell m}$ be the vector that is all-ones on the action $a$ in all positions and zeros elsewhere. Finally,
let $\one\in\RR^{\ell m}$ be the all-ones vector. We also use $\mI_j=\diag(\one_j)$ to denote the diagonal matrix with all-ones
on the actions in the $j$-th position and zeros elsewhere.

\begin{proposition}
\label{prop:ranking:spectrum}
Consider the ranking setting where for each $x$ there is a set $A(x)$ such that $A_j(x)=A(x)$ and where all slates $\vs \in A(x)^\ell$ without repetitions
are legal.
Let $\unif$ denote the uniform logging policy over these slates.
If $\slatesize<\poolsize$, then $\brho_{\unif,x}=m\ell-\ell+1$ and
\[
  \mgamma_{\unif,x}^\dagger
  =
  \Parens{\frac{1}{\ell^2}-\frac{m-1}{m(m-\ell)}}\cdot\one\one^T
  +(m-1)\mI
  -\frac{m-1}{m}\sum_j\one_j\one_j^T
  +\frac{m-1}{m-\ell}\sum_a\one_a\one_a^T
\enspace,
\]
and for $\ell=m$, we have $\brho_{\unif,x}=m^2-2m+2$ and
\[
  \mgamma_{\unif,x}^\dagger
  =
  \frac{1}{m}\cdot\one\one^T
  +(m-1)\mI  - \frac{m-1}{m}\sum_j\one_j\one_j^T-\frac{m-1}{m}\sum_a\one_a\one_a^T
\enspace.
\]
For $\ell=m$, we have for any policy $\pi$, any $\vs\in S(x)$, and any $r\in[-1,1]$ that
\begin{equation}
\label{eq:tVpi:ranking}
  \qtarget^T\mgammaparam{\unif}{x}^\dagger r \one_\vs
  =r\cdot\Bracks{
  \sum_{j=1}^\ell \frac{\pi(s_j\given x)}{1/(m-1)}-m+2
  }
\enspace.
\end{equation}
\end{proposition}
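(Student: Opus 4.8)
The plan is to mirror the proof of \Prop{cartesian:spectrum}: express $\mgammaparam{\unif}{x}$ (abbreviated $\mgamma$) in terms of $\mI$, $\one\one^T$, $\sum_j\one_j\one_j^T$ and $\sum_a\one_a\one_a^T$, build an explicit orthogonal eigendecomposition, read off $\mgamma^\dagger$ by reciprocating the nonzero eigenvalues, and finally evaluate $\one_{\vs'}^T\mgamma^\dagger\one_\vs$ on pairs of valid rankings. The uniform ranking policy has $\unif(s_j=a\given x)=1/m$ and, for $j\ne k$, $\unif(s_j=a,\,s_k=a'\given x)=1/(m(m-1))$ if $a\ne a'$ and $0$ if $a=a'$. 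Substituting these marginals into the definition of $\mgamma$ and using inclusion--exclusion to handle the ``$j\ne k,\ a\ne a'$'' pattern yields
\[
  \mgamma = \frac{1}{m-1}\,\mI
  + \frac{1}{m(m-1)}\,\one\one^T
  - \frac{1}{m(m-1)}\sum_j\one_j\one_j^T
  - \frac{1}{m(m-1)}\sum_a\one_a\one_a^T
\enspace.
\]

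Next I would exhibit the eigenspaces. Decompose $\RR^{\ell m}$ into four mutually orthogonal subspaces: $S_1=\mathrm{span}\set{\one}$; $S_2$, the vectors whose coordinates depend only on the action and sum to zero ($\dim=m-1$); $S_3$, the vectors whose coordinates depend only on the slot and sum to zero ($\dim=\ell-1$); and $S_4=(S_1\oplus S_2\oplus S_3)^\perp$ ($\dim=(\ell-1)(m-1)$). Using that $\bigParens{\sum_j\one_j\one_j^T}\vv$ replaces $\vv$ by its slot-sums (spread uniformly over actions) and $\bigParens{\sum_a\one_a\one_a^T}\vv$ replaces it by its action-sums, a direct calculation shows $\mgamma$ acts as the scalar $\ell/m$ on $S_1$, $\frac{m-\ell}{m(m-1)}$ on $S_2$, $0$ on $S_3$, and $\frac{1}{m-1}$ on $S_4$. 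Since $\one_j^T\one_k=m\,\one\set{j=k}$ and $\one_a^T\one_b=\ell\,\one\set{a=b}$, the span of the $\one_j$ equals $S_1\oplus S_3$ and the span of the $\one_a$ equals $S_1\oplus S_2$, with orthogonal projections $\frac1m\sum_j\one_j\one_j^T$ and $\frac1\ell\sum_a\one_a\one_a^T$ respectively; combining these with $\frac{1}{\ell m}\one\one^T$ (the projection onto $S_1$) and $\mI$ by inclusion--exclusion expresses the projections onto $S_1$, $S_2$, $S_4$ in closed form. Reciprocating the nonzero eigenvalues then gives $\mgamma^\dagger$. When $\ell<m$ there are three nonzero eigenvalues, the kernel is $S_3$, and $\mathrm{rank}(\mgamma)=1+(m-1)+(\ell-1)(m-1)=\ell m-\ell+1$; when $\ell=m$ the $S_2$-eigenvalue $\frac{m-\ell}{m(m-1)}$ also vanishes, the kernel becomes $S_2\oplus S_3$, and $\mathrm{rank}(\mgamma)=m^2-2m+2$. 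Substituting the projection formulas and collecting terms produces the two displayed expressions for $\mgammaparam{\unif}{x}^\dagger$.

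Finally, for \Eq{tVpi:ranking} I would evaluate $\one_{\vs'}^T\mgamma^\dagger\one_\vs$ for valid rankings $\vs,\vs'$, using $\one^T\one_\vs=\ell$, $\one_j^T\one_\vs=1$, and $\one_a^T\one_\vs=\one\set{a\in\vs}$, so that $\one_{\vs'}^T\one\one^T\one_\vs=\ell^2$, $\one_{\vs'}^T\mI\one_\vs=\card{\set{j:s'_j=s_j}}$, $\sum_j\one_{\vs'}^T\one_j\one_j^T\one_\vs=\ell$, and $\sum_a\one_{\vs'}^T\one_a\one_a^T\one_\vs$ equals the number of actions appearing in both $\vs$ and $\vs'$. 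For $\ell=m$ both $\vs$ and $\vs'$ are permutations, so that last quantity is $m$, and the expression collapses to $(m-1)\card{\set{j:s'_j=s_j}}-m+2$; taking the expectation over $\vs'\sim\pi(\cdot\given x)$ (which turns $\one\set{s'_j=s_j}$ into $\pi(s_j\given x)$) and multiplying by $r$ yields \Eq{tVpi:ranking}, while setting $\vs'=\vs$ gives $\brho_{\unif,x}=m(m-1)-m+2=m^2-2m+2$. For $\ell<m$ the value $\brho_{\unif,x}=m\ell-\ell+1$ follows either from the same evaluation at $\vs'=\vs$ or, more slickly, from $\brho_{\unif,x}=\EE_{\vs\sim\unif}\bigBracks{\one_\vs^T\mgamma^\dagger\one_\vs}=\mathrm{tr}(\mgamma^\dagger\mgamma)=\mathrm{rank}(\mgamma)$, using that by the symmetry of $\unif$ the quadratic form $\one_\vs^T\mgamma^\dagger\one_\vs$ is independent of $\vs$. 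The main obstacle is bookkeeping: verifying the four eigenvalues and then re-collecting the projections into the \emph{exact} stated expressions is routine but error-prone, and one must keep the $\ell<m$ versus $\ell=m$ split straight, since it corresponds precisely to whether the eigenvalue $\frac{m-\ell}{m(m-1)}$ on $S_2$ is zero.
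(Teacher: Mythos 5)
Your plan is correct and follows essentially the same route as the paper: both express $m(m-1)\mgammaparam{\unif}{x}$ in the basis $\{\mI,\one\one^T,\sum_j\one_j\one_j^T,\sum_a\one_a\one_a^T\}$, exhibit the same orthogonal eigendecomposition (your subspaces $S_1,S_2,S_3,S_4$ are exactly the ranges of the paper's Kronecker-product projections, with matching eigenvalues $\ell/m$, $\tfrac{m-\ell}{m(m-1)}$, $0$, $\tfrac{1}{m-1}$), invert the nonzero eigenvalues, and evaluate $\one_{\vs'}^T\mgamma^\dagger\one_\vs$ using $\one^T\one_\vs=\ell$ and $\one_j^T\one_\vs=1$. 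The only genuinely new touch is your observation that $\brho_{\unif,x}=\mathrm{tr}(\mgamma^\dagger\mgamma)=\mathrm{rank}(\mgamma)$ by symmetry, which is a nice sanity check on the direct computation the paper performs.
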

\begin{proof}
Throughout the proof we will write $\mgamma$ instead of the more verbose $\mgammaparam{\unif}{x}$.
Note that for ranking and the uniform distribution we have
\[
  \mathnormal{\Gamma}(j,a;k,a')=
  \begin{cases}
    \frac{1}{m}&\text{if $j=k$ and $a=a'$}
\\
    \frac{1}{m(m-1)}&\text{if $j\ne k$ and $a\ne a'$}
\\
    0&\text{otherwise.}
  \end{cases}
\]
Thus, for any $\vz$
\begin{align}
\notag
\vz^T\mgamma\vz
 &= \sum_{j,a}\frac{z_{j,\action}^2}{\poolsize}
   + \frac{1}{\poolsize(\poolsize-1)}\sum_{j\ne k, a\ne a'}z_{j,\action}z_{k,\action'}
\\
\notag
 &= \frac{1}{\poolsize} \norm{\vz}_2^2
   + \frac{1}{\poolsize(\poolsize-1)}\Parens{
     (\vz^T\one)^2 - \sum_{j} (\vz^T\one_j)^2 - \sum_{\action} (\vz^T\one_\action)^2 + \norm{\vz}_2^2
   }
\\
\label{eq:rank:decomp:1}
 &=  \frac{1}{\poolsize(\poolsize-1)}\Parens{
     (\vz^T\one)^2 - \sum_{j} (\vz^T\one_j)^2 - \sum_{\action} (\vz^T\one_\action)^2 + m\norm{\vz}_2^2
   }
\enspace.
\end{align}
Let $\oneslots\in\RR^\ell$ and $\oneacts\in\RR^m$ be all-ones vectors in the respective spaces
and $\mIslots\in\RR^{\ell\times\ell}$ and $\mIacts\in\RR^{m\times m}$ be identity matrices in the respective spaces.
We can rewrite the quadratic form described by $\mgamma$ as
\begin{align}
\notag
&
 m(m-1)\mgamma
 =\one\one^T-\sum_j\one_j\one_j^T-\sum_a\one_a\one_a^T+m\mI
\\
\notag
&\quad{}
=(\oneslots\oneslotsT)\otimes(\oneacts\oneactsT)
  -\mIslots\otimes(\oneacts\oneactsT)
  -(\oneslots\oneslotsT)\otimes\mIacts
  +m\cdot\mIslots\otimes\mIacts
\notag
\\[6pt]
&\quad{}
=
\ell m\cdot\frac{\oneslots\oneslotsT}{\ell}\otimes\frac{\oneacts\oneactsT}{m}
  -m\cdot\mIslots\otimes\frac{\oneacts\oneactsT}{m}
  -\ell\cdot\frac{\oneslots\oneslotsT}{\ell}\otimes\mIacts
  +m\cdot\mIslots\otimes\mIacts
\notag
\\[6pt]
&\quad{}
=\ell (m-1)\cdot\frac{\oneslots\oneslotsT}{\ell}\otimes\frac{\oneacts\oneactsT}{m}
  -m\cdot\mIslots\otimes\Parens{\frac{\oneacts\oneactsT}{m}-\mIacts}
  -\ell\cdot\frac{\oneslots\oneslotsT}{\ell}\otimes\Parens{\mIacts-\frac{\oneacts\oneactsT}{m}}
\notag
\\[6pt]
\notag
&\quad{}
=
\ell (m-1)\cdot\frac{\oneslots\oneslotsT}{\ell}\otimes\frac{\oneacts\oneactsT}{m}
\\
\label{eq:decomp:kron}
&\qquad\qquad{}
  + m\cdot\Parens{\mIslots-\frac{\oneslots\oneslotsT}{\ell}} \otimes
    \Parens{\mIacts-\frac{\oneacts\oneactsT}{m}}
  +(m-\ell)\cdot\frac{\oneslots\oneslotsT}{\ell}\otimes\Parens{\mIacts-\frac{\oneacts\oneactsT}{m}}
.
\end{align}
Next, we would like to argue that \Eq{decomp:kron} is an eigendecomposition. For this, we just need to show
that each of the three Kronecker products in \Eq{decomp:kron} equals a projection matrix in $\RR^{\ell m}$,
and that the ranges of the projection matrices are orthogonal. The first property follows, because if
$\mP_1$ and $\mP_2$ are projection matrices then so is $\mP_1\otimes\mP_2$. The second property follows,
because for $\mP_1,\mP'_1$ (square of the same dimension) and $\mP_2,\mP'_2$ (square of the same dimension)
such that either ranges of $\mP_1$ and $\mP'_1$ are orthogonal or ranges of $\mP_2$ and $\mP'_2$ are orthogonal,
we obtain that the ranges of $\mP_1\otimes\mP_2$ and $\mP'_1\otimes\mP'_2$ are orthogonal.

Now we are ready to derive the pseudo-inverse. We distinguish two cases.

\paragraph{Case $\ell<m$:}
We directly invert the eigenvalues in \Eq{decomp:kron} to obtain
\begin{align*}
  \mgamma^\dagger
&=
  \frac{m}{\ell}\cdot\frac{\oneslots\oneslotsT}{\ell}\otimes\frac{\oneacts\oneactsT}{m}
  +(m-1)\cdot \Parens{\mIslots-\frac{\oneslots\oneslotsT}{\ell}} \otimes
    \Parens{\mIacts-\frac{\oneacts\oneactsT}{m}}
\\&\hspace{2.25in}{}
  +\frac{m-1}{1-\ell/m}\cdot\frac{\oneslots\oneslotsT}{\ell}\otimes\Parens{\mIacts-\frac{\oneacts\oneactsT}{m}}
\\[4pt]
&=
  \frac{1}{\ell^2}\cdot\one\one^T
  +(m-1)\cdot \Parens{\mIslots+\frac{\oneslots\oneslotsT}{m-\ell}} \otimes
    \Parens{\mIacts-\frac{\oneacts\oneactsT}{m}}
\\[4pt]
&=
  \Parens{\frac{1}{\ell^2}-\frac{m-1}{m(m-\ell)}}\cdot\one\one^T
  +(m-1)\mI
  -\frac{m-1}{m}\sum_j\one_j\one_j^T
  +\frac{m-1}{m-\ell}\sum_a\one_a\one_a^T
\enspace.
\end{align*}
Recall that \Eq{decomp:kron} involves $m(m-1)\mgamma$.
To obtain $\brho$, we again evaluate $\one_{\vs'}^T\mgamma^\dagger\one_{\vs}$ for any $\vs\in S(x)$. We write $A_\vs$ for the set of actions appearing on the slate $\vs$:
\begin{align}
\notag
\one_{\vs'}^T\mgamma^\dagger\one_{\vs}
&=
  \Parens{\frac{1}{\ell^2}-\frac{m-1}{m(m-\ell)}}\cdot(\one_{\vs'}^T\one)(\one^T\one_{\vs})
  +(m-1)\one_{\vs'}^T\one_{\vs}
  -\frac{m-1}{m}\sum_j(\one_{\vs'}^T\one_j)(\one_j^T\one_{\vs})
\\[-4pt]
\notag
&\hspace{2.25in}{}
  +\frac{m-1}{m-\ell}\sum_a(\one_{\vs'}^T\one_a)(\one_a^T\one_{\vs})
\\[4pt]
\notag
&=
  \Parens{\frac{1}{\ell^2}-\frac{m-1}{m(m-\ell)}}\cdot\ell^2
  +\sum_j\frac{\one\braces{s'_j=s_j}}{1/(m-1)}
  -\frac{m-1}{m}\cdot\ell
\\[-4pt]
\label{eq:ranking:1}
&\hspace{2.25in}{}
  +\frac{m-1}{m-\ell}\sum_a\one\braces{a\in A_{\vs'}}\one\braces{a\in A_\vs}
\\[4pt]
\notag
&=
  1-\frac{(m-1)(\ell^2+m\ell-\ell^2)}{m(m-\ell)}
  +\sum_j\frac{\one\braces{s'_j=s_j}}{1/(m-1)}
  +\frac{m-1}{m-\ell}\cdot\card{A_{\vs'}\cap A_{\vs}}
\\[4pt]
\notag
&=
  1-\frac{m-1}{m-\ell}\cdot\ell
  +\sum_j\frac{\one\braces{s'_j=s_j}}{1/(m-1)}
  +\frac{m-1}{m-\ell}\cdot\card{A_{\vs}\cap A_{\vs'}}
\enspace,
\end{align}
where \Eq{ranking:1} follows because $\one^T\one_\vs=\ell$ and $\one_j^T\one_\vs=1$ for any valid slate $\vs$.
By setting $\vs'=\vs$, we obtain $\brho=1+\ell(m-1)=m\ell-\ell+1$.

\paragraph{Case $\ell=m$:}
Again, we directly invert the eigenvalues in \Eq{decomp:kron} to obtain
\begin{align*}
  \mgamma^\dagger
&=
  \frac{1}{\ell^2}\cdot\one\one^T
  +(m-1)\cdot \Parens{\mIslots-\frac{\oneslots\oneslotsT}{\ell}} \otimes
    \Parens{\mIacts-\frac{\oneacts\oneactsT}{m}}
\\
&=
  \frac{1}{m}\cdot\one\one^T
  +(m-1)\mI  - \frac{m-1}{m}\sum_j\one_j\one_j^T-\frac{m-1}{m}\sum_a\one_a\one_a^T
\enspace.
\end{align*}
We finish the theorem by evaluating $\one_{\vs'}^T\mgamma^\dagger\one_{\vs}$:
\begin{align*}
\one_{\vs'}^T\mgamma^\dagger\one_{\vs}
&=
  \frac{1}{m}\cdot(\one_{\vs'}^T\one)(\one^T\one_{\vs})
  +(m-1)\one_{\vs'}^T\one_{\vs}
  -\frac{m-1}{m}\sum_j(\one_{\vs'}^T\one_j)(\one_j^T\one_{\vs})
\\
&\hspace{2.25in}{}
  -\frac{m-1}{m}\sum_a(\one_{\vs'}^T\one_a)(\one_a^T\one_{\vs})
\\
&=
  \frac{1}{m}\cdot m^2
  +\sum_j\frac{\one\braces{s'_j=s_j}}{1/(m-1)}
  -\frac{m-1}{m}\cdot m
  -\frac{m-1}{m}\cdot m
\\
&=
  \sum_j\frac{\one\braces{s'_j=s_j}}{1/(m-1)}
  -m+2
\enspace.
\end{align*}
We obtain $\brho=m^2-2m+2$ by setting $\vs'=\vs$ and \Eq{tVpi:ranking} by taking an expectation over $\vs'\sim\pi(\cdot\given x)$.
\end{proof}

\subsection{Proof of Corollary~\ref{prop:kappa}}

We need one last technical result in order to establish the
proposition.

\begin{claim}
\label{claim:frac:inverse}
Let $\mA,\mB$ be two symmetric positive semi-definite matrices with $\Null(\mA) \subseteq \Null(\mB)$. Then
\[
  \max_{\vz\perp\Null(\mB),\,\vz\ne\zero}\;\frac{\vz^T\mB^\dagger\vz}{\vz^{T\!}\mA^\dagger\vz}
\le
  \max_{\vz\perp\Null(\mB),\,\vz\ne\zero}\;\frac{\vz^{T\!}\mA\vz}{\vz^T\mB\vz}
\enspace.
\]
\end{claim}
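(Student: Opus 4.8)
The plan is to restrict everything to the subspace $\mathcal{V}\coloneqq\Range(\mB)$, on which every quadratic form appearing in the claim is strictly positive, and then to compare two Rayleigh-quotient maxima over $\mathcal{V}$. First I would note that since $\mA$ and $\mB$ are symmetric, the hypothesis $\Null(\mA)\subseteq\Null(\mB)$ is equivalent to $\Range(\mB)\subseteq\Range(\mA)$, and that the constraint $\vz\perp\Null(\mB)$ is exactly $\vz\in\mathcal{V}$. So (assuming $\mathcal{V}\ne\set{\zero}$, else both sides are maxima over the empty set) for the $\vz$'s in both maxima we have $\vz\in\Range(\mB)\subseteq\Range(\mA)$, whence $\vz^T\mB\vz>0$, $\vz^T\mA\vz>0$, $\vz^T\mB^\dagger\vz>0$, and $\vz^T\mA^\dagger\vz>0$, so all ratios below are well defined and positive. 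I would then fix an orthonormal basis of $\mathcal{V}$ as the columns of a matrix $P$ (so $P^TP=\mI$ and $PP^T$ is the orthoprojector onto $\mathcal{V}$) and introduce the compressions $\mA_{\mathcal{V}}\coloneqq P^T\mA P$ and $\mB_{\mathcal{V}}\coloneqq P^T\mB P$, which are positive definite by the preceding remark.

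The argument then rests on two observations about how the pseudoinverses act on $\mathcal{V}$, for $\vz=Pw$. Since $\Range(\mB)=\mathcal{V}$ \emph{exactly}, one has $\mB=P\mB_{\mathcal{V}}P^T$ and hence $\mB^\dagger=P\mB_{\mathcal{V}}^{-1}P^T$, so $\vz^T\mB^\dagger\vz=w^T\mB_{\mathcal{V}}^{-1}w$. For $\mA$ the range may be strictly larger than $\mathcal{V}$, and here only an inequality survives: writing $\mA$ in block form with respect to $\Range(\mA)=\mathcal{V}\oplus(\mathcal{V}^\perp\cap\Range(\mA))$, the block-inverse identity of \Eq{inv} gives that $P^T\mA^\dagger P$ equals the inverse of the Schur complement $\mA_{\mathcal{V}}-\mA_{12}\mA_{22}^{-1}\mA_{12}^T\preceq\mA_{\mathcal{V}}$; since inversion reverses the L\"owner order on positive definite matrices, $P^T\mA^\dagger P\succeq\mA_{\mathcal{V}}^{-1}$, i.e.\ $\vz^T\mA^\dagger\vz\ge w^T\mA_{\mathcal{V}}^{-1}w$ (an equality when $\Range(\mA)=\Range(\mB)$).

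To finish, let $R$ be the right-hand maximum of the claim. By definition $\vz^T\mA\vz\le R\,\vz^T\mB\vz$ for all $\vz\in\mathcal{V}$, i.e.\ $\mA_{\mathcal{V}}\preceq R\,\mB_{\mathcal{V}}$, and reversing the L\"owner order (both matrices are positive definite) gives $\mB_{\mathcal{V}}^{-1}\preceq R\,\mA_{\mathcal{V}}^{-1}$. Combining this with the two observations, for every $\vz=Pw\in\mathcal{V}\setminus\set{\zero}$,
\[
  \frac{\vz^T\mB^\dagger\vz}{\vz^T\mA^\dagger\vz}
  \;\le\;
  \frac{w^T\mB_{\mathcal{V}}^{-1}w}{w^T\mA_{\mathcal{V}}^{-1}w}
  \;\le\;
  R,
\]
and taking the supremum over $\vz$ yields the claim.

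The only subtle step---and the reason the statement is an inequality, not an identity---is the second observation. For positive \emph{definite} $\mA$, or whenever $\Range(\mA)=\Range(\mB)$, the substitution $\vz=\mB^{1/2}\vu$ makes the two sides of the claim coincide; but when $\Range(\mA)$ strictly contains $\Range(\mB)$ the pseudoinverse $\mA^\dagger$ spreads ``extra'' mass onto directions invisible to $\mB^\dagger$, and the Schur-complement bound is precisely what quantifies this gap. The only other thing to watch is that the constraint $\vz\perp\Null(\mB)$, together with $\Range(\mB)\subseteq\Range(\mA)$, is exactly what makes every denominator strictly positive.
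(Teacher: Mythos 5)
Your proof is correct, and it takes a genuinely different route from the paper's. The paper introduces symmetric square roots $\mU,\mV$ of $\mA,\mB$, rewrites the left-hand maximum (after relaxing the constraint $\vy=\mU^\dagger\vz$ to arbitrary $\vy\ne\zero$) as the squared operator norm $\norm{\mV^\dagger\mU}_2^2$, invokes $\norm{\mV^\dagger\mU}_2=\norm{\mU\mV^\dagger}_2$, and unwinds the substitutions to arrive at the right-hand Rayleigh quotient; the single inequality in that chain is the constraint relaxation. You instead compress everything to $\mathcal{V}=\Range(\mB)$: the exact identity $\mB^\dagger=P\mB_{\mathcal{V}}^{-1}P^T$, the Schur-complement bound $P^T\mA^\dagger P\succeq(P^T\mA P)^{-1}$, and antitonicity of inversion on the positive-definite cone do all the work. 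Each step checks out: the blocks $\mA_{22}$ and the Schur complement are positive definite because $\mA$ is positive definite on $\Range(\mA)$, so \Eq{inv} applies as stated, the Rayleigh-quotient bound $\mA_{\mathcal{V}}\preceq R\,\mB_{\mathcal{V}}$ is exactly the definition of the right-hand maximum, and the degenerate case $\Range(\mA)=\Range(\mB)$ is the one you correctly flag as giving equality in the middle step. What your argument buys is an explicit localization of the slack---it sits entirely in the off-diagonal coupling $\mA_{12}$ between $\Range(\mB)$ and its complement inside $\Range(\mA)$---together with a characterization of when the two sides coincide; the paper's argument treats $\mA$ and $\mB$ more symmetrically and avoids choosing a basis, at the cost of obscuring where the inequality actually arises.
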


We now provide the proof of Corollary~\ref{prop:kappa}, following which we will
prove Claim~\ref{claim:frac:inverse}.

\begin{proof}[Proof of Corollary~\ref{prop:kappa}]
The corollary follows by \Prop{brho:bound}. The key step is to bound
$\brho_{\mu_\eps, x}$, for which we invoke
Claim~\ref{claim:frac:inverse}. Specifically, we apply the claim with
$\mA = \mgammaparam{\unif}{x}$ and $\mB =
\mgammaparam{\mu_\eps}{x}$. Since
$$\mgamma_{\mu_\eps,x} = (1-\eps)\mgamma_{\mu,x} +
\eps\mgamma_{\unif,x} = (1-\eps)\EE_{\vs\sim
  \mu(\cdot\given x)}[\one_\vs\one_\vs^T] + \eps\EE_{\vs\sim
  \unif(\cdot\given x)}[\one_\vs\one_\vs^T],$$
we observe that $\Null(\mgamma_{\unif,x}) =
\Null(\mgamma_{\mu_\eps,x})$, because the support of $\mu(\cdot\given x)$ is always
included in the support of $\unif(\cdot\given x)$.
Now we can invoke Claim~\ref{claim:frac:inverse} with these
choices to see that

\begin{align*}
  \brho_{\mu_\eps,x}
&= \sup_{\vs \in \supp \mu_\eps(\cdot | x)}
  \one_\vs^T\mgammaparam{\mu_\eps}{x}^\dagger\one_\vs
\\
  &\leq \sup_{\vs \in \supp \unif(\cdot | x)}
  \one_\vs^T\mgammaparam{\unif}{x}^\dagger\one_\vs \,
    \sup_{\vs \in \supp \mu_\eps(\cdot | x)}
    \frac{\one_\vs^T\mgammaparam{\mu_\eps}{x}^\dagger\one_\vs}
         {\one_\vs^T\mgammaparam{\unif}{x}^\dagger\one_\vs}
\\
  &\leq \brho_{\unif,x}\,
    \max_{\vz \perp \Null(\mgammaparam{\mu_\eps}{x}), \vz \ne \zero}
    \frac{\vz^T\mgammaparam{\mu_\eps}{x}^\dagger\vz}
         {\vz^T\mgammaparam{\unif}{x}^\dagger\vz}
\\
  &\leq \brho_{\unif,x}\,
  \max_{\vz \perp \Null(\mgammaparam{\mu_\eps}{x}), \vz \ne \zero}
  \frac{\vz^T\mgammaparam{\unif}{x}\vz}
       {\vz^T\mgammaparam{\mu_\eps}{x}\vz}
\\
  &\leq \brho_{\unif,x}\, \frac{\vz^T
    \mgammaparam{\unif}{x}\vz}{\eps \vz^T
    \mgammaparam{\unif}{x}\vz}\\
  &= \frac{\brho_{\unif,x}}{\eps}
\end{align*}

For the product slate space, using \Eq{ss':product}, which was proved within the proof of \Prop{product}, we have
\begin{align*}
\brho_{\unif,x}
&=
\sup_{\vs\in\supp\unif(\cdot\given x)} \one_\vs^T\mgammaparam{\unif}{x}^\dagger\one_\vs^{\vphantom{T}}
\\
&=
\sup_{\vs\in\supp\unif(\cdot\given x)} \Bracks{
  \sum_{j=1}^\ell \frac{1}{1/m}
  -\ell+1
  }
\\
&=\ell m-\ell+1
\enspace.
\end{align*}
For the ranking slate space, using \Prop{ranking:spectrum}, we also have
$\brho_{\unif,x}=\Ocal(\ell m)$, so for both the product slate space
and ranking slate space, we obtain $\brho_{\mu_\eps,x}=\Ocal(\ell m/\eps)$. Finally, plugging this upper
bound and \Prop{brho:bound} into the statement of
Theorem~\ref{thm:deviation:gen} completes the proof.
\end{proof}

We finally prove Claim~\ref{claim:frac:inverse}.

\begin{proof}[Proof of Claim~\ref{claim:frac:inverse}]
Let $\mU$ be the square root of matrix $\mA$, i.e., $\mU$ is a
symmetric positive semidefinite matrix with the same eigenvectors as
$\mA$, but with eigenvalues that are square root of the corresponding
eigenvalues of $\mA$. Similarly, let $\mV$ be the square root of
matrix $\mB$. Thus, we have $\mA=\mU\mU$ and
$\mA^\dagger=\mU^\dagger\mU^\dagger$ and similarly for $\mB$ and
$\mV$. Let $\mPi_\mA = \mU^\dagger\mU = \mU\mU^\dagger$ denote the
projection onto the range of $\mA$ and $\mPi_\mB$ denote the
projection onto the range of $\mB$. Since
$\Null(\mA)\subseteq\Null(\mB)$, we have
$\Range(\mA)\supseteq\Range(\mB)$.  We prove the claim as follows:
\begin{align}
\label{eq:inv:1}
  \max_{\vz\perp\Null(\mB),\,\vz\ne\zero}\;\frac{\vz^T\mB^\dagger\vz}{\vz^{T\!}\mA^\dagger\vz}
&=
  \max_{\vz\perp\Null(\mB),\,\vz\ne\zero}\;\frac{\vz^T\;\mU^\dagger\mU\;\mB^\dagger\;\mU\mU^\dagger\;\vz}{\vz^{T}\;\mU^\dagger\mU^\dagger\;\vz}
\\
\label{eq:inv:2}
&\le
  \max_{\vy\ne\zero}\;\frac{\vy^T\mU\;\mB^\dagger\;\mU\vy}{\vy^{T}\vy}
\\
\label{eq:inv:3}
&=
  \max_{\vy\ne\zero}\;\frac{\vy^T\mU\;\mV^\dagger\mV^\dagger\;\mU\vy}{\vy^{T}\vy}
=
  \max_{\vy:\:\norm{\vy}_2=1}\;\norm{\mV^\dagger\mU\vy}_2^2
\\
\label{eq:inv:4}
&=
  \max_{\vy:\:\norm{\vy}_2=1}\;\norm{\mU\mV^\dagger\vy}_2^2
\\
\notag
&=
  \max_{\vy\ne\zero}\;\frac{\vy^T\mV^\dagger\mU\;\mU\mV^\dagger\vy}{\vy^{T}\vy}
\\
\label{eq:inv:5}
&=
  \max_{\vy\perp\Null(\mB),\,\vy\ne\zero}\;
  \frac{\vy^T\;\mV^\dagger\mA\mV^\dagger\;\vy}{\vy^{T}\vy}
\\
\label{eq:inv:6}
&=
  \max_{\vz\perp\Null(\mB),\,\vz\ne\zero}\;
  \frac{\vz^T\mV\;\mV^\dagger\mA\mV^\dagger\;\mV\vz}{\vz^{T}\;\mV\mV\;\vz}
\\
\label{eq:inv:7}
&=
  \max_{\vz\perp\Null(\mB),\,\vz\ne\zero}\;
  \frac{\vz^T\mA\vz}{\vz^{T}\mB\vz}
\enspace.
\end{align}
In \Eq{inv:1} we substitute $\mU^\dagger\mU^\dagger=\mA^\dagger$ and also use the fact that $\mU\mU^\dagger=\mPi_\mA$ and $\mPi_\mA\vz=\vz$ because $\vz\in\Range(\mB)\subseteq\Range(\mA)$.
\Eq{inv:2} is obtained by substituting $\vy=\mU^\dagger\vz$ and relaxing
the maximization to be over $\vy\ne\zero$.
In \Eq{inv:3} we substitute $\mV^\dagger\mV^\dagger=\mB^\dagger$.
In \Eq{inv:4} we use the fact that the operator norm of a matrix and its transpose are equal.
In \Eq{inv:5} we substitute $\mA=\mU\mU$ and note that it suffices to consider $\vy\perp\Null(\mB)$ because $\Null(\mV^\dagger\mA\mV^\dagger)=\Null(\mB)$.
In \Eq{inv:6} we use the fact that $\vz\mapsto\mV\vz$ is a bijection on $\Range(\mB)$, which is an orthogonal complement to $\Null(\mB)$, so we can substitute $\mV\vz=\vy$. Finally, in \Eq{inv:7} we substitute $\mB=\mV\mV$ and use the fact that $\mV^\dagger\mV=\mPi_\mB$ and $\mPi_\mB\vz=\vz$ because $\vz\in\Range(\mB)$.
\end{proof}

\clearpage
\section{Supplementary plots for off-policy evaluation on semi-synthetic data} \label{app:approach_family}
We experimented with several configurations of slate spaces, logging and target policies, and whole-page metrics in the semi-synthetic evaluation setup.
This section details the plots for all configurations. The key parameters were:
\begin{enumerate}
\item Metric: NDCG or ERR. NDCG satisfies the linearity assumption, while ERR does not.
\item Slate space: $(m,l) = (100,10)$ or $(10,5)$.
\item Logging policy: Unif, $\lassoTitle$, $\treeTitle$
\item Target policy: $\lassoBody$, $\treeBody$
\item Temperature $\alpha$: Uniform, Slightly peaked, Very peaked. Uniform corresponds to $\alpha=0$. For the small slate spaces with $(m,l)=(10,5)$, $\alpha=1.0$ creates a slightly peaked logging distribution, while $\alpha=2.0$ creates a severely peaked logging distribution.
For the larger slate spaces with $(m,l)=(100,10)$, $\alpha=0.5$ is moderately peaked while $\alpha=1.0$ is severely peaked.
\end{enumerate}

The plots in Figures~\ref{fig:plots:first}--\ref{fig:plots:last} detail the top row of Figure~\ref{fig:synth_expt1} for all combinations of these parameters.

\begin{figure*}
\begin{tabular}{cc}
\includegraphics[width=0.5\textwidth]{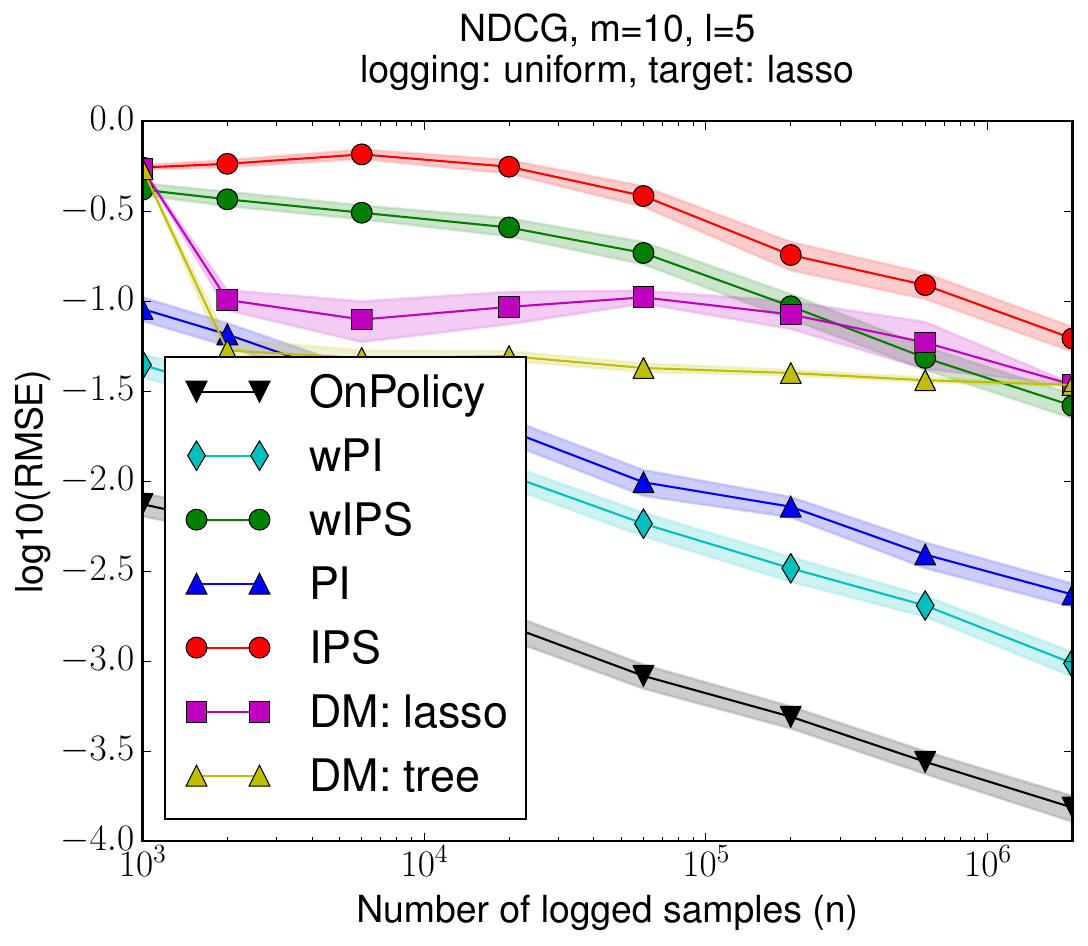}&
\includegraphics[width=0.5\textwidth]{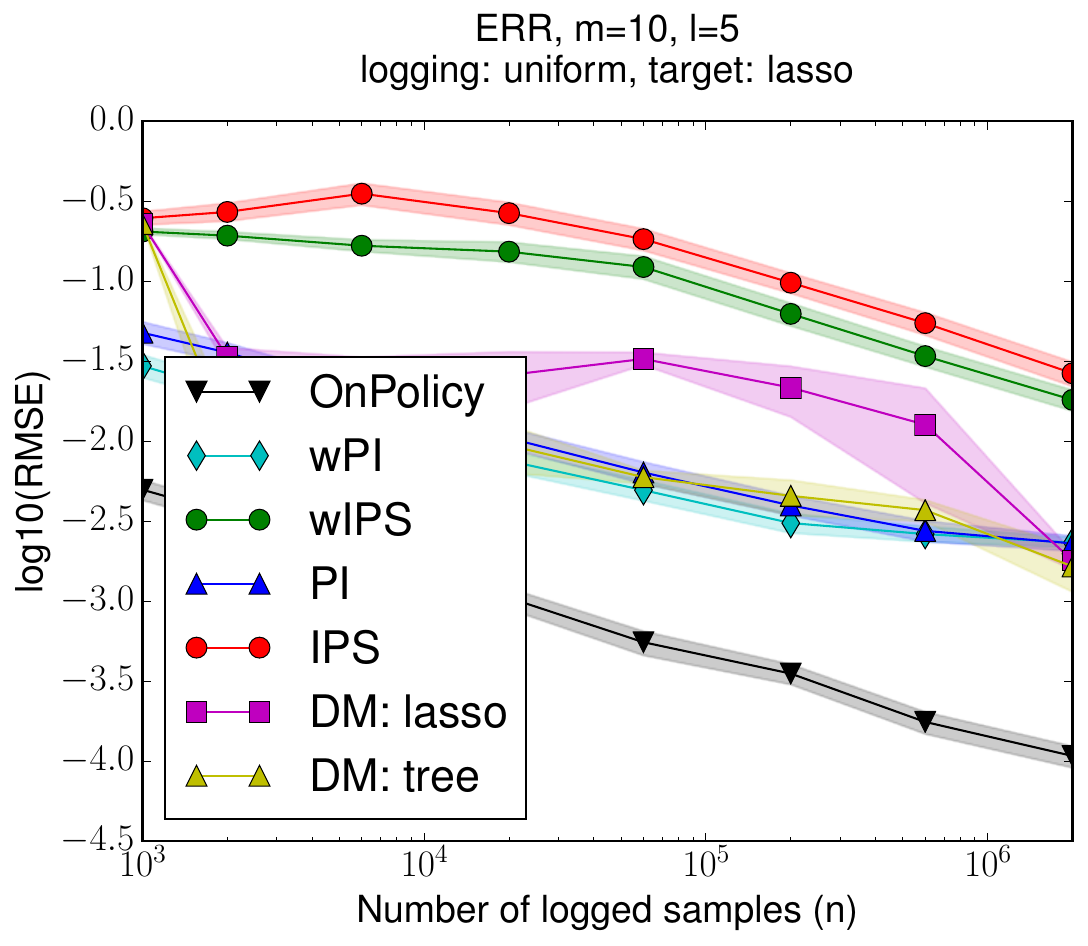}\\
\includegraphics[width=0.5\textwidth]{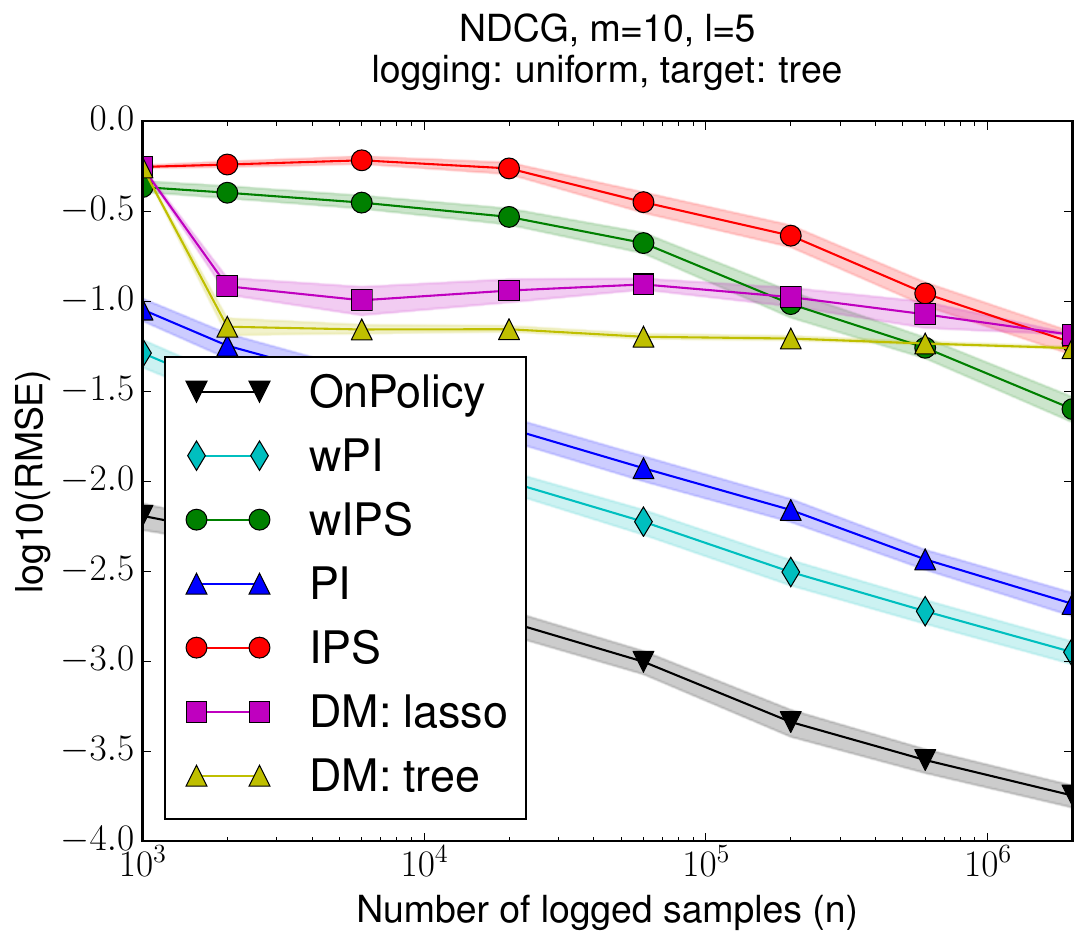}&
\includegraphics[width=0.5\textwidth]{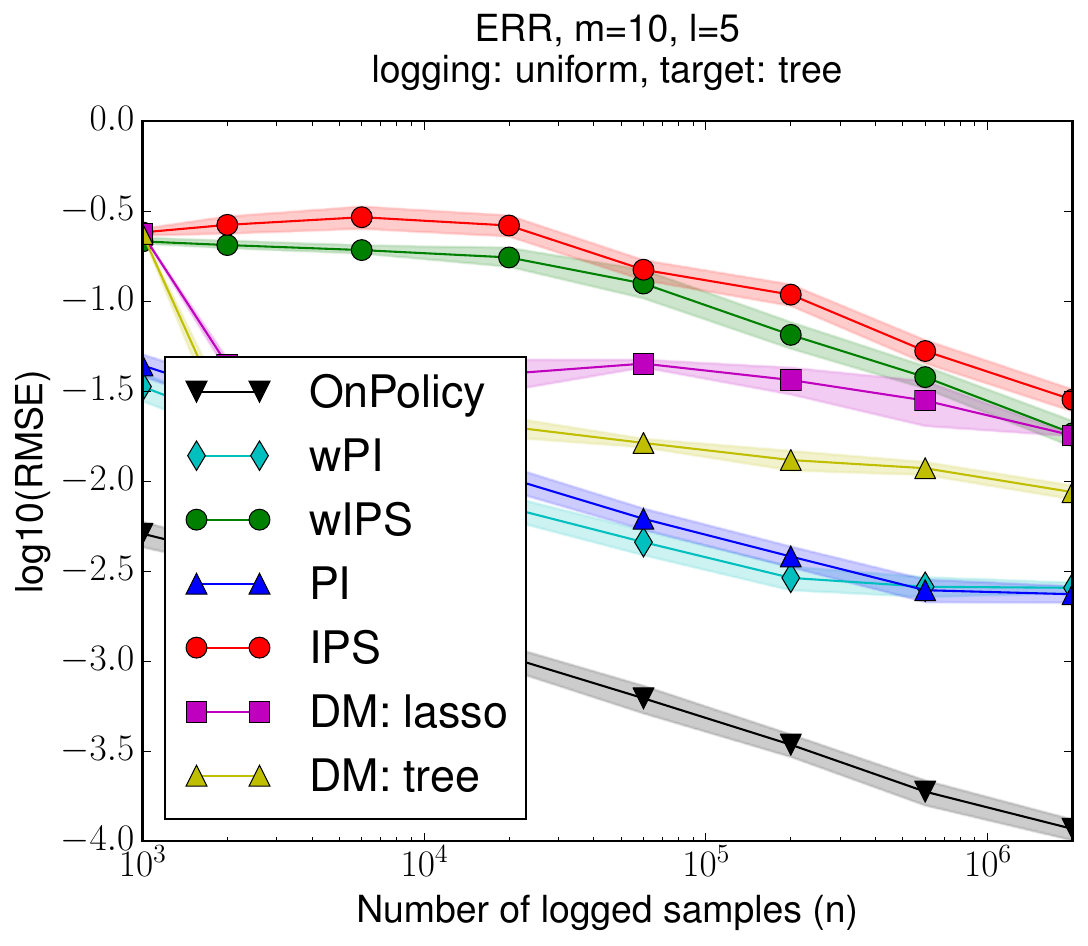}
\end{tabular}
\caption{RMSE of value estimators for an increasing logged dataset
  under a uniform logging policy with slate space $(10,5)$. Target is
  $\lassoBody$ (top panel) and $\treeBody$ (bottom panel). Metrics are
  NDCG (left) and ERR (right).}
\label{fig:plots:first}
\end{figure*}

\begin{figure*}
\begin{tabular}{cc}
\includegraphics[width=0.5\textwidth]{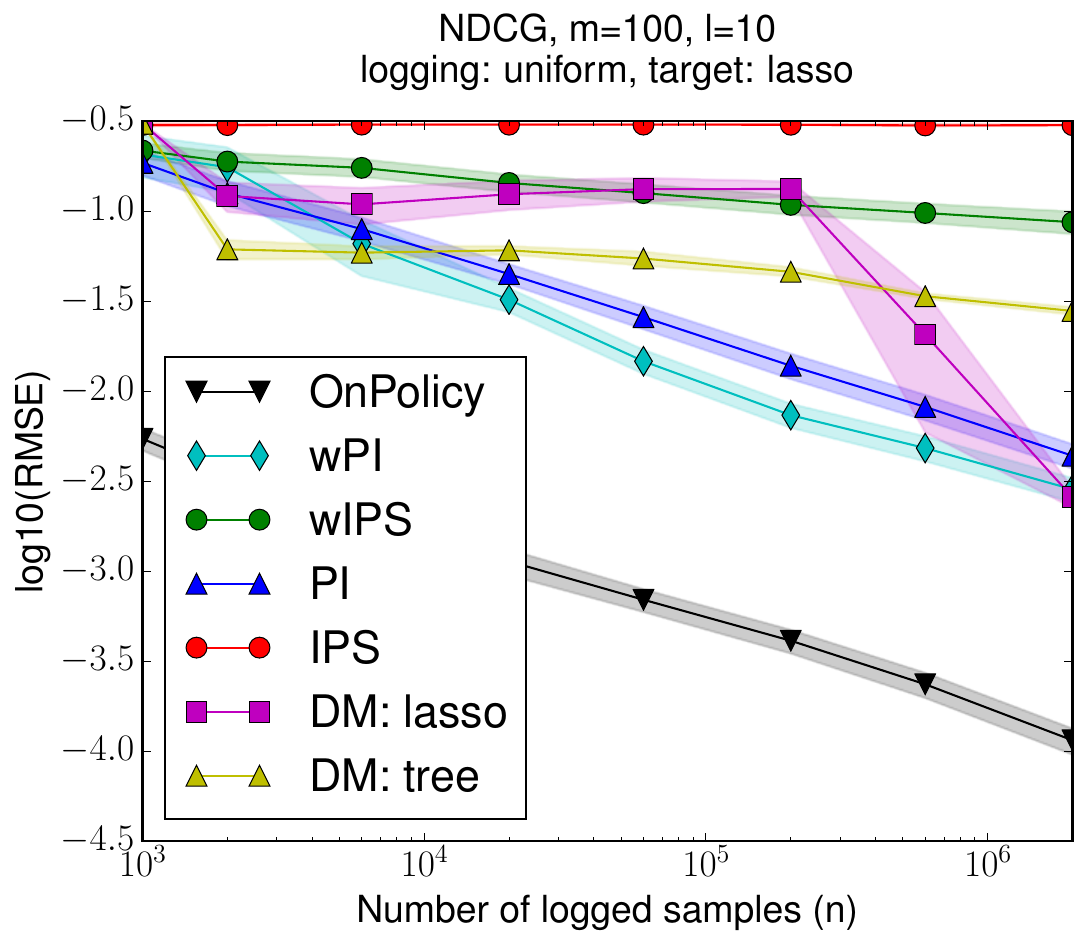}&
\includegraphics[width=0.5\textwidth]{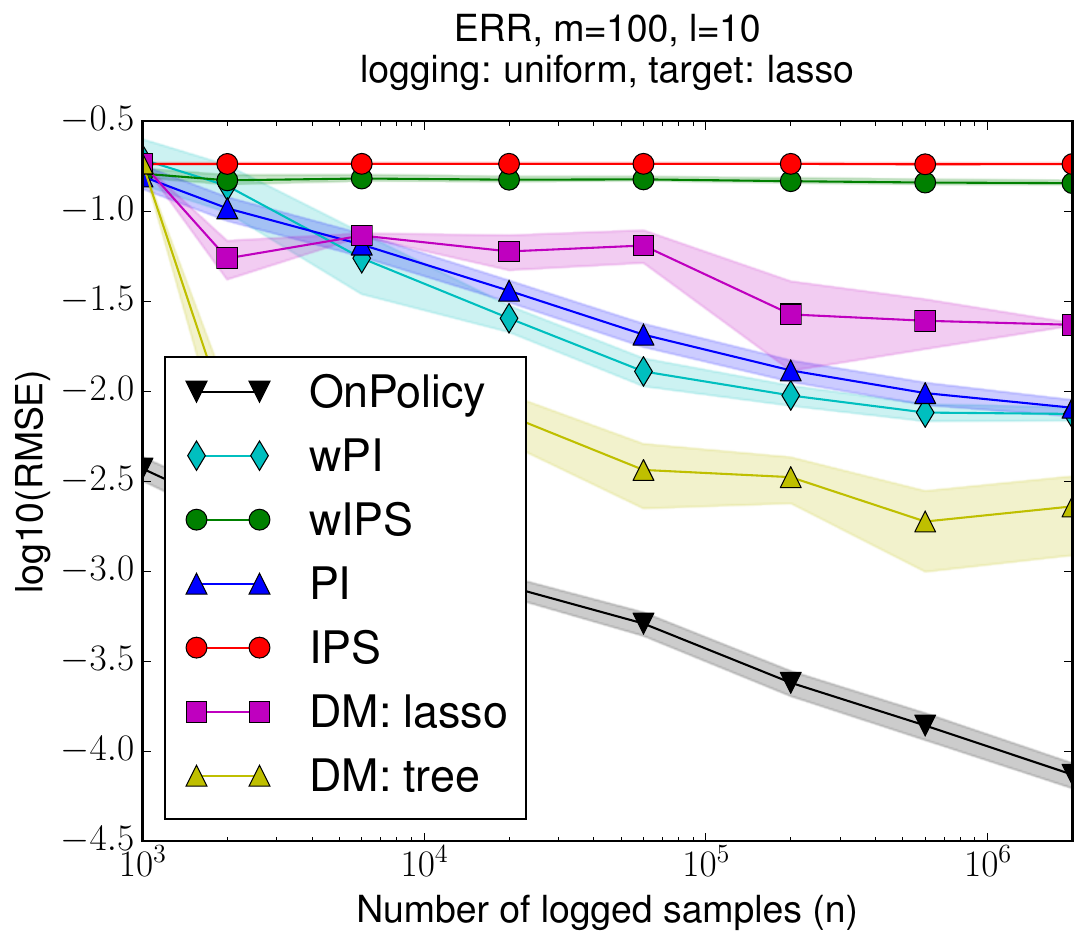}\\
\includegraphics[width=0.5\textwidth]{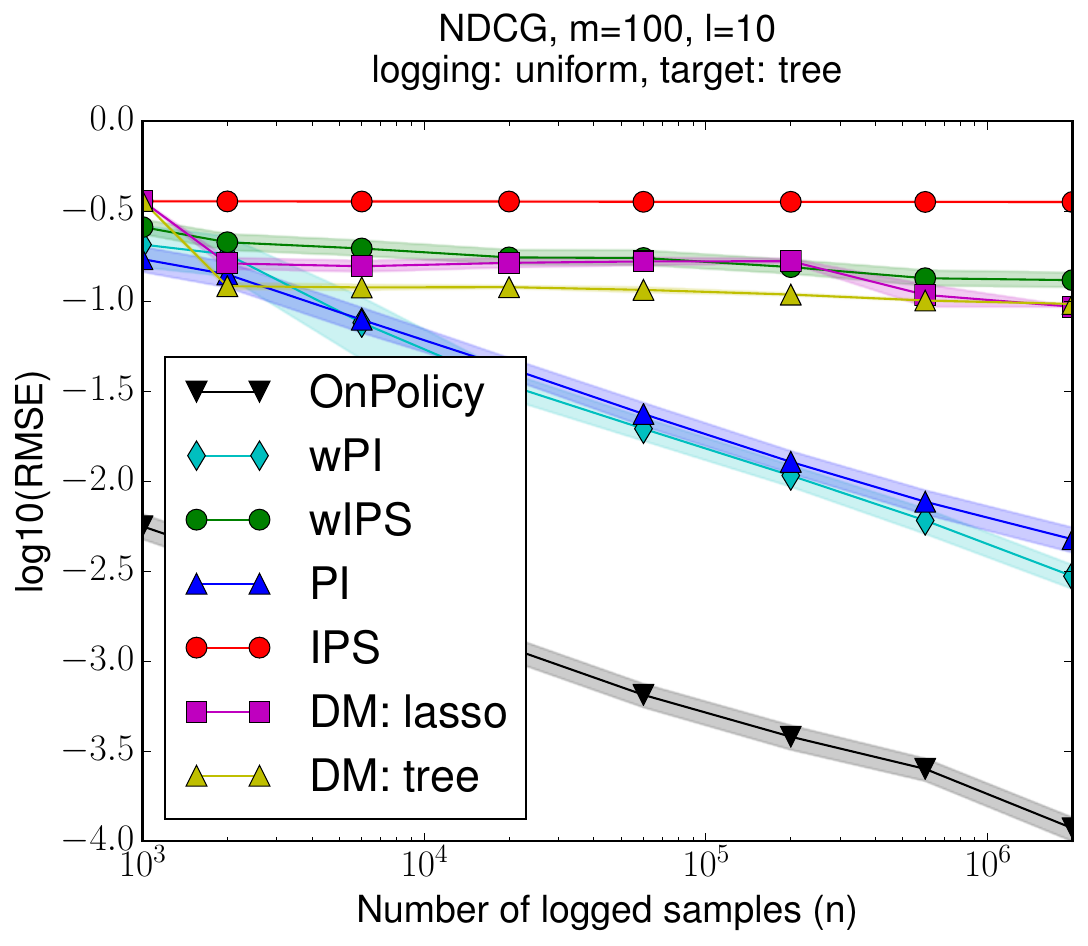}&
\includegraphics[width=0.5\textwidth]{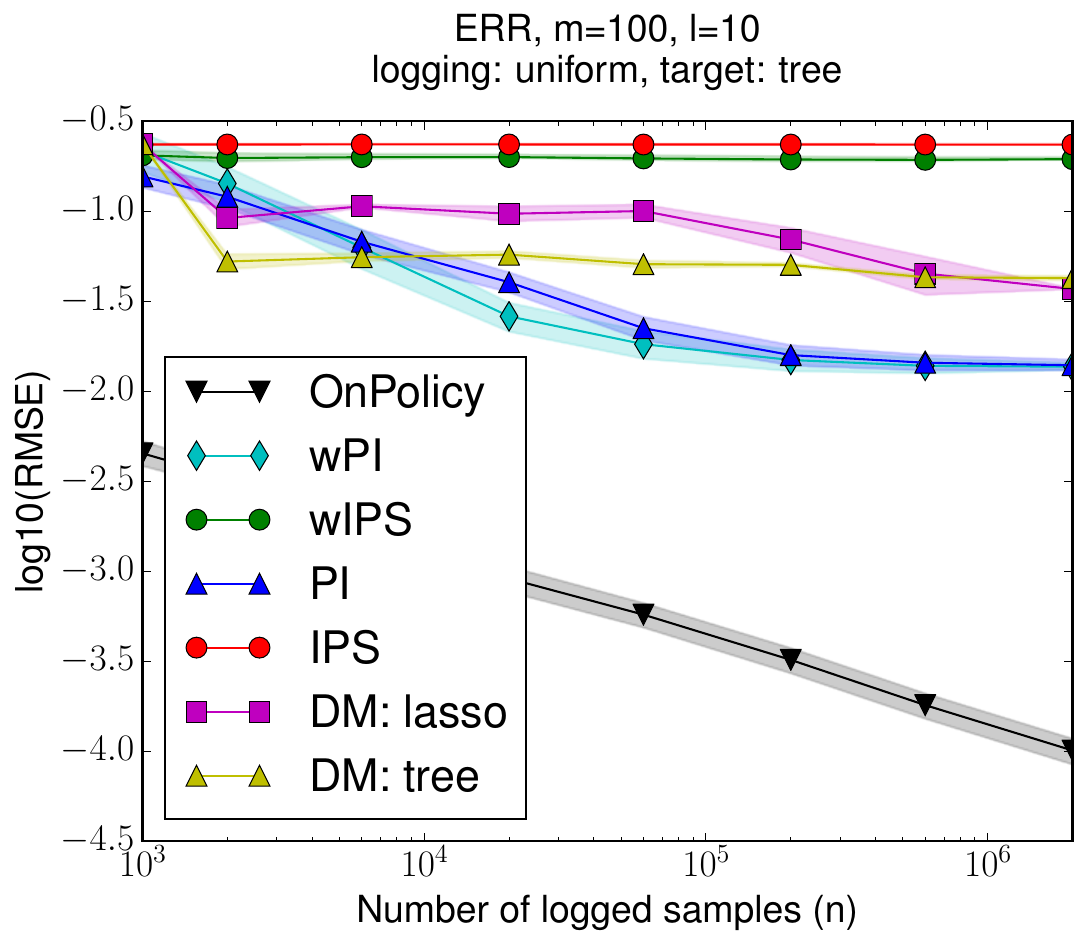}
\end{tabular}
\caption{RMSE of value estimators for an increasing logged dataset
  under a uniform logging policy with slate space $(100,10)$. Target is
  $\lassoBody$ (top panel) and $\treeBody$ (bottom panel). Metrics are
  NDCG (left) and ERR (right).}
\end{figure*}

\begin{figure*}
\begin{tabular}{cc}
\includegraphics[width=0.5\textwidth]{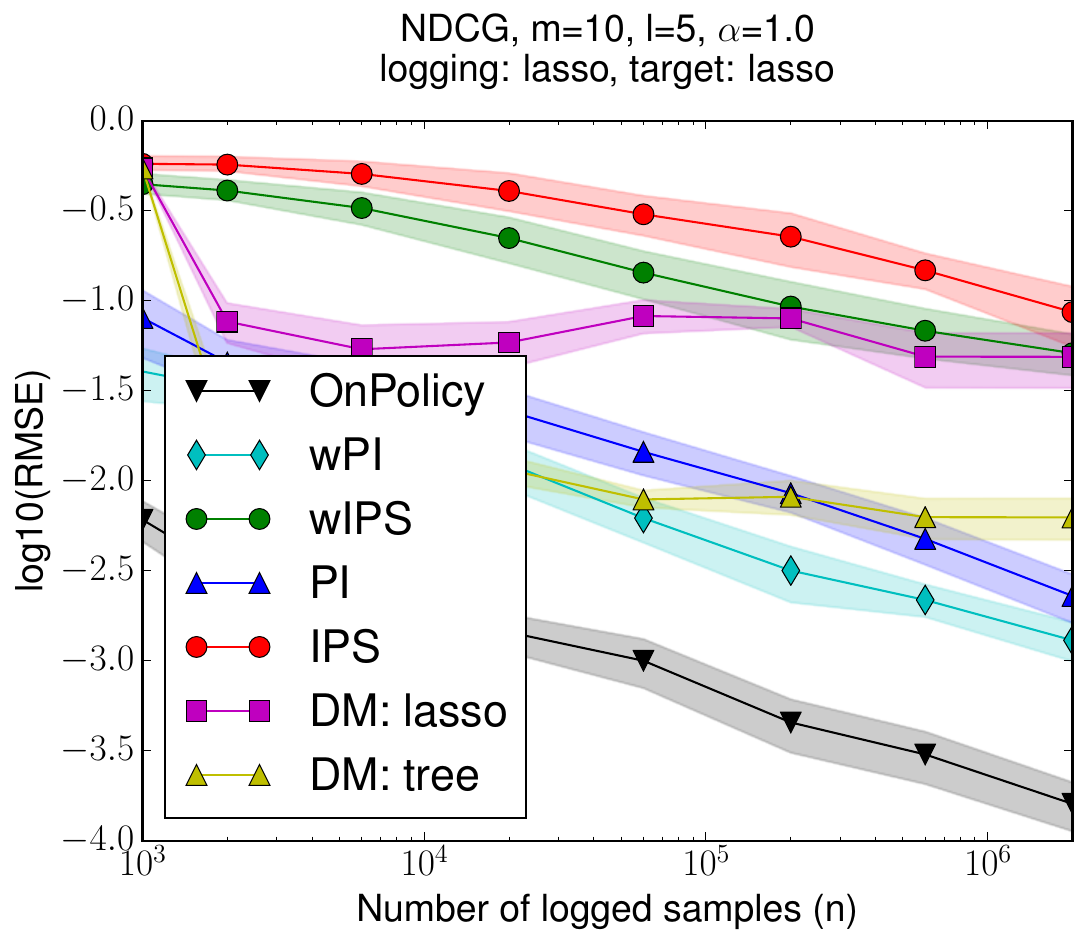}&
\includegraphics[width=0.5\textwidth]{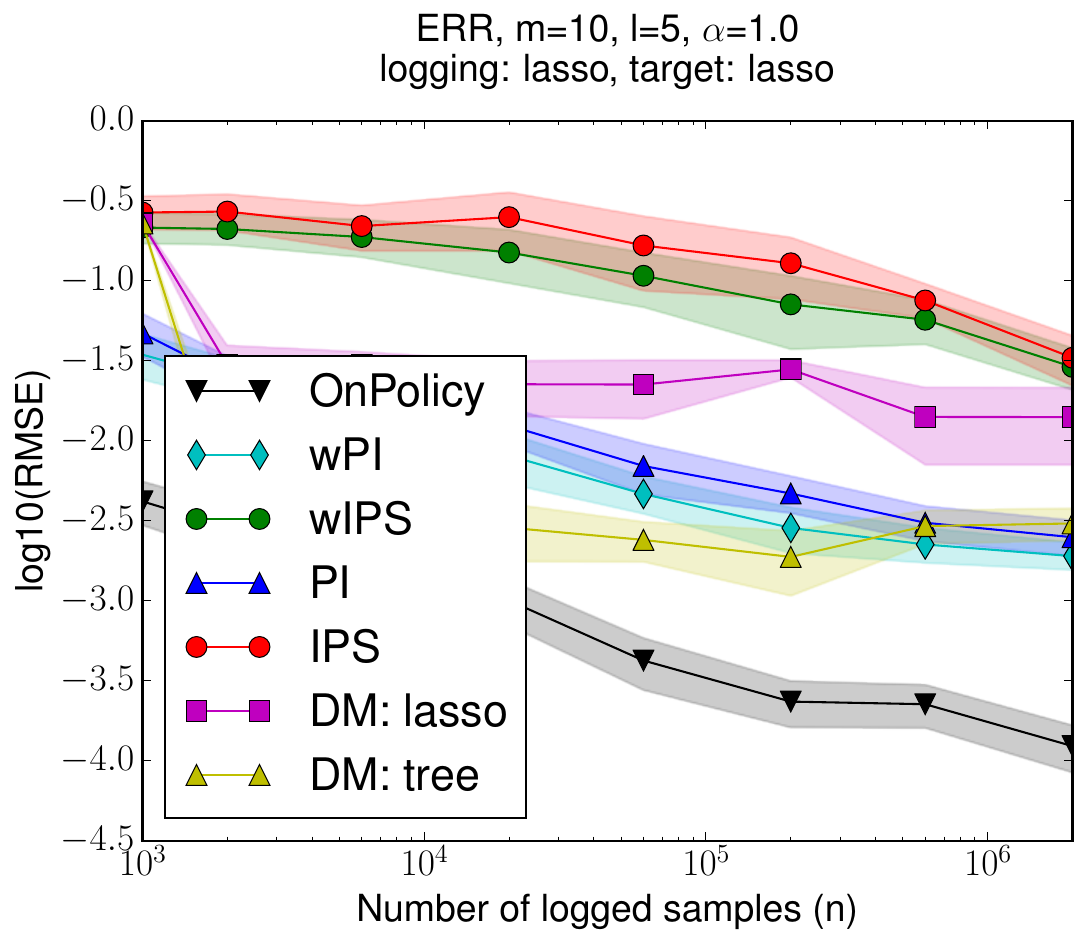}\\
\includegraphics[width=0.5\textwidth]{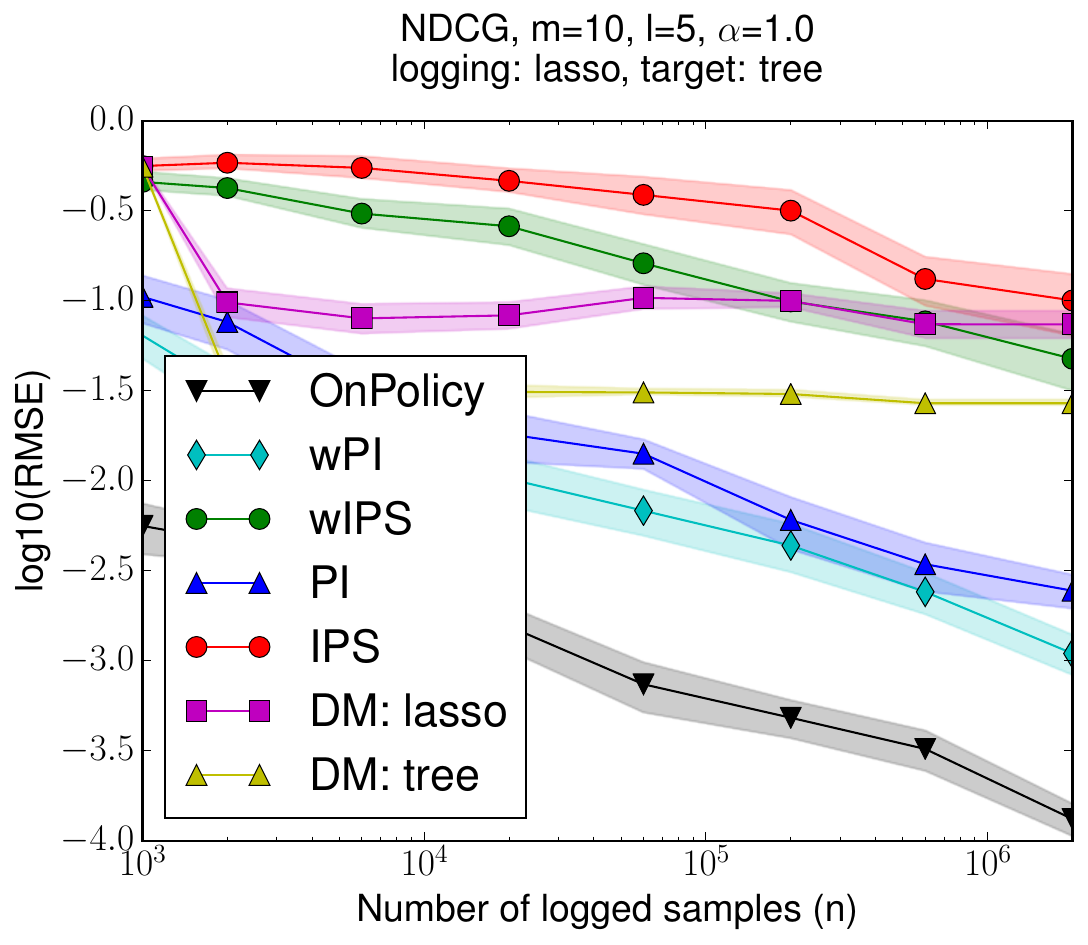}&
\includegraphics[width=0.5\textwidth]{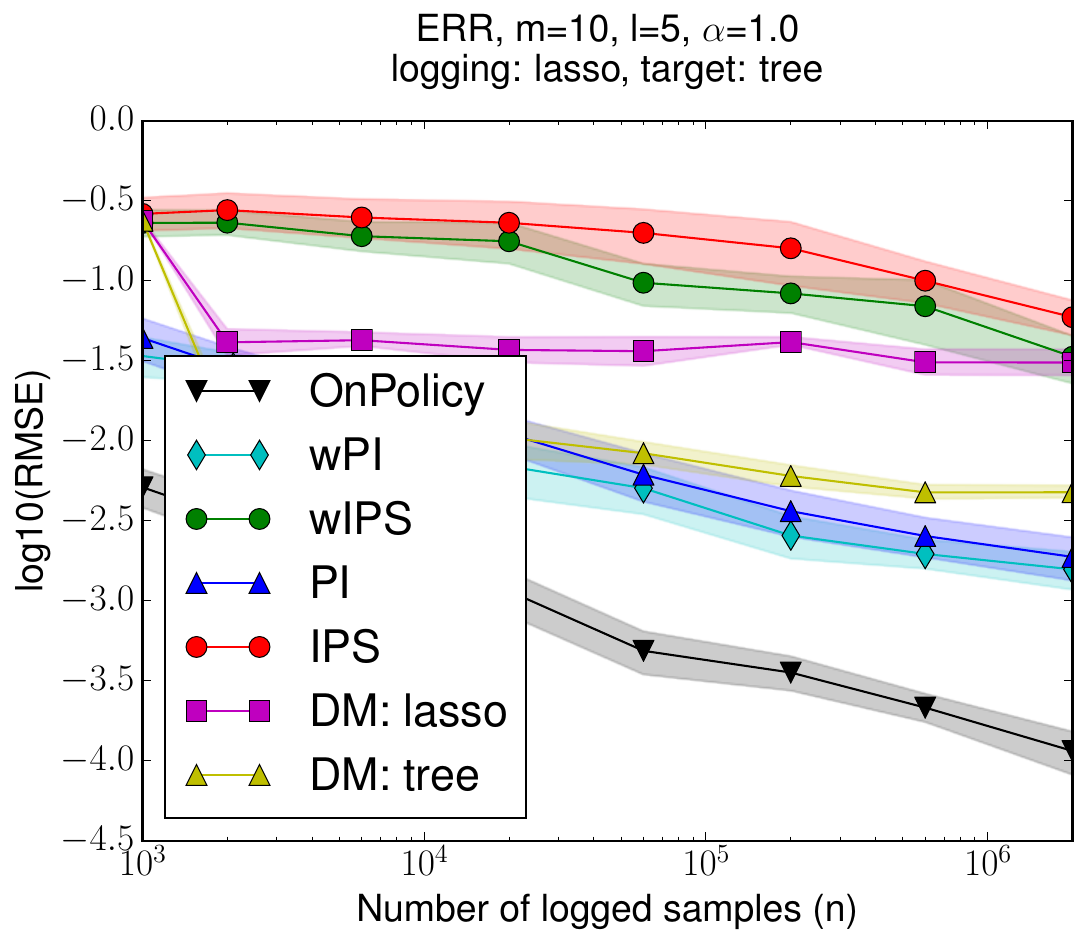}
\end{tabular}
\caption{RMSE of value estimators for an increasing logged dataset
  under a moderately peaked logging policy ($\lassoTitle, \alpha=1.0$)
  with slate space $(10,5)$. Target is $\lassoBody$ (top panel) and
  $\treeBody$ (bottom panel). Metrics are NDCG (left) and ERR
  (right).}
\end{figure*}

\begin{figure*}
\begin{tabular}{cc}
\includegraphics[width=0.5\textwidth]{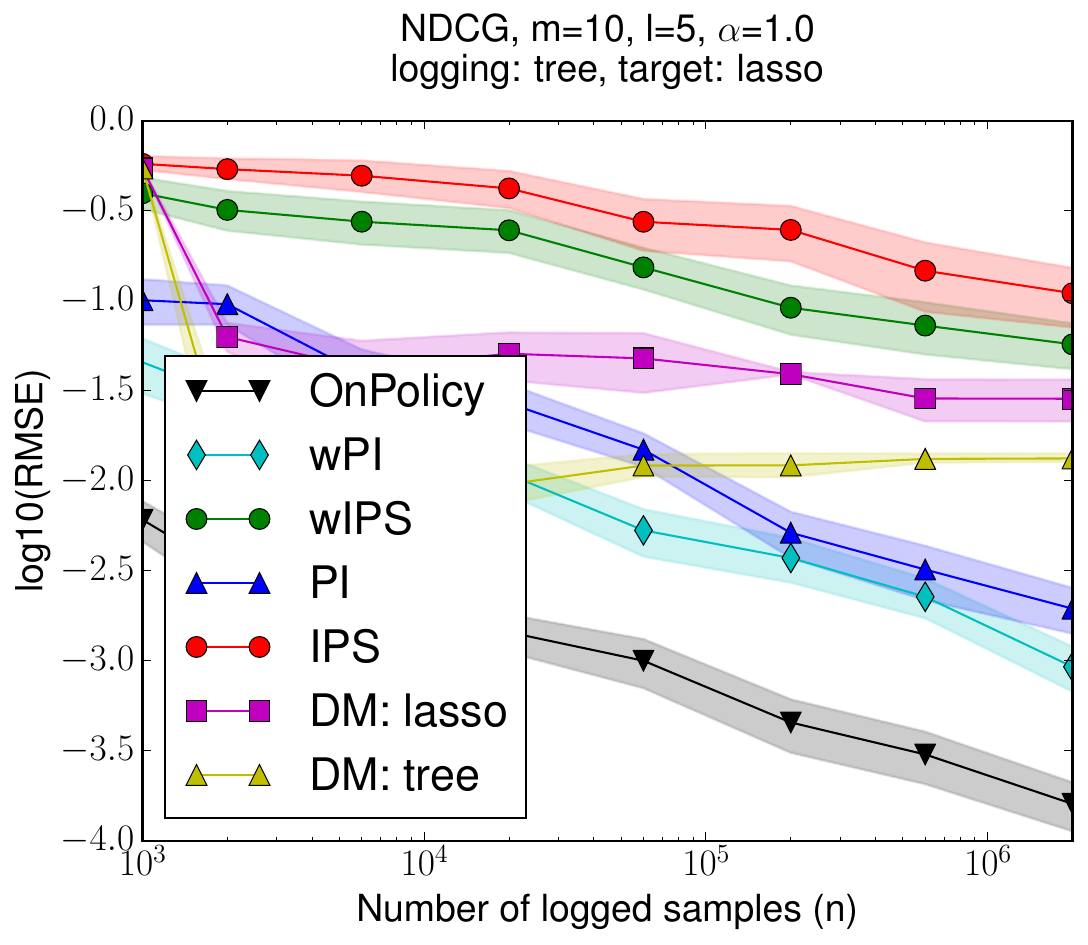}&
\includegraphics[width=0.5\textwidth]{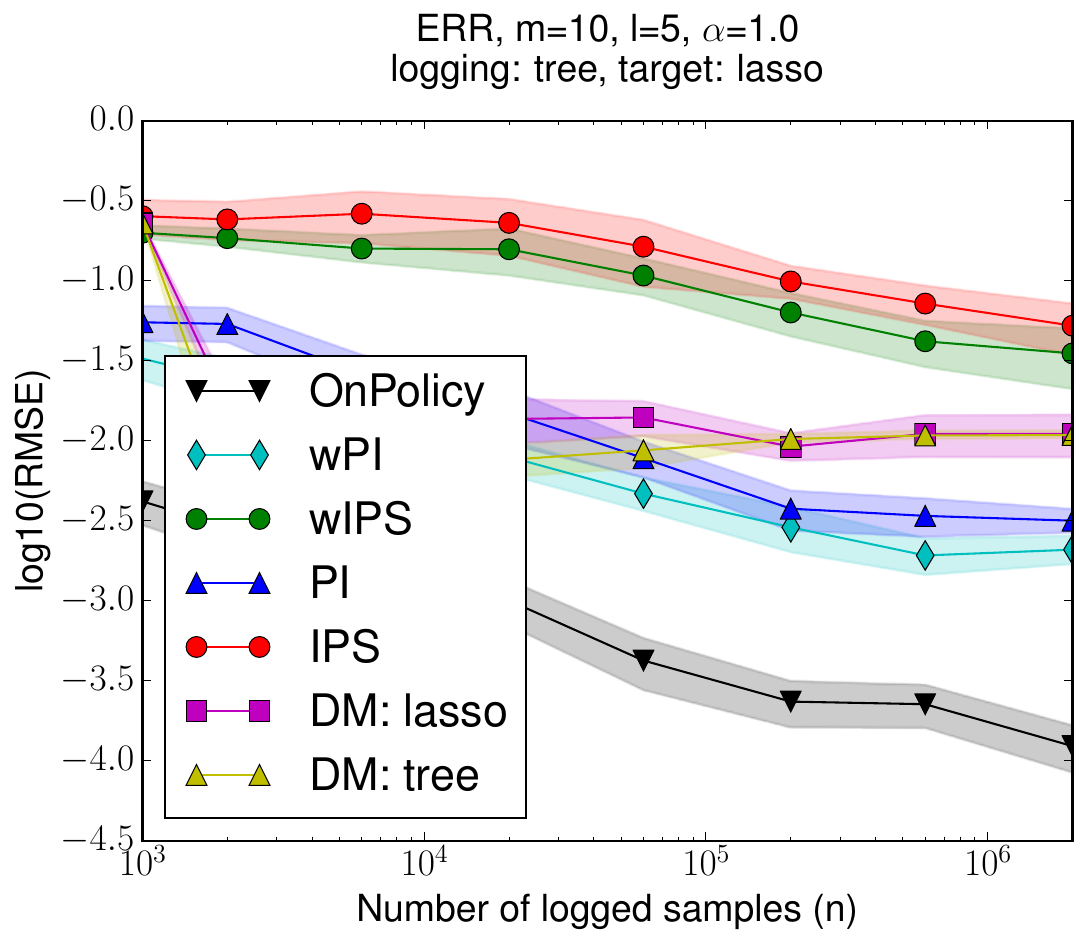}\\
\includegraphics[width=0.5\textwidth]{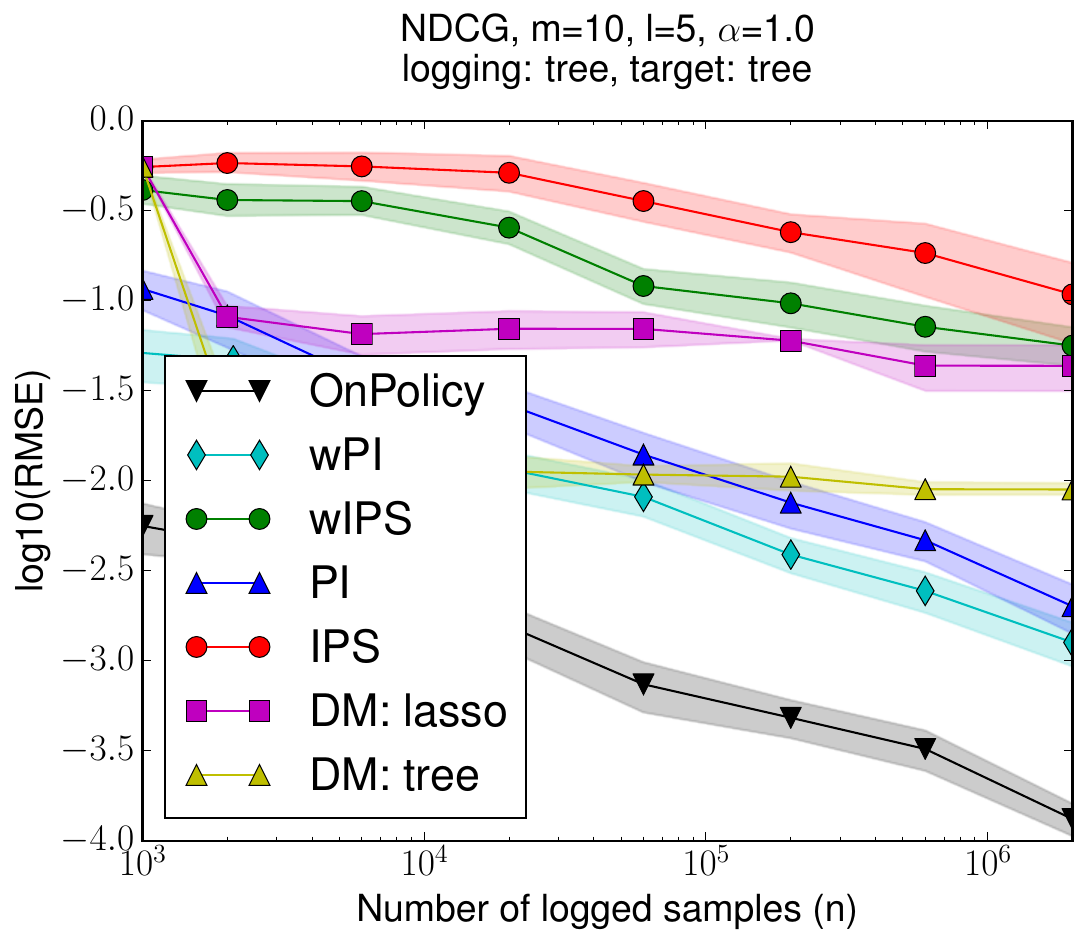}&
\includegraphics[width=0.5\textwidth]{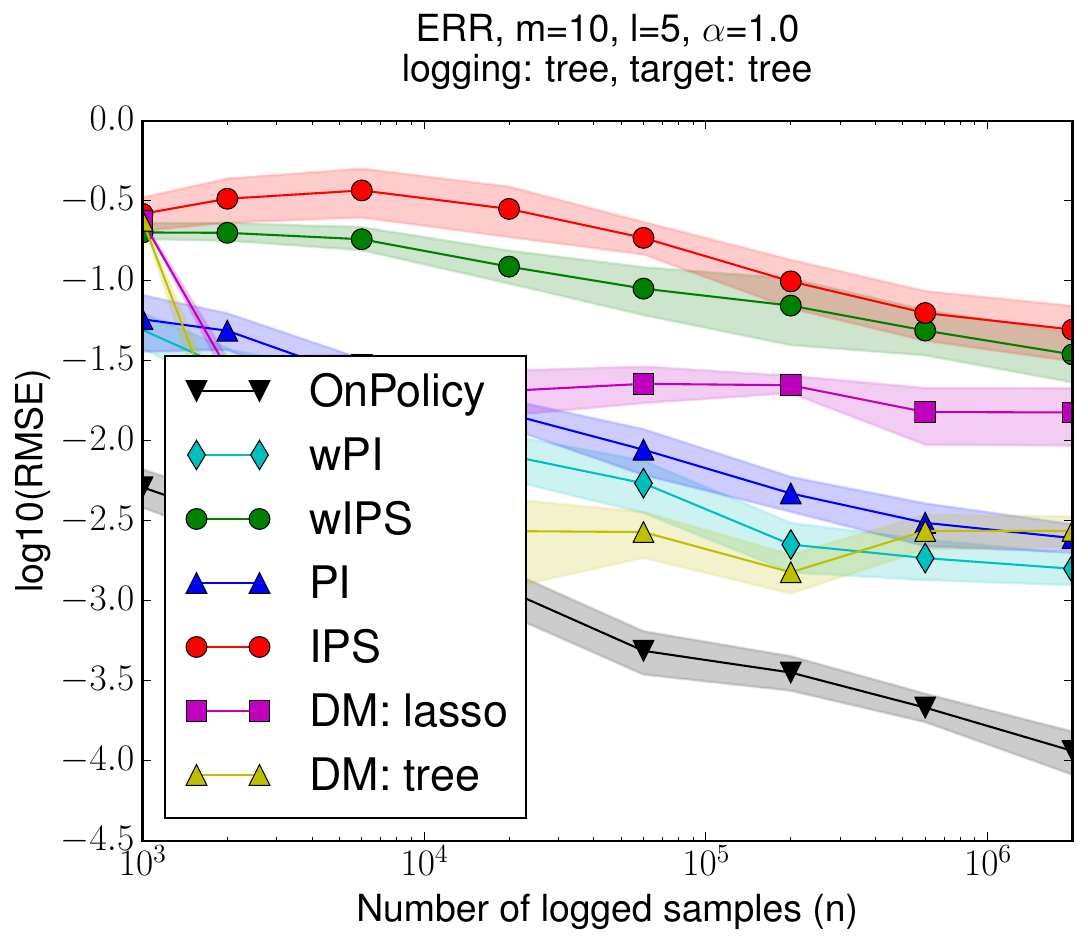}
\end{tabular}
\caption{RMSE of value estimators for an increasing logged dataset
  under a moderately peaked logging policy ($\treeTitle, \alpha=1.0$)
  with slate space $(10,5)$. Target is $\lassoBody$ (top panel) and
  $\treeBody$ (bottom panel). Metrics are NDCG (left) and ERR
  (right).}
\end{figure*}

\begin{figure*}
\begin{tabular}{cc}
\includegraphics[width=0.5\textwidth]{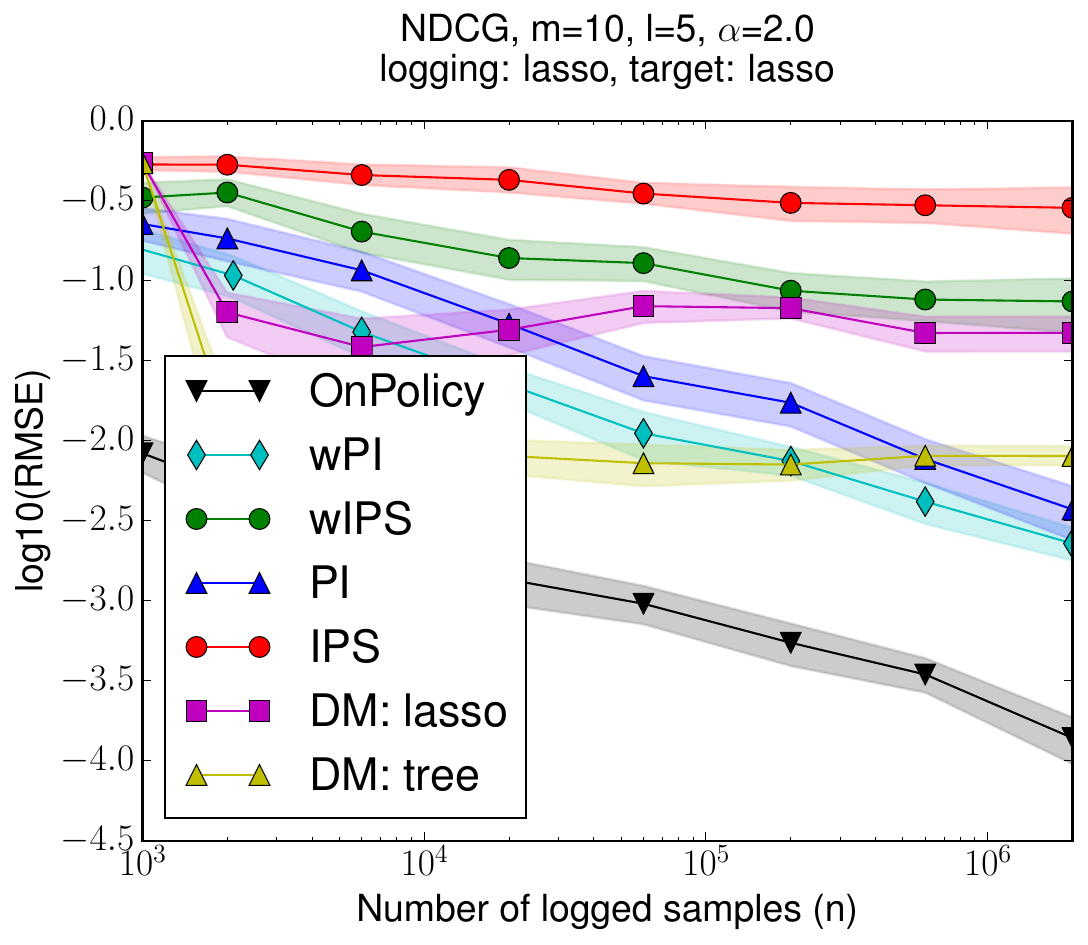}&
\includegraphics[width=0.5\textwidth]{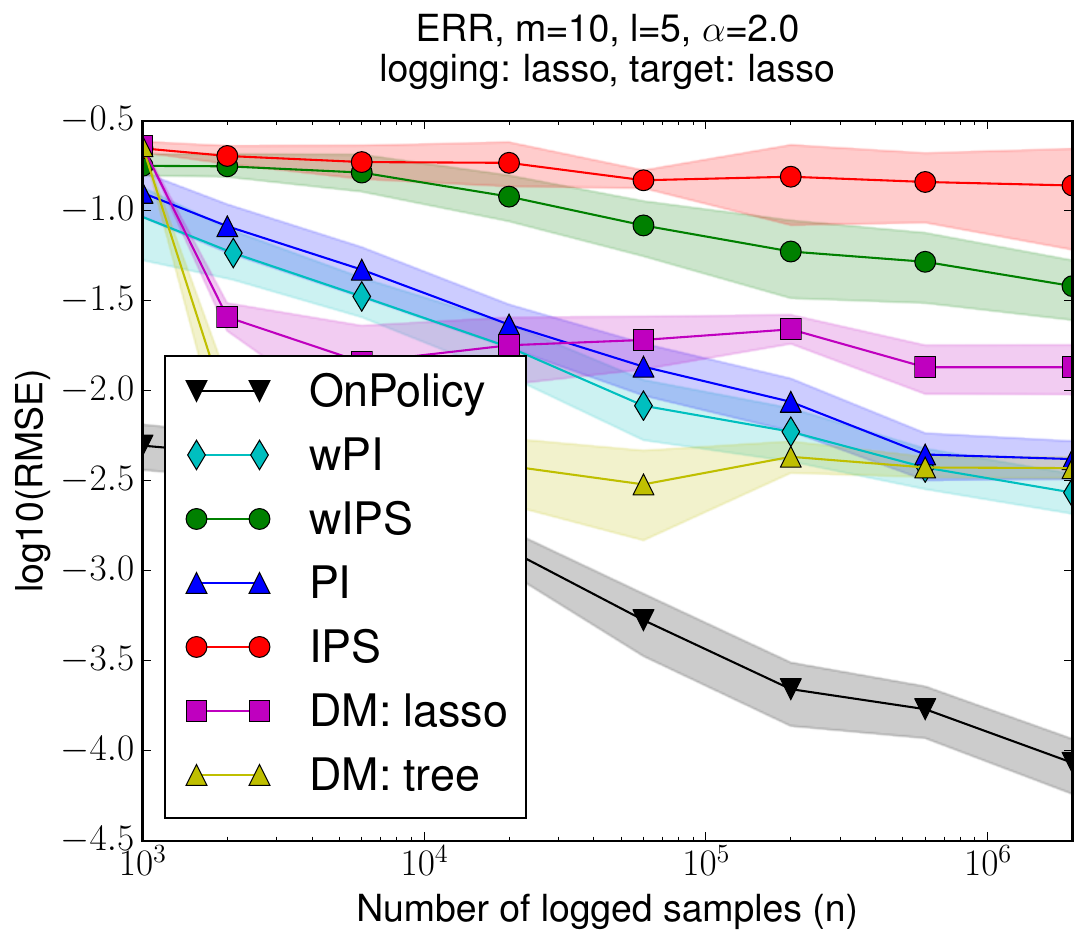}\\
\includegraphics[width=0.5\textwidth]{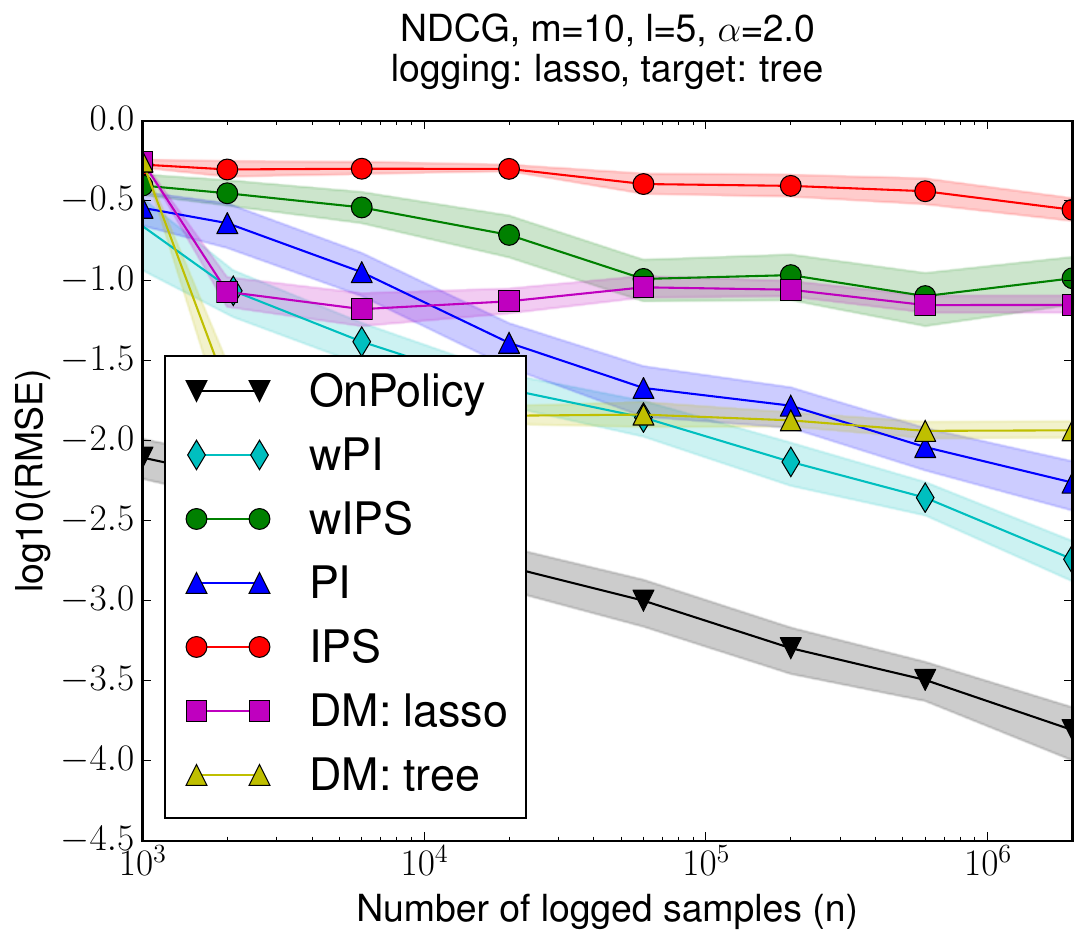}&
\includegraphics[width=0.5\textwidth]{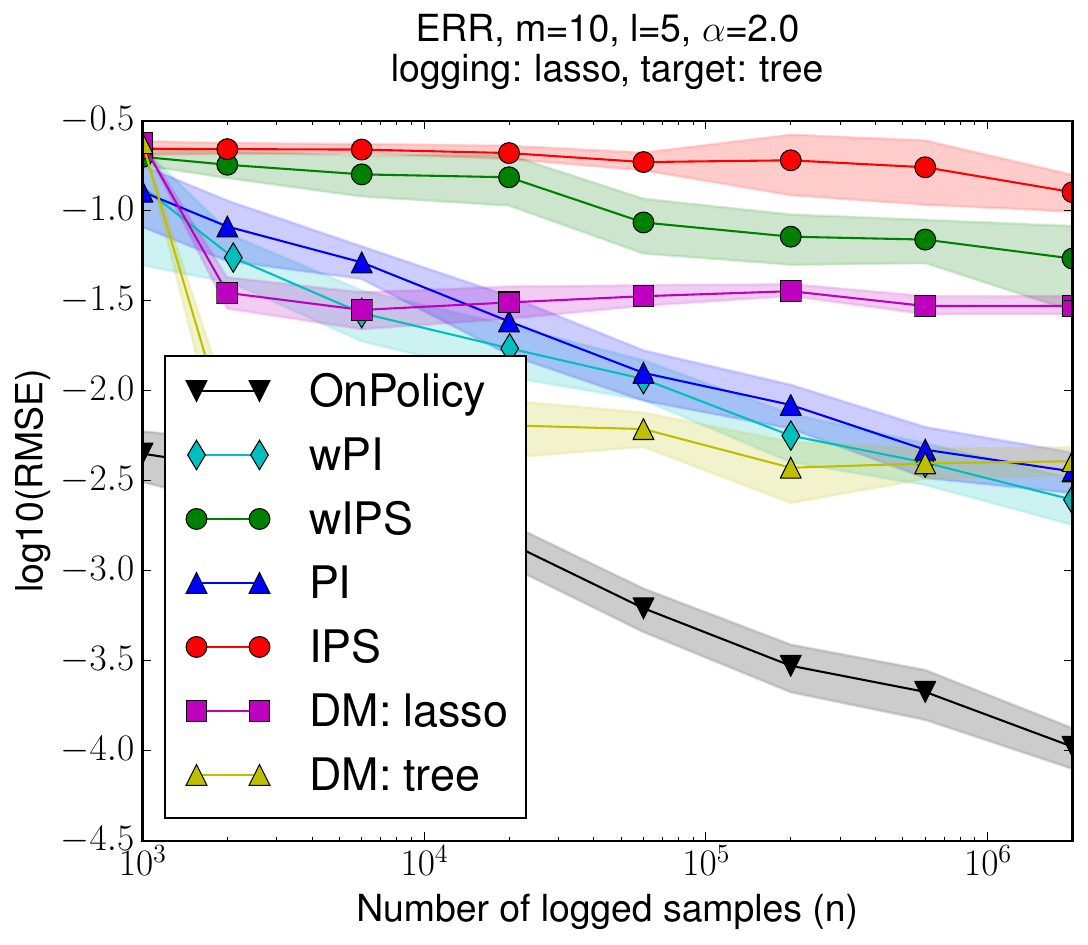}
\end{tabular}
\caption{RMSE of value estimators for an increasing logged dataset
  under a severely peaked logging policy ($\lassoTitle, \alpha=2.0$)
  with slate space $(10,5)$. Target is $\lassoBody$ (top panel) and
  $\treeBody$ (bottom panel). Metrics are NDCG (left) and ERR
  (right).}
\end{figure*}

\begin{figure*}
\begin{tabular}{cc}
\includegraphics[width=0.5\textwidth]{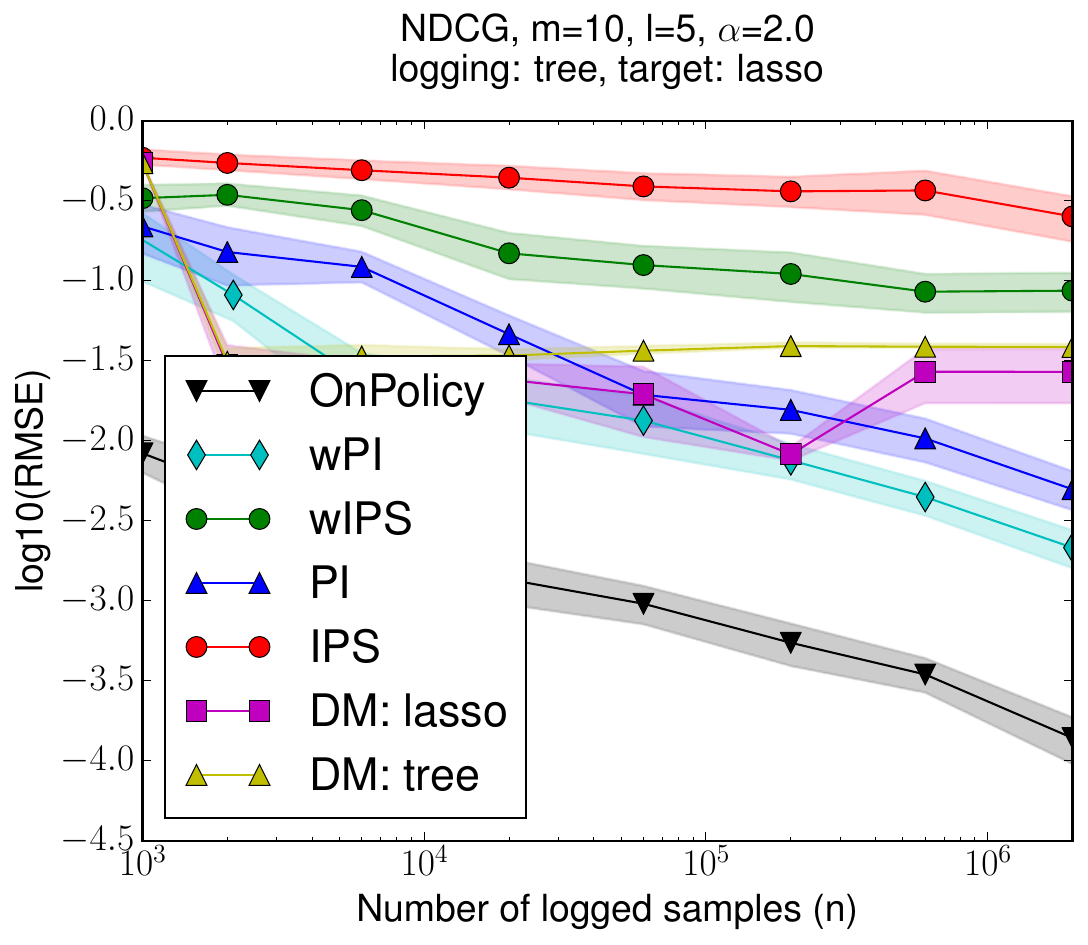}&
\includegraphics[width=0.5\textwidth]{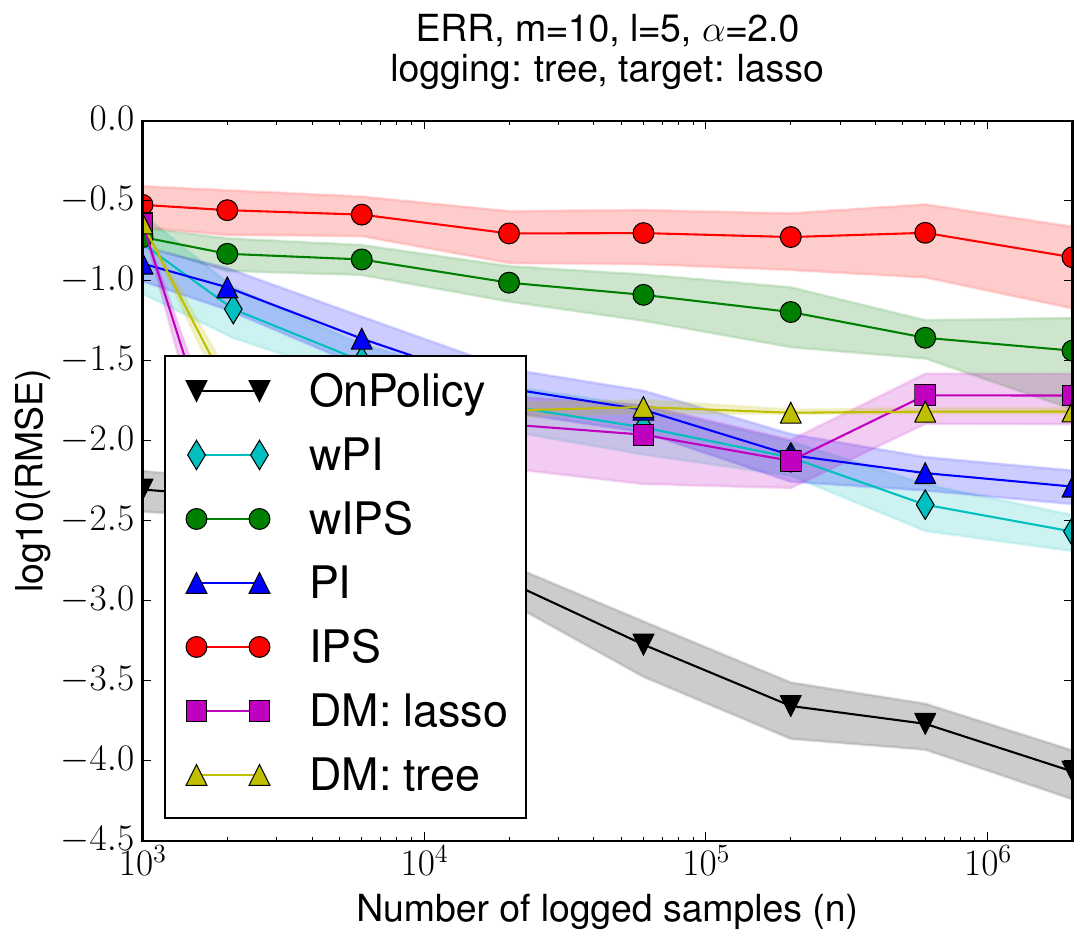}\\
\includegraphics[width=0.5\textwidth]{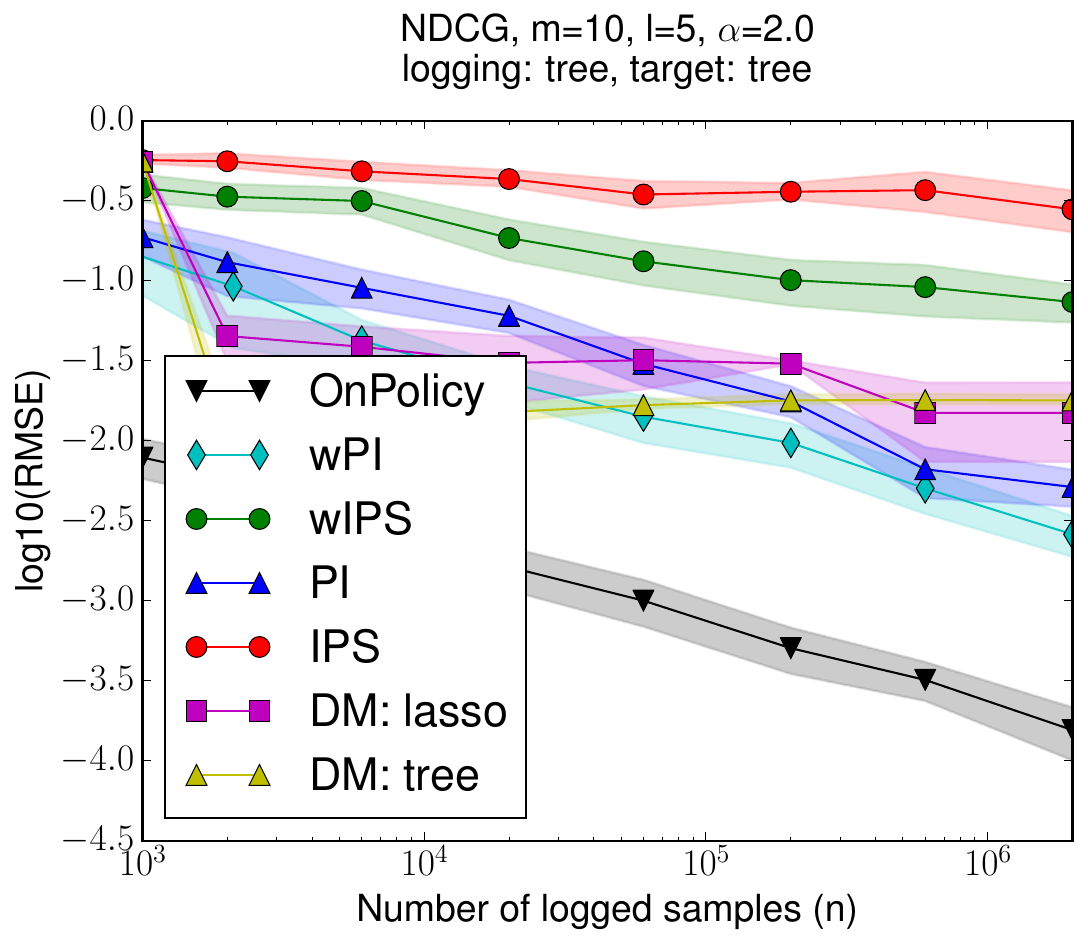}&
\includegraphics[width=0.5\textwidth]{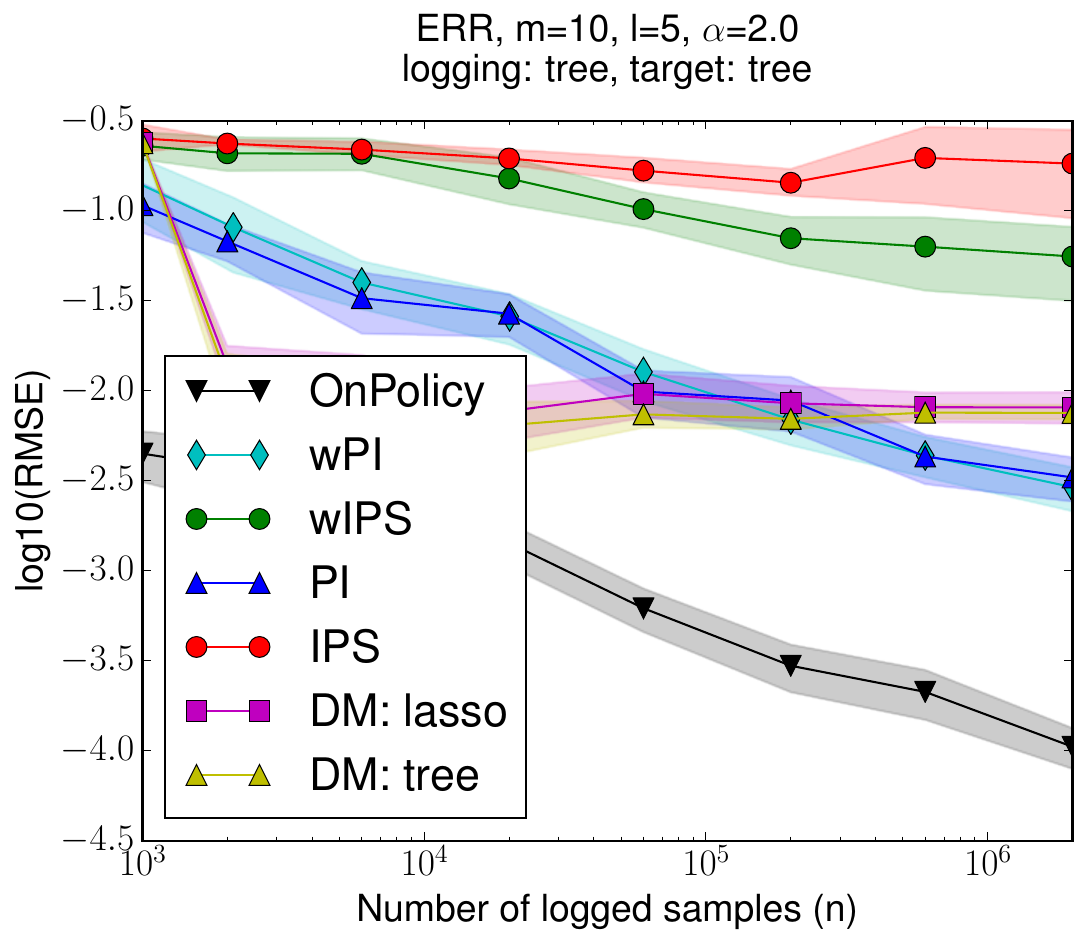}
\end{tabular}
\caption{RMSE of value estimators for an increasing logged dataset
  under a severely peaked logging policy ($\treeTitle, \alpha=2.0$)
  with slate space $(10,5)$. Target is $\lassoBody$ (top panel) and
  $\treeBody$ (bottom panel). Metrics are NDCG (left) and ERR
  (right).}
\end{figure*}

\begin{figure*}
\begin{tabular}{cc}
\includegraphics[width=0.5\textwidth]{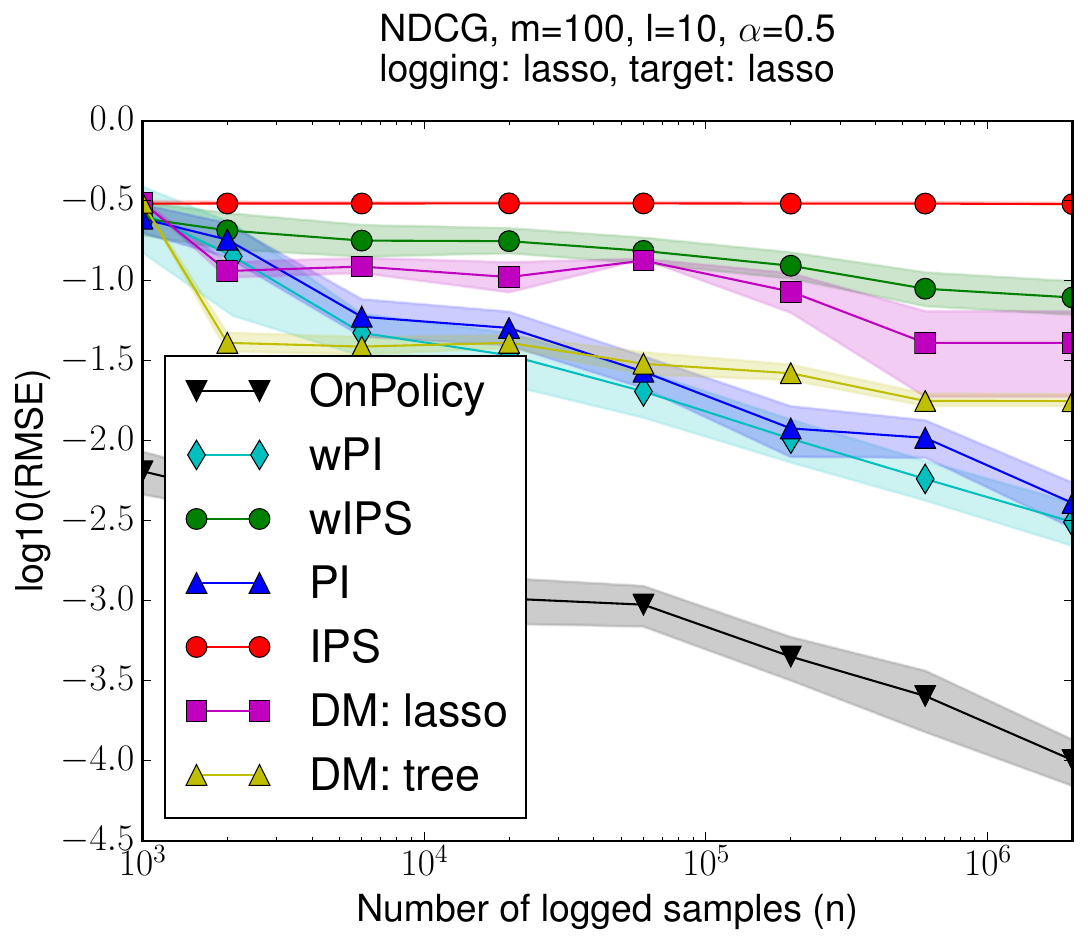}&
\includegraphics[width=0.5\textwidth]{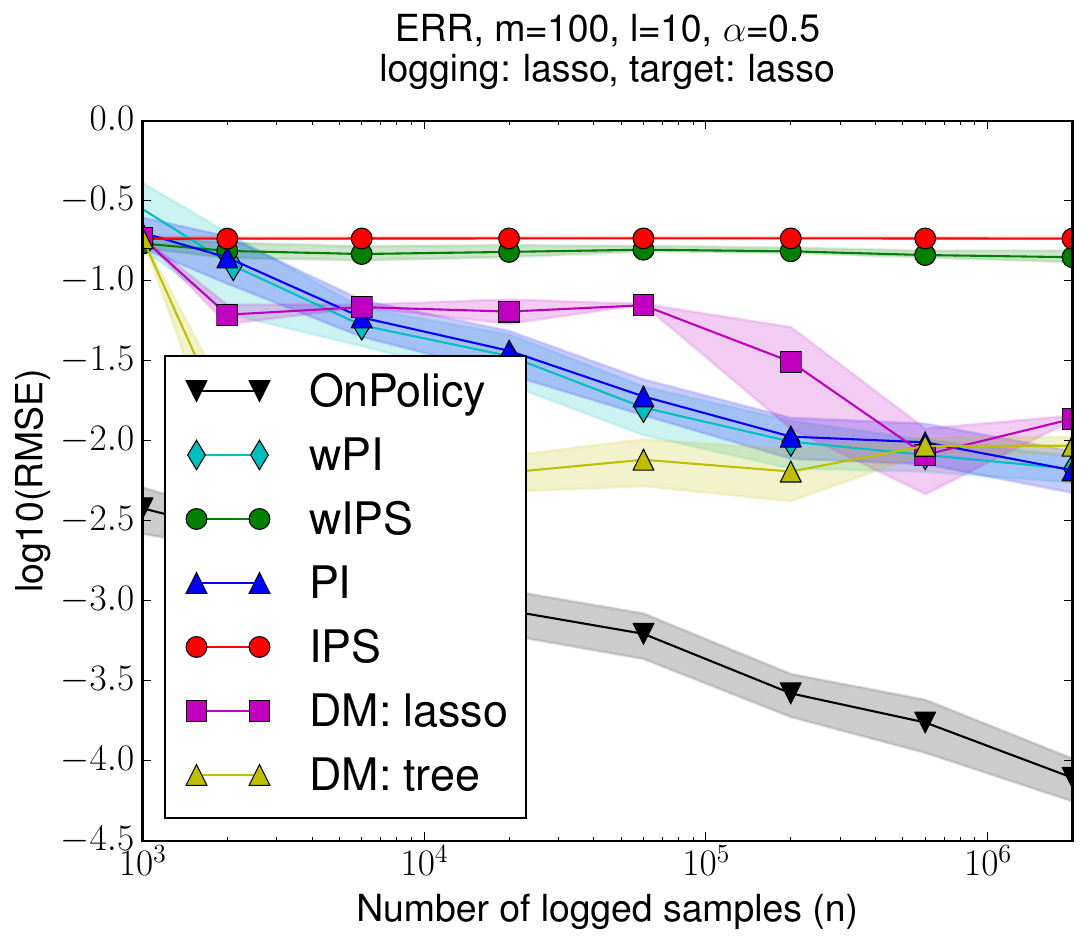}\\
\includegraphics[width=0.5\textwidth]{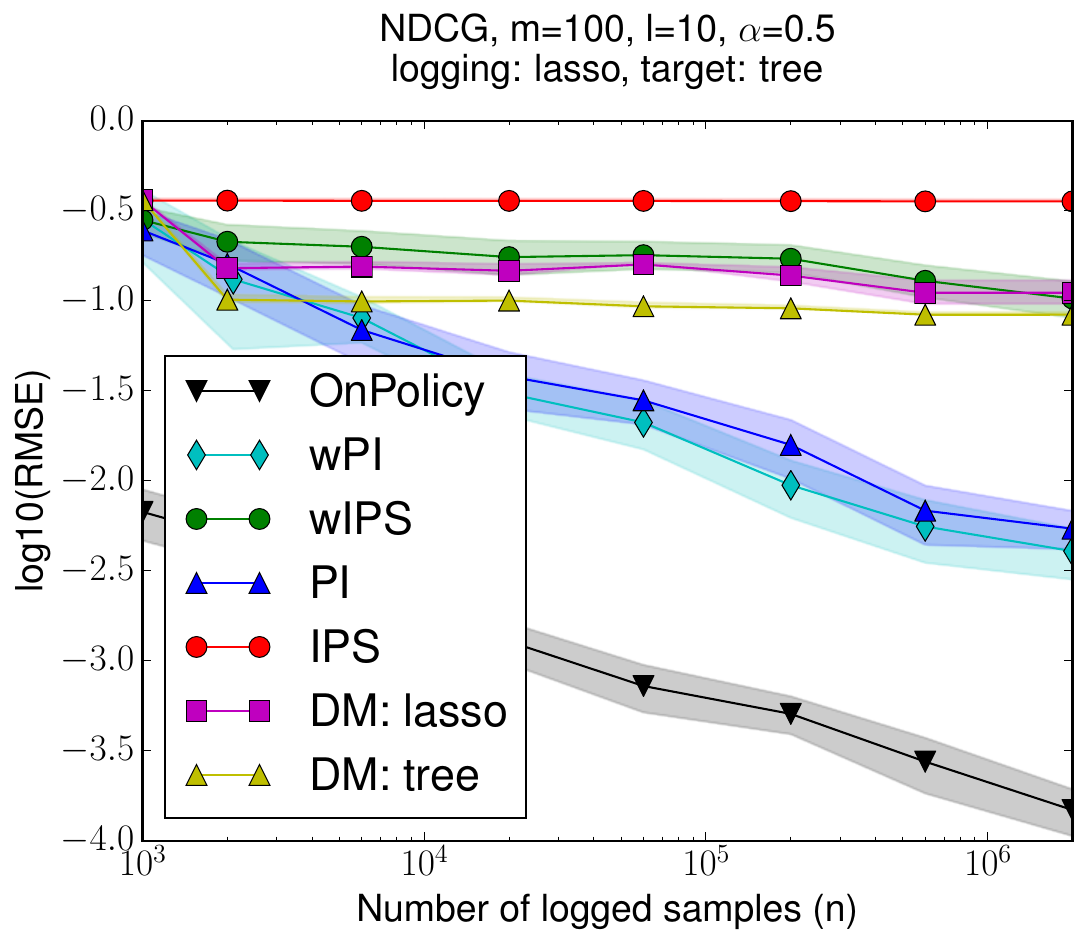}&
\includegraphics[width=0.5\textwidth]{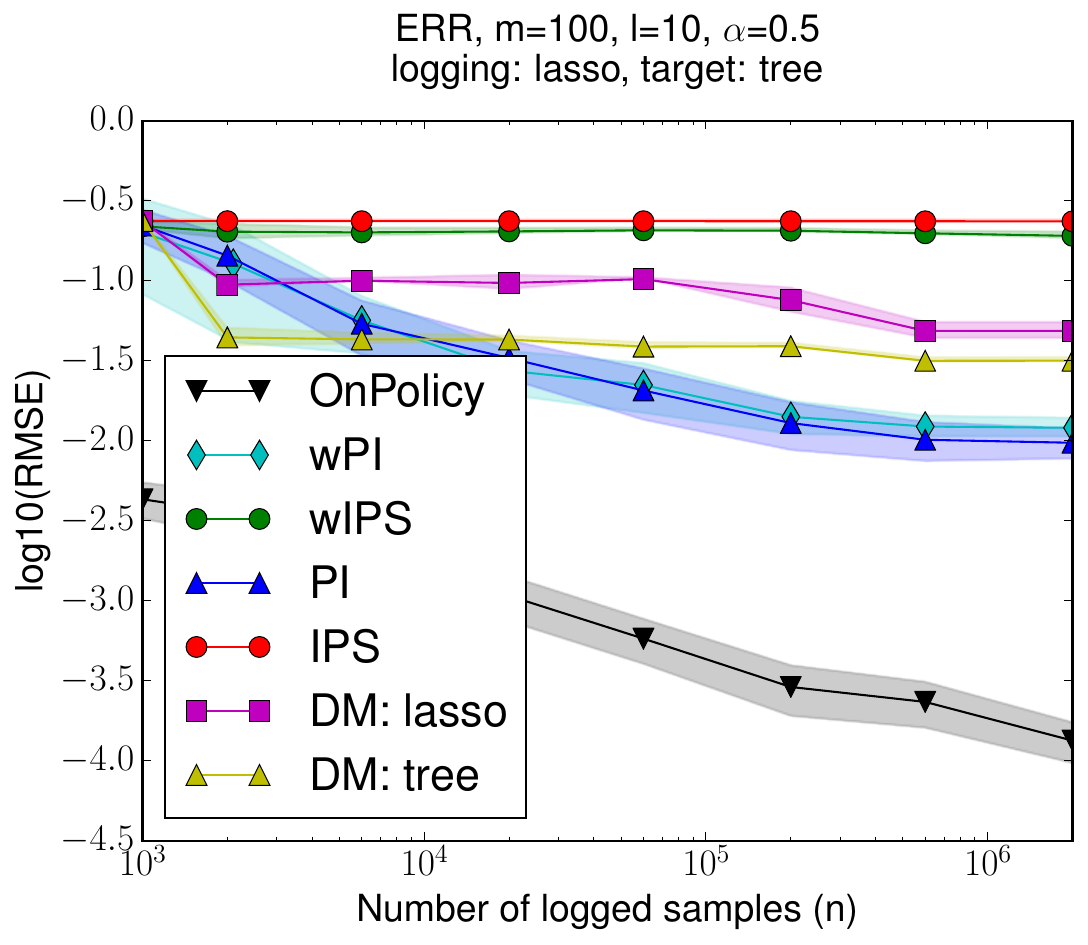}
\end{tabular}
\caption{RMSE of value estimators for an increasing logged dataset
  under a moderately peaked logging policy ($\lassoTitle, \alpha=0.5$)
  with slate space $(100,10)$. Target is $\lassoBody$ (top panel) and
  $\treeBody$ (bottom panel). Metrics are NDCG (left) and ERR
  (right).}
\end{figure*}

\begin{figure*}
\begin{tabular}{cc}
\includegraphics[width=0.5\textwidth]{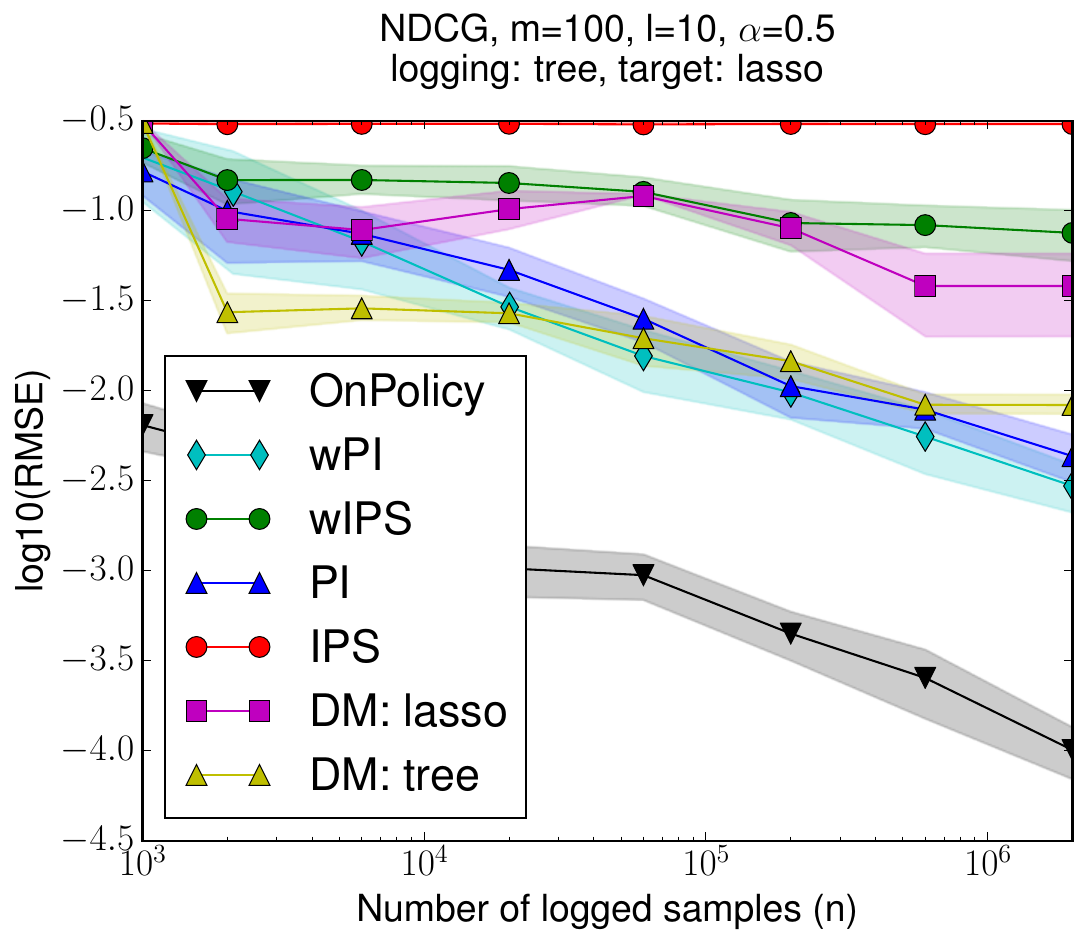}&
\includegraphics[width=0.5\textwidth]{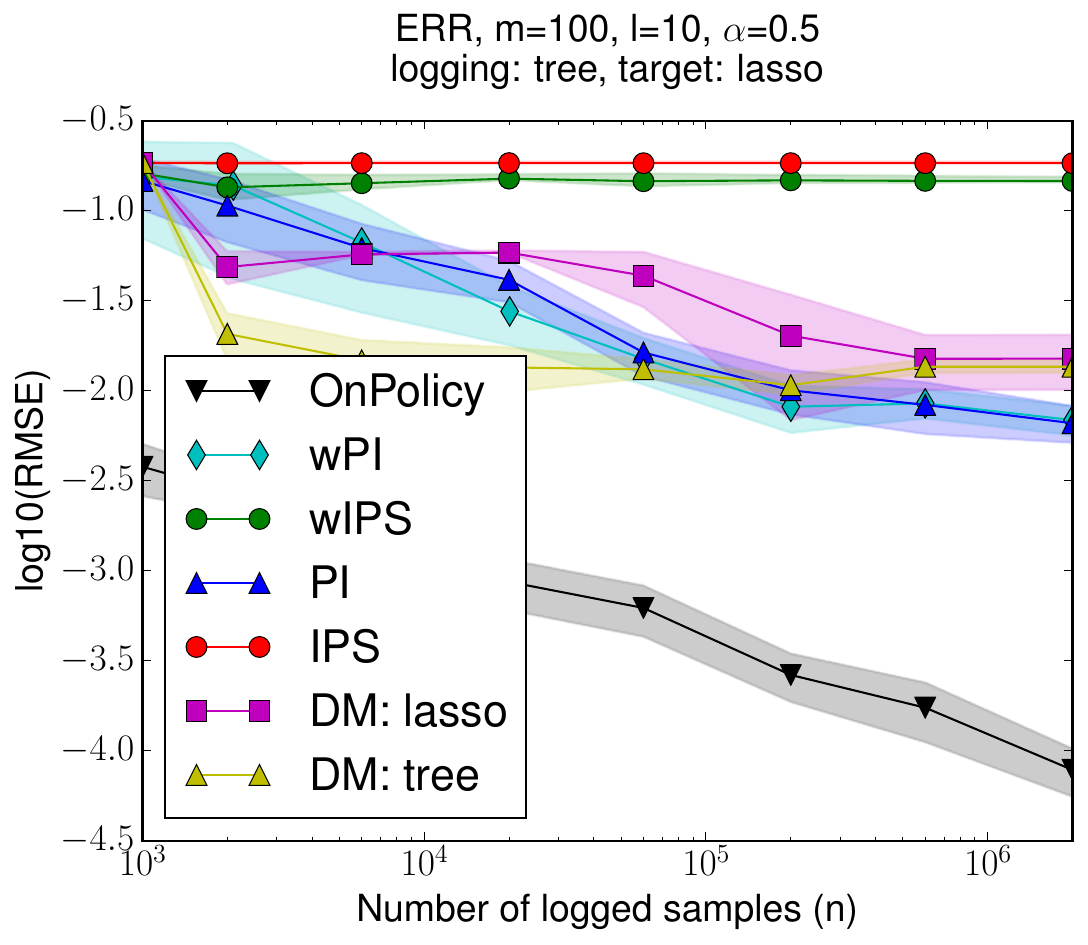}\\
\includegraphics[width=0.5\textwidth]{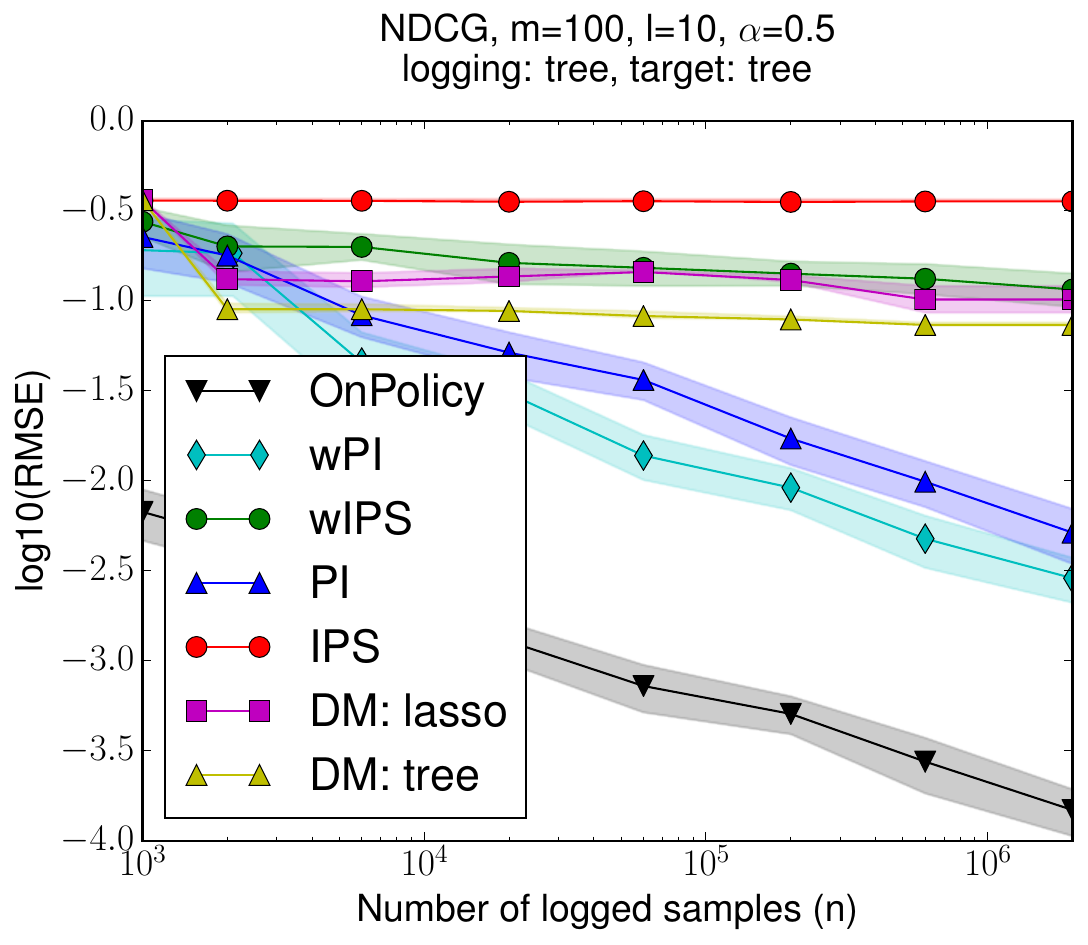}&
\includegraphics[width=0.5\textwidth]{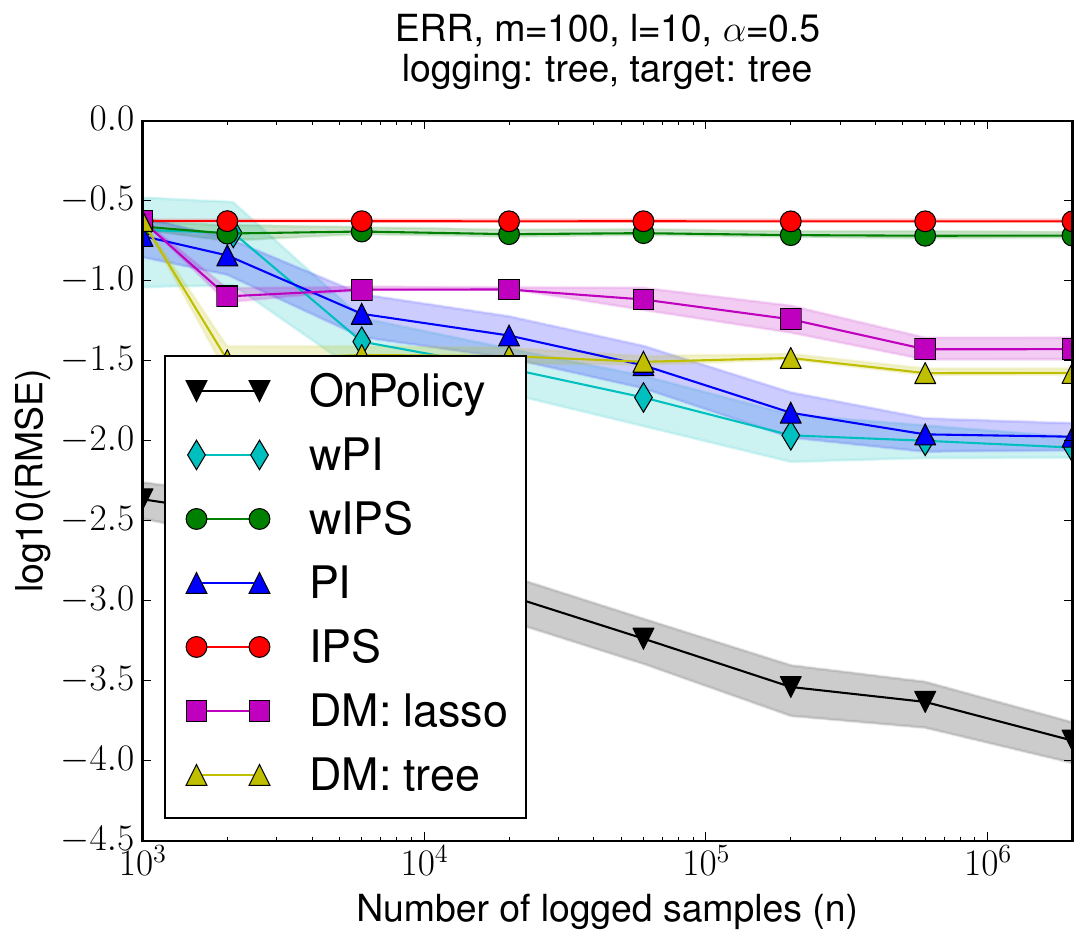}
\end{tabular}
\caption{RMSE of value estimators for an increasing logged dataset
  under a moderately peaked logging policy ($\treeTitle, \alpha=0.5$)
  with slate space $(100,10)$. Target is $\lassoBody$ (top panel) and
  $\treeBody$ (bottom panel). Metrics are NDCG (left) and ERR
  (right).}
\end{figure*}

\begin{figure*}
\begin{tabular}{cc}
\includegraphics[width=0.5\textwidth]{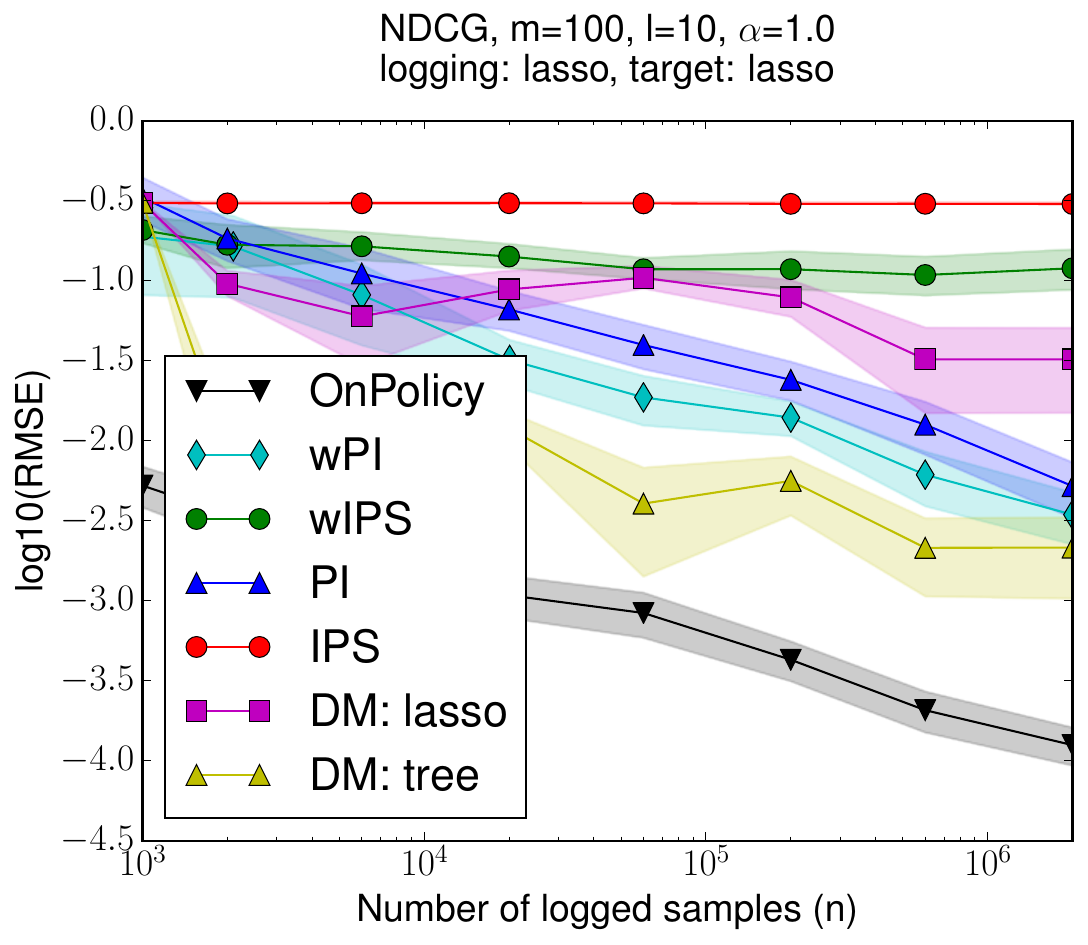}&
\includegraphics[width=0.5\textwidth]{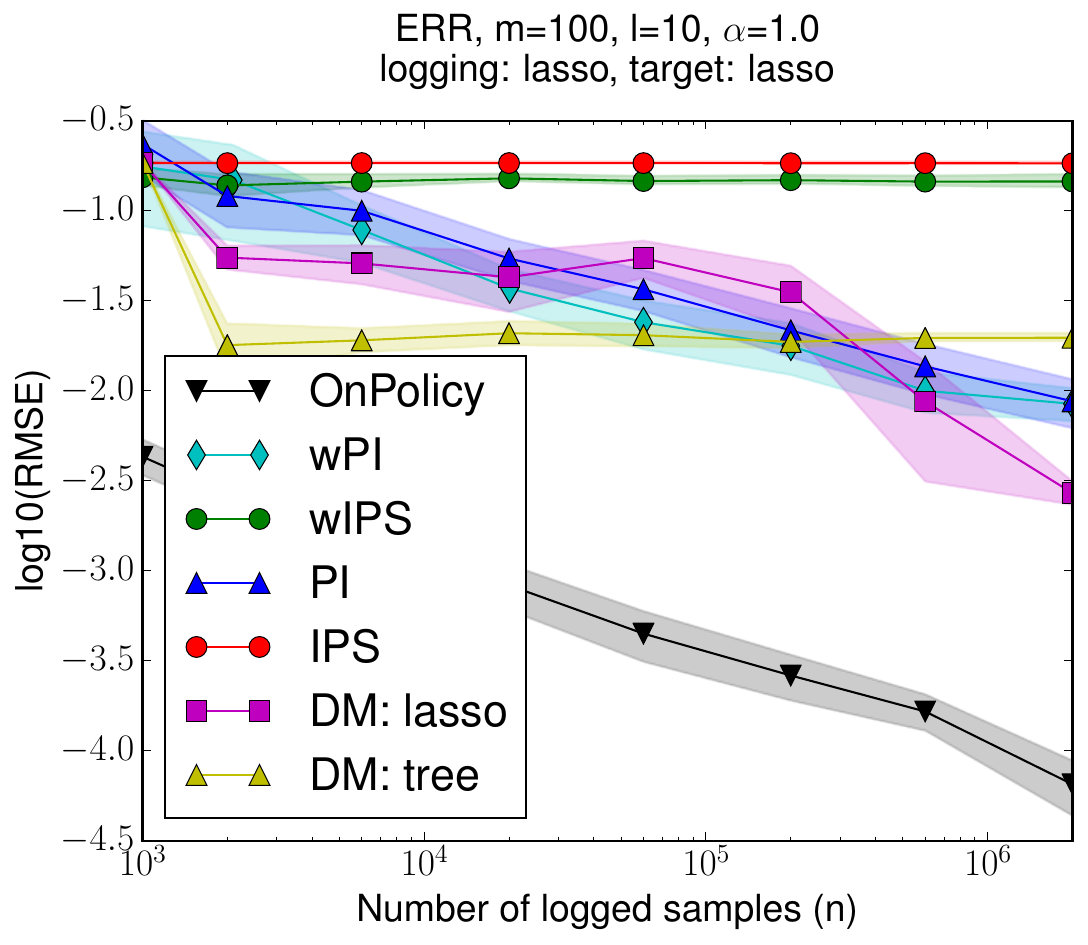}\\
\includegraphics[width=0.5\textwidth]{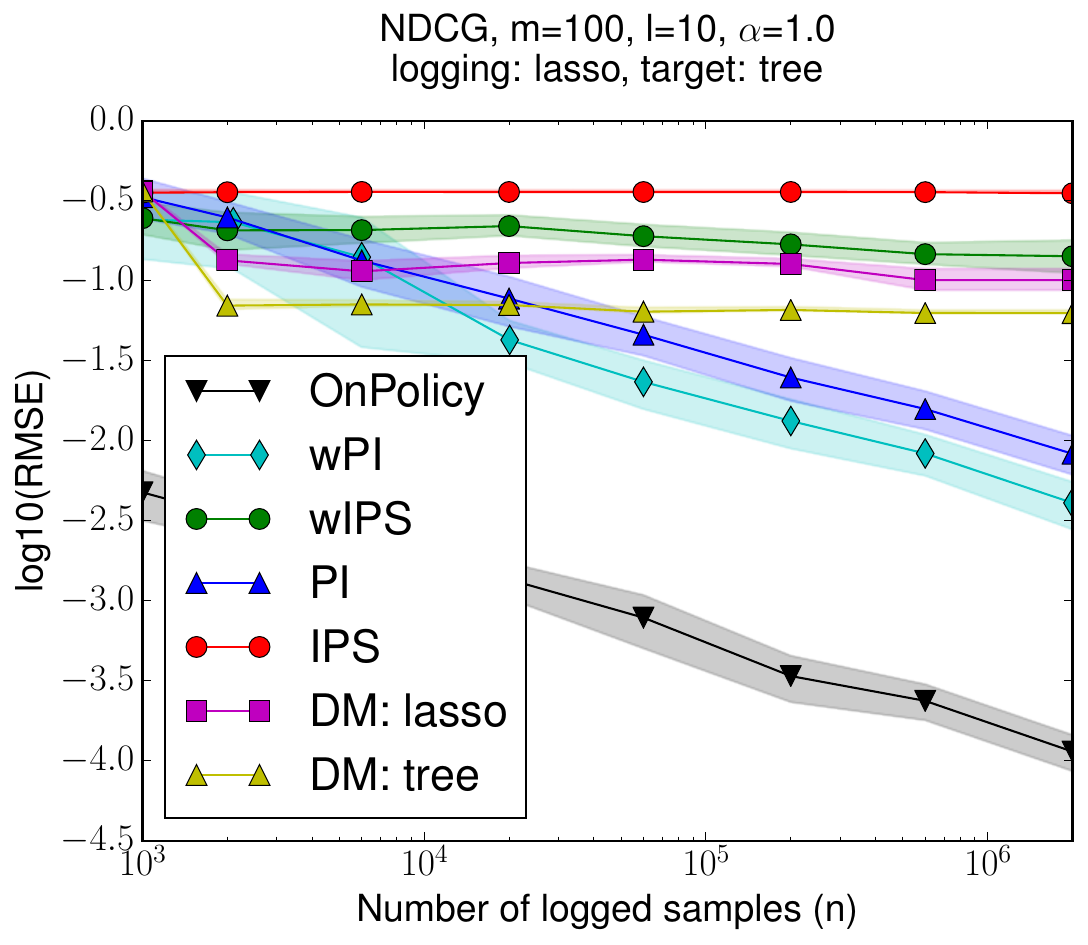}&
\includegraphics[width=0.5\textwidth]{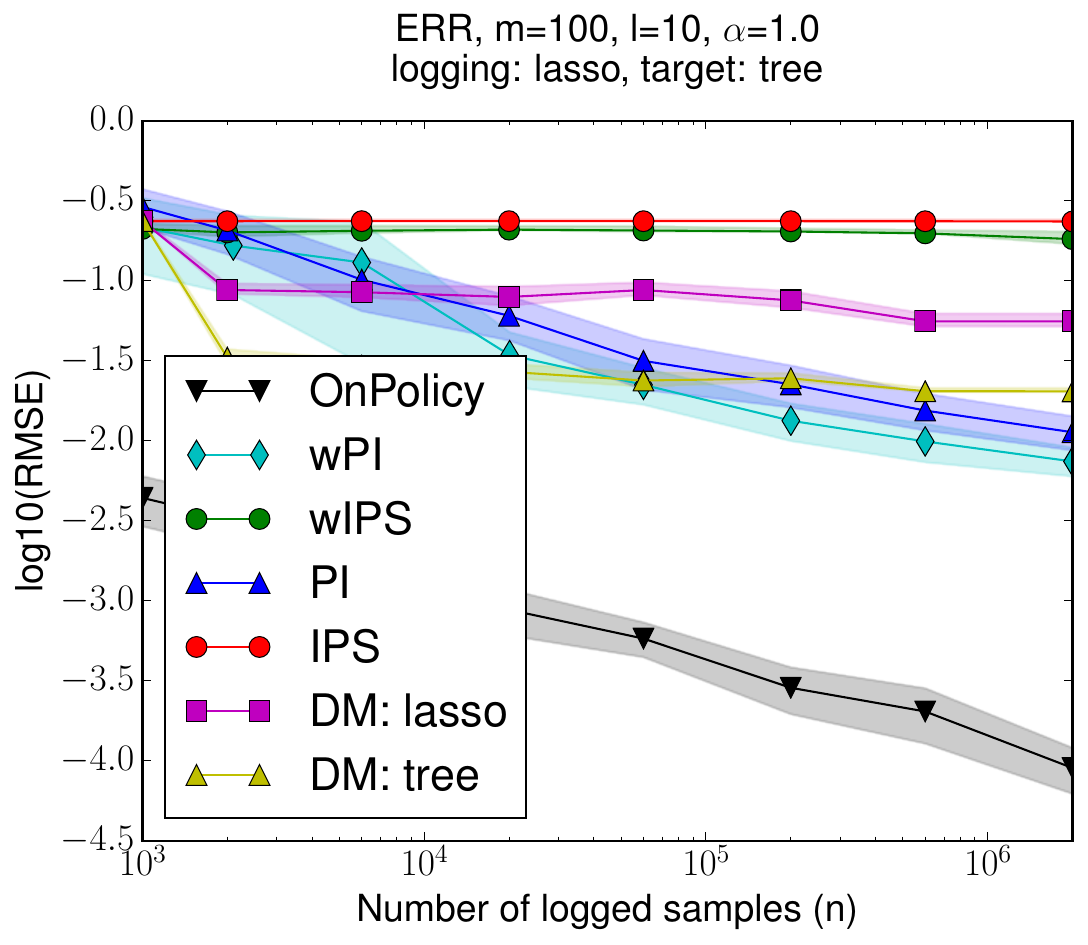}
\end{tabular}
\caption{RMSE of value estimators for an increasing logged dataset
  under a severely peaked logging policy ($\lassoTitle, \alpha=1.0$)
  with slate space $(100,10)$. Target is $\lassoBody$ (top panel) and
  $\treeBody$ (bottom panel). Metrics are NDCG (left) and ERR
  (right).}
\end{figure*}

\clearpage
\begin{figure*}
\begin{tabular}{cc}
\includegraphics[width=0.5\textwidth]{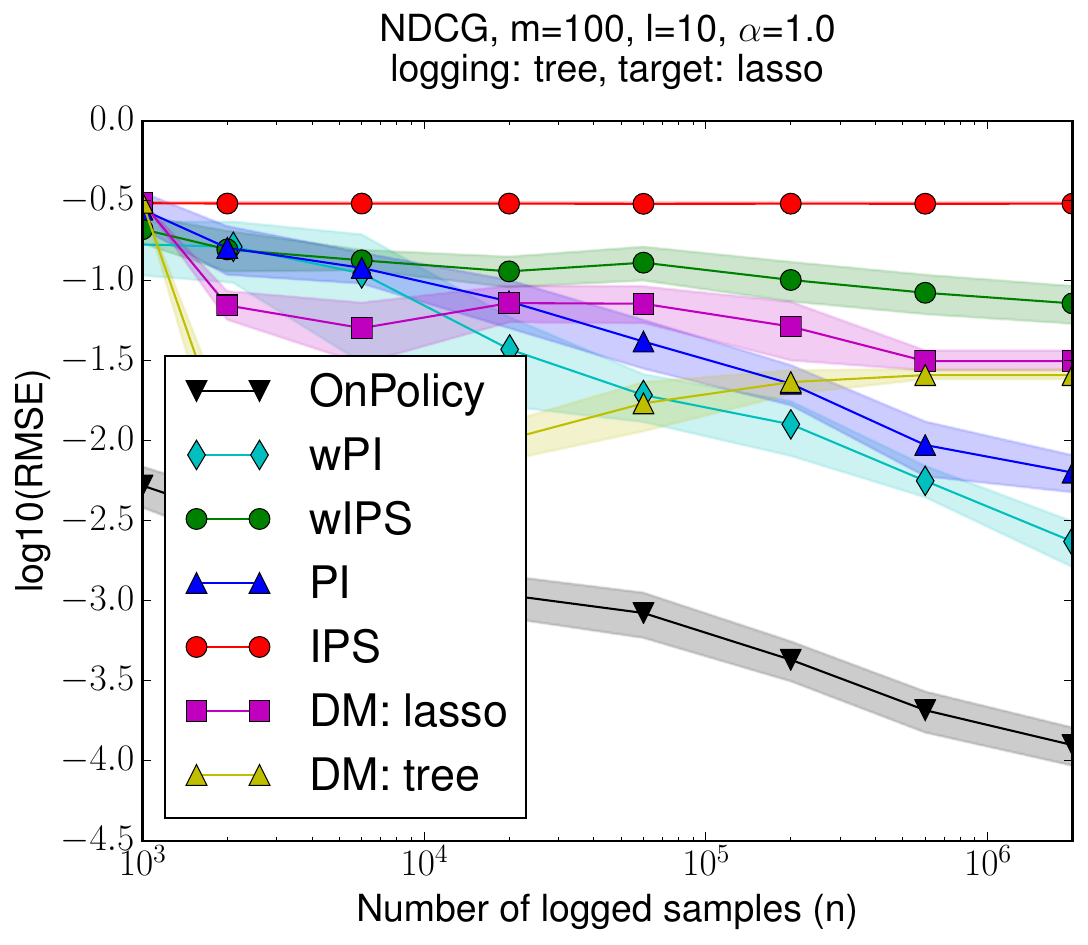}&
\includegraphics[width=0.5\textwidth]{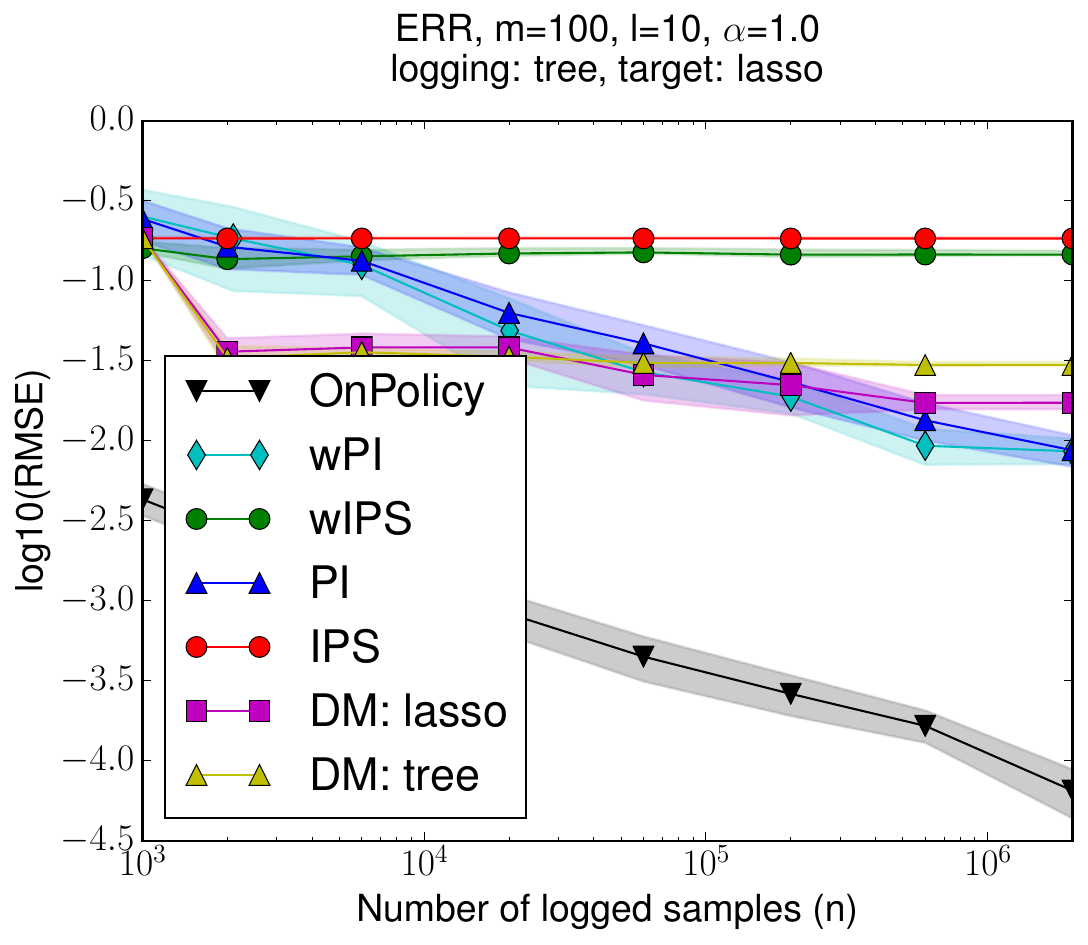}\\
\includegraphics[width=0.5\textwidth]{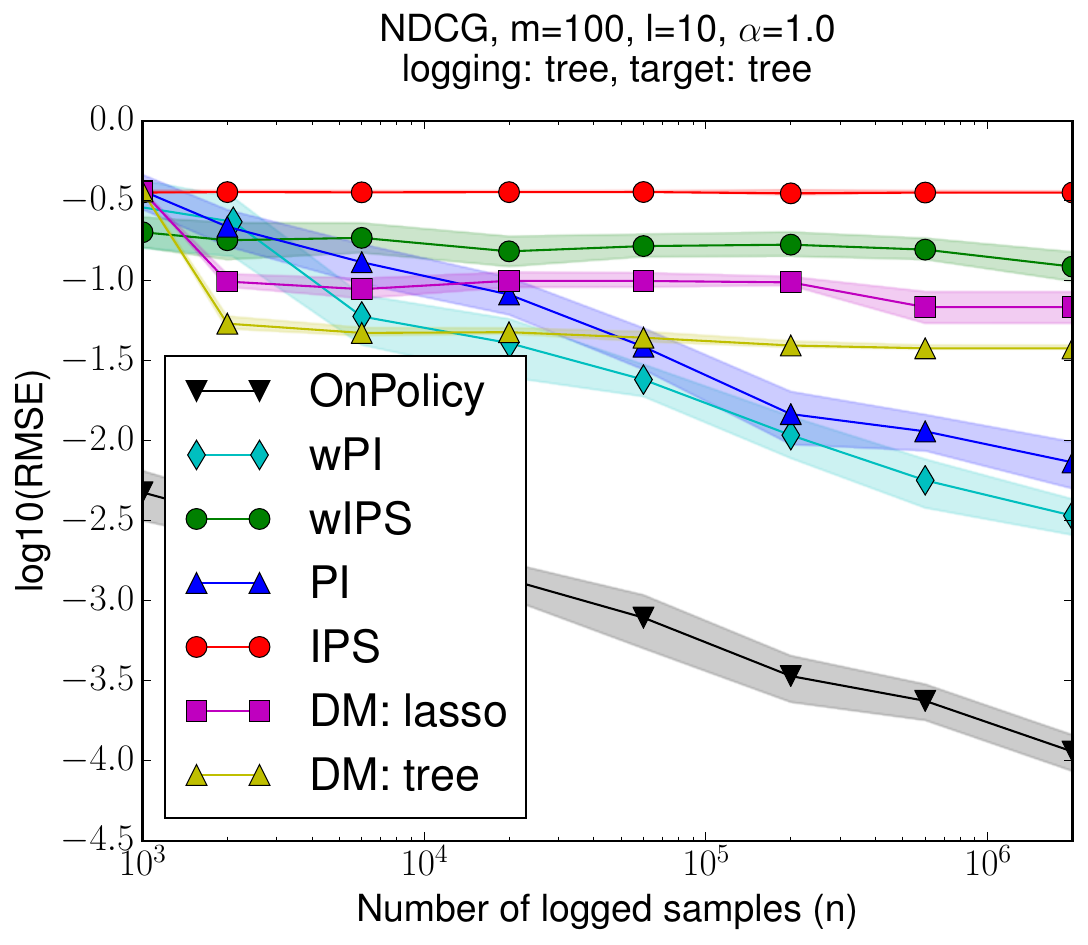}&
\includegraphics[width=0.5\textwidth]{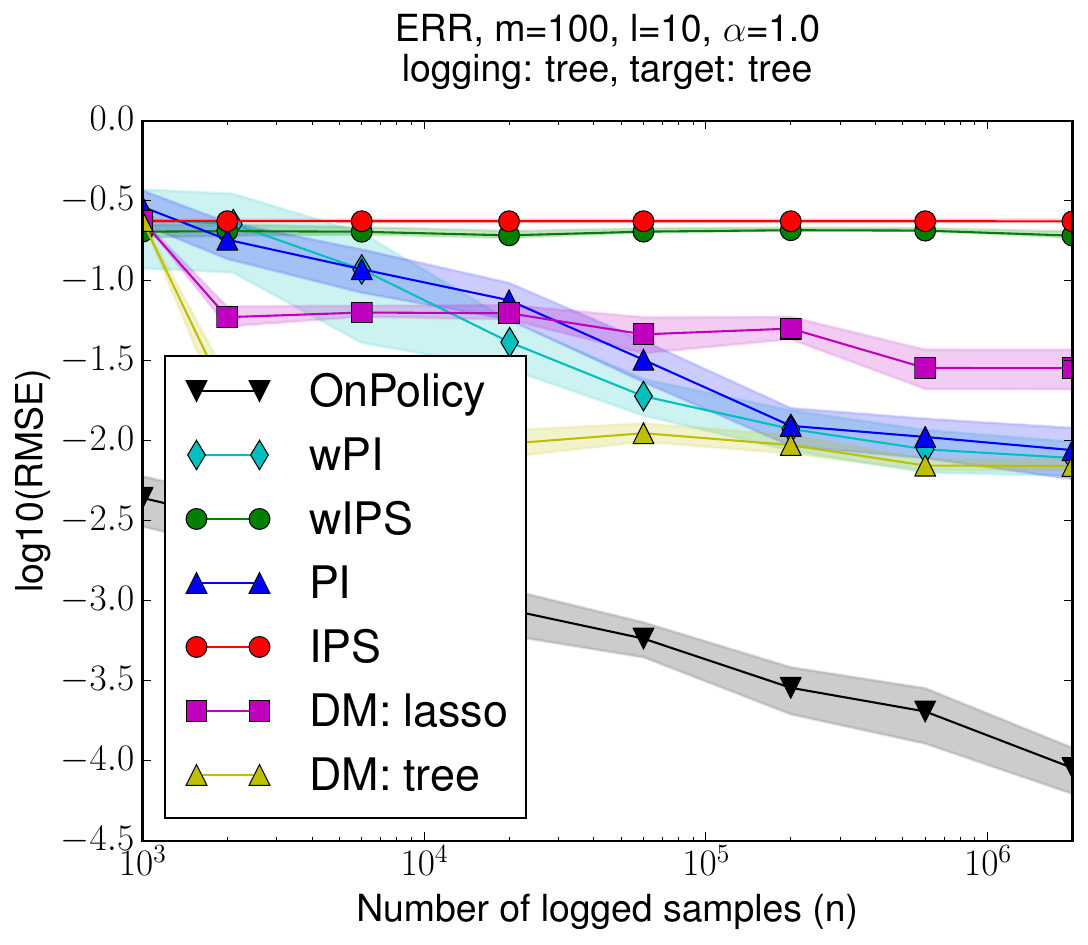}
\end{tabular}
\caption{RMSE of value estimators for an increasing logged dataset
  under a severely peaked logging policy ($\treeTitle, \alpha=1.0$)
  with slate space $(100,10)$. Target is $\lassoBody$ (top panel) and
  $\treeBody$ (bottom panel). Metrics are NDCG (left) and ERR
  (right).}
\label{fig:plots:last}
\end{figure*}

\section{Off-policy optimization}
\label{sec:opt_family}

For off-policy optimization experiments, we compare two methods -- SUP and PI-OPT on the MSLR-WEB10K dataset.
Both SUP and PI-OPT uses all the features $\vf(x,a)$ for query-document $(x,a)$ pairs in the training fold.
SUP uses regression targets as outlined in Section~\ref{sec:expt_opt}.
We also experimented with regression to raw relevance judgments, this is denoted SUP-\rel.
For PI-OPT, each query-document-\emph{position} triplet produces a regression example $(x,a,j)$ with a concatenated feature vector $\vf(x,a,j) \coloneqq [ \vf(x,a); \one_{j} ]$
where $\one_{j}$ is a $\ell$-dimensional one-hot encoding of position $j$.
Every logged sample with query $x$ yields an estimate $\hat{\phi}_j(x,a)$ for every candidate document $a$ and position $j$.
These are our natural regression targets.
There is one further optimization we can do that is computationally tractable when the set of queries $\{ x \}$ is finite.
By averaging all estimated $\hat{\phi}_j(x,a)$ for a particular query, we can create a lower-variance target for regression that remains an unbiased estimate of $\phi_j(x,a)$.

Both SUP and PI-OPT employ gradient boosted regression trees with 
$n=1000$ tree-ensembles and up to $70$ leaves in each tree,
to predict their corresponding regression targets.
With a trained model, SUP constructs slates in a straightforward way: For any input query $x$,
we score all candidate documents $a \in A(x)$ using the trained model $\vf(x,a) \mapsto score(a)$, and sort the scores in descending order.
Rankings are constructed using the top-$\ell$ scoring candidates in order.
For PI-OPT, we score every document-position pair $(a,j) \in A(x) \times \{1 \cdots \ell\}$, $\vf(x,a,j) \mapsto score(a,j)$.
Now we greedily pick the highest scored pair $(a,j)$ and insert document $a$ in slot $j$ of the slate.
After eliminating all invalid pairs, $(\ast, j)$ and $(a, \ast)$, we repeat this greedy procedure until all positions in the slate are filled.
This gives us a computationally efficient, albeit approximate, maximizer of $\argmax_\vs \sum_{j=1}^l score(s_j, j)$.
%
%
\section{Overlap between base-rankers}
\label{app:overlaps}
We use four different base-rankers $\lassoTitle,\lassoBody,\treeTitle,\treeBody$
in our semi-synthetic experiments to instantiate logging and target policies.
In Table~\ref{tbl:app:overlaps}, we report how similar the top-$\ell$ rankings
($\ell=10$) retrieved by these rankers are.
We report two metrics for every pair of rankers: the average fraction of documents retrieved in common by both rankers (and its standard deviation),
and the Kendall's tau computed over the union of documents retrieved by either ranker (documents retrieved by one ranker but not the other are assumed to be ranked at $\ell+1$ in the other ranking).
\begin{table}[ht]
\begin{center}\begin{small}
\caption{Reporting the difference between the base-rankers $\lassoTitle,\lassoBody,\treeTitle,\treeBody$ as measured by average overlap of retrieved document sets and Kendall's tau.}
\label{tbl:app:overlaps}
\begin{tabular}{|l||c|c||c|c||}
\hline
Pair & \multicolumn{2}{|c||}{Overlap} & \multicolumn{2}{|c||}{Kendall's $\tau$} \\
& Avg. & Std.Dev. & Avg. & Std.Dev.\\
\hline
$(\lassoTitle, \treeTitle)$ & 0.523 & 0.216 & -0.041 & 0.307 \\
$(\lassoBody, \treeBody)$ & 0.426 & 0.236 & -0.221 & 0.322 \\
$(\treeTitle, \treeBody)$ & 0.270 & 0.198 & -0.394 & 0.236 \\
$(\treeTitle, \lassoBody)$ & 0.274 & 0.203 & -0.405 & 0.239 \\
$(\lassoTitle, \treeBody)$ & 0.250 & 0.199 & -0.421 & 0.231 \\
$(\lassoTitle, \lassoBody)$ & 0.262 & 0.202 & -0.415 & 0.233 \\
\hline
\end{tabular}
\end{small}
\end{center}
\end{table}

\end{appendices}

\end{document}